\newtheorem{theorem}{Theorem}[section]
\newtheorem{proposition}[theorem]{Proposition}
\newtheorem{lemma}[theorem]{Lemma}
\newtheorem{corollary}[theorem]{Corollary}
\newtheorem{remark}[theorem]{Remark}
\definecolor{myblue}{HTML}{131D85}
\definecolor{deepred}{RGB}{235,0,20}
\newcommand*\diff{\mathop{}\!\mathrm{d}}
\newcommand{\cF}{\mathcal{F}}
\newcommand{\cN}{\mathcal{N}}
\newcommand{\cR}{\mathcal{R}}
\newcommand{\bE}{\mathbb{E}}
\newcommand{\bN}{\mathbb{N}}
\newcommand{\bP}{\mathbb{P}}
\newcommand{\bR}{\mathbb{R}}
\newcommand{\bZ}{\mathbb{Z}}
\newcommand{\sB}{\mathscr{B}}
\newcommand{\sS}{\mathscr{S}}
\newcommand{\eps}{\varepsilon}
\newcommand{\diam}{\operatorname{diam}}
\newcommand{\relu}{\operatorname{ReLU}}
\newcommand{\Arg}{\operatorname{Arg}}
\newcommand{\Var}{\operatorname{Var}}
\newcommand{\norm}[1]{\left\vert#1\right\vert}
\newcommand{\Norm}[1]{\left\Vert#1\right\Vert}
\newcommand{\BlackBox}{\rule{1.5ex}{1.5ex}}  % end of proof
    \renewenvironment{proof}{\par\noindent{\bf Proof\ }}{\hfill\BlackBox\\[2mm]}
    \newenvironment{proof}{\par\noindent{\bf Proof\ }}{\hfill\BlackBox\\[2mm]}
\title{Sobolev Approximation of Deep ReLU Networks in Log-Barron Space}
\author{Changhoon Song,\thanks{Research Institute of Mathematics, Seoul National University, Seoul, Republic of Korea. Email: \tt{changhoon.song93@snu.ac.kr}}
~Seungchan Ko\thanks{Department of Mathematics, Inha University, Incheon, Republic of Korea. Email: \tt{scko@inha.ac.kr}}
~and ~Youngjoon Hong\thanks{Department of Mathematical Sciences, Seoul National University, Seoul, Republic of Korea. Email: \tt{hongyj@snu.ac.kr}}}
\date{}
\begin{document}

\maketitle

\begin{abstract}
Universal approximation theorems show that neural networks can approximate any continuous function; however, the number of parameters may grow exponentially with the ambient dimension, so these results do not fully explain the practical success of deep models on high-dimensional data. Barron space theory partially addresses this issue by showing that, when the target function has a rapidly decaying Fourier spectrum, a network with $n$ neurons achieves an $O\left(n^{-1/2}\right)$ approximation error. Many existing results, however, are restricted to shallow networks and assume stronger regularity than Sobolev spaces. In this paper, we introduce a log-Barron space that requires a strictly weaker assumption than the classical Barron space and establish approximation bounds for deep narrow networks. For this new function space, we first study the embedding properties and then conduct a statistical analysis via Rademacher complexity. Then we prove that functions in the space can be approximated by deep ReLU networks with explicit depth dependence. We further generalize to the higher-order log-Barron space and obtain an $H^1$ error bound. Our results clarify how deep narrow networks approximate a broader function class than shallow wide networks, through reduced regularity requirements for efficient representation, offering a more precise explanation for the performance of deep architectures and stable use in high-dimensional problems used today.
\end{abstract}

\noindent{\textbf{Keywords:} Neural network approximation, Dimension-independent rates, Deep ReLU Networks, Log-Barron space, Function space embeddings, Rademacher complexity}

\smallskip

\section{Introduction}\label{sec:introduction}
Deep learning has achieved significant advances in a wide range of scientific and engineering challenges, including computer vision, natural language processing, scientific computing, and physical modeling. These successes have stimulated interest in developing theoretical foundations to explain the strong performance of neural networks. However, the theoretical understanding that underpins these empirical successes remains limited, which necessitates further investigation.

One of the fundamental theoretical underpinnings of neural networks is given by universal approximation theorems; see e.g., \cite{pinkus1999approximation} and references therein. This theorem states that neural networks can approximate any continuous or integrable function within arbitrary error, provided sufficiently large architectures, which is featured in two fundamental perspectives: width, the number of hidden nodes in each layer, and depth, the number of layers. In other words, neural networks can reduce approximation error by increasing width with a fixed number of layers, or by increasing depth with a fixed number of nodes in each layer. We call the former a shallow wide network and the other a deep narrow network.

In a shallow wide framework, \cite{cybenko1989approximation} and  \cite{hornik1991approximation} have proved that two-layer neural networks can approximate a target function by increasing the number of nodes in the hidden layer. However, those theorems guarantee only the existence of such networks and do not address their practical feasibility, such as the choice of model architecture or the number of parameters. Later on, these findings were further extended, providing more constructive and quantitative results. These subsequent studies indicate that, in the worst case, the number of required parameters increases exponentially with input dimension \cite{yarotsky2017error, achour2022general}, implying that neural networks still suffer from the curse of dimensionality. 

Despite the curse of dimensionality, neural networks have proven superior performance in high-dimensional data from various domains. This observation naturally raises the question of which function classes neural networks can approximate efficiently, overcoming the curse of dimensionality. Some prior works suggest that a function with low-frequency structure is more predictable, and therefore, functions with rapidly decaying Fourier spectra are, in general, easier to approximate. \cite{barron1994approximation} formalized this concept by defining a class of functions, called Barron space, in terms of the decay rate of their Fourier transform, thereby identifying functions for which two-layer neural networks can provide dimension-independent and efficient approximation guarantees \cite{Lu2021, Lu2023, li2024two, siegel2024sharp, siegel2022high, wojtowytsch2022representation}. Some notable advances in this direction include \cite{siegel2020approximation}, which showed that if the target function lies in the higher-order Barron space, then a two-layer network achieves approximation in Sobolev norms at a rate that is independent of the input dimension. Provided more layers, \cite{liao2025spectral} extends the result of \cite{barron1994approximation}: by increasing the width, $L$-layer neural networks can achieve dimension-independent approximation error within a function class broader than two-layer cases. Those approximation rates are summarized in Table~\ref{tab:summary}.

\renewcommand{\arraystretch}{2}
\begin{table}[h]
    \centering
    \caption{Comparison of approximation error bounds for neural networks under different assumptions on the target function. $f$ and $F$ denote the target function and neural network, respectively.}
    \begin{adjustbox}{max width=\textwidth}
        \begin{tabular}{c|c|cc|c} % Paper | function space | network (width, depth, activation) | error (rate, norm)
        \toprule
          & Assumption on function &    Width & Depth &    Approximation error\\
          \midrule
          Barron \cite{barron1994approximation} & $\displaystyle\int_{\bR^d} \norm{\xi}_1\norm{\hat{f}\left(\xi\right)}\diff\xi<\infty$ &    $n$ & $2$ & $\Norm{f-F}_{L^2\left(\Omega\right)} < O\left(n^{-1/2}\right)$\\
          Siegel and Xu \cite{siegel2020approximation} & $\displaystyle\int_{\bR^d} \norm{\xi}_1^{s+1}\norm{\hat{f}\left(\xi\right)}\diff\xi<\infty$ &    $n$ & $2$ & $\Norm{f-F}_{H^s\left(\Omega\right)} < O\left(n^{-1/2}\right)$\\
          Liao and Ming \cite{liao2025spectral} & $\displaystyle\int_{\bR^d} \left(1+\norm{\xi}_1^s\right)\norm{\hat{f}\left(\xi\right)}\diff\xi<\infty$ &    $N$ & $L\le \frac{1}{2s}$ &  $\Norm{f-F}_{L^2\left(\Omega\right)} < O\left(N^{-sL}\right)$\\
          \midrule
          \textbf{Ours (Section~\ref{sec:L2_convergence})} & $\displaystyle\int_{\bR^d} \log_2\left(2+\norm{\xi}_1\right)\norm{\hat{f}\left(\xi\right)}\diff\xi<\infty$ &    $d+4$ & $m$ &     $\Norm{f-F}_{L^2\left(\Omega\right)} < O\left(m^{-1/2}\right)$ \\
          \textbf{Ours (Section~\ref{sec:H1_convergence})} & $\displaystyle\int_{\bR^d} \left(1+\norm{\xi}_1\right)\log_2\left(2+\norm{\xi}_1\right)\norm{\hat{f}\left(\xi\right)}\diff\xi<\infty$ &    $d+4$ & $m$ &     $\Norm{f-F}_{H^1\left(\Omega\right)} < O\left(m^{-1/2}\right)$ \\
          \bottomrule
        \end{tabular}\label{tab:summary}
    \end{adjustbox}
\end{table}

On the other hand, in a deep narrow framework, which reduces approximation error by increasing the number of layers, \cite{gripenberg2003approximation} and \cite{lu2017expressive} established universal approximation theorems with smooth and ReLU activation functions, respectively. Indeed, empirical evidence suggests that increasing depth often yields superior performance relative to increasing width, suggesting that deep narrow networks achieve better approximation than shallow wide networks \cite{poggio2017and}. However, even deep networks cannot fully circumvent the curse of dimensionality, as they still require exponentially many parameters in the worst case. This leads us to invest in the function classes that deep narrow networks can efficiently approximate. A motivating example is illustrated in Figure~\ref{fig:dimension_independent_rate_deep}. 
The left panel illustrates the spectral decay of the target function $f$ by plotting the magnitude of its Fourier coefficients $|\hat f(\xi)|$ against the frequency size $|\xi|_{1}$ on a logarithmic $y$-axis. The blue scatter points correspond to sampled frequencies $\xi$ and the associated values $|\hat f(\xi)|$. The orange dotted markers overlays a reference upper envelope with a logarithmically corrected polynomial decay, of the form
\[
    |\hat f(\xi)| \;\lesssim\; \frac{1}{1+|\xi|_{1}^{d}} \cdot \frac{1}{\bigl(\log_{2}(2+|\xi|_{1})\bigr)^{3}},
\]
which visually confirms that the constructed target exhibits the intended spectral decay pattern (i.e., a log-corrected Barron-type condition but not a classical Barron condition of any order) and motivates the log-Barron assumptions used in our analysis. The right panel plots the approximation error (RMSE, on a logarithmic scale) of the trained deep ReLU network versus depth for several input dimensions $d\in\{11,13,\dots,27\}$. As the depth increases, the RMSE decreases by several orders of magnitude, and the curves corresponding to different dimensions exhibit comparable slopes on the log scale, suggesting that the empirical error decay with respect to depth is essentially insensitive to the ambient dimension in this experiment. This observation motivates establishing a dimension-independent approximation theory with explicit dependence on depth.
Since existing dimension-independent approximation theory has largely been restricted to regimes in which the depth is fixed and the width grows, this observation suggests the need for a new theoretical framework that addresses dimension-independent approximation in the depth-growing setting.
\begin{figure}
    \centering
    \begin{minipage}[h]{0.43\linewidth}
        \includegraphics[width=\linewidth]{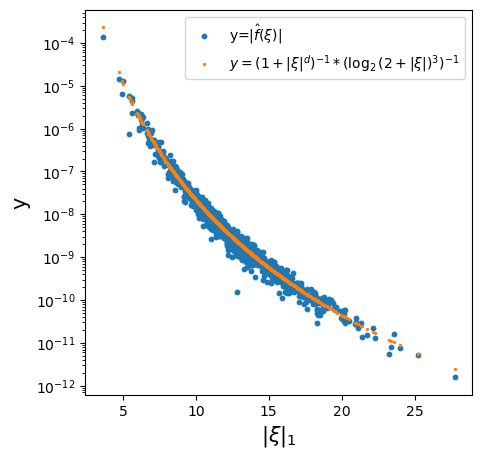}
    \end{minipage}
    \hfill
    \begin{minipage}[h]{0.55\linewidth}
        \includegraphics[width=\linewidth]{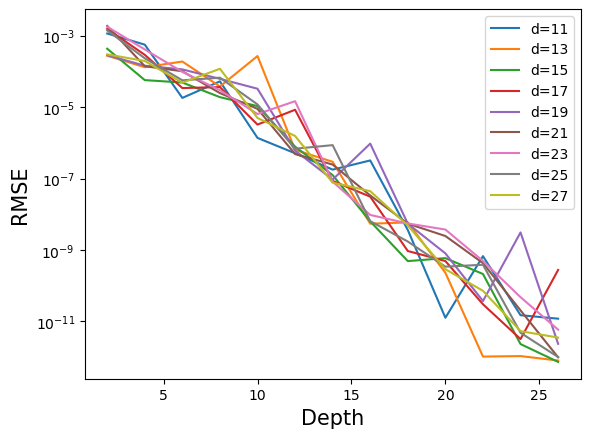}
    \end{minipage}    
    \caption{(Left) Distribution of Fourier spectra of the target function $f$. (Right) 
    Approximation error of a deep neural network for various input dimensions $d$ and depth. 
    %The target function is defined by $f\left(x\right)=\frac{1}{N}\sum_{j=1}^n \norm{\hat{f}\left(\xi_j\right)}e^{2\pi i \xi_j\cdot x}$. 
    In the right panel, the target function is a synthetic function defined as a finite sum of Fourier modes, $f(x) \;=\; \frac{1}{N}\sum_{j=1}^{N} |\hat f(\xi_j)|\, e^{2\pi i\, \xi_j \cdot x}.$ That is, we select a collection of frequencies $\{\xi_j\}_{j=1}^{N}$ (typically by random sampling) and prescribe the amplitudes $|\hat f(\xi_j)|$ to follow the above decay law.
    This illustrates the necessity of establishing a dimension-independent approximation theory with respect to depth.}
    \label{fig:dimension_independent_rate_deep}
\end{figure}

In this paper, we theoretically analyze this with a newly introduced function space, which we refer to as the \textit{log-Barron space}. The space is characterized by slowly decaying Fourier spectra and hence weaker smoothness than classical Barron spaces. To build an initial intuition for this new space, we also provide embedding relations with Sobolev spaces and derive a complexity estimate in terms of the Rademacher complexity \cite{mohri2018foundations}. Within this space, we demonstrate that deep $\relu$ networks with bounded width can achieve a dimension-independent approximation error rate. Since the log-Barron space encompasses a broader class of functions, this result highlights that increasing the depth of the network suffices to attain efficient approximation and extends the class of functions to include those with weaker regularity compared to shallow wide networks. We also extend our analysis to higher-order regularity and achieve a first-order approximation in the Sobolev space. Consequently, we provide a theoretical explanation for the efficient representation of functions with rich high-frequency structures by deep networks, emphasizing that depth, rather than width, is crucial for capturing low-regularity functions. This is also consistent with the experimental findings in Figure \ref{fig:dimension_independent_rate_deep} and supports why deep architectures have shown better performance than wide ones.

To the best of our knowledge, our theorems are the first comprehensive characterization of function spaces that address dimension-independent and efficient approximation in terms of depth. The results indicate that deepening a network can efficiently reduce errors for a function class broader than any other classes corresponding to a shallow wide framework in \cite{barron1994approximation,siegel2020approximation,liao2025spectral}.

Our main contributions are summarized as follows:

\begin{itemize}
    \item We introduce the log-Barron space, a Banach space that encompasses functions with weaker regularity and slower spectral decay than a classical Barron space. We made an initial exploration of the functional-analytic properties of this space, especially its embedding relations to classical Sobolev spaces. We also provide a complexity estimate for the space in terms of Rademacher complexity.
    \item We prove that, despite these substantially weaker regularity assumptions, deep narrow ReLU networks can efficiently approximate functions in this space, achieving, with increasing depth, the same accuracy as wide shallow networks. To the best of our knowledge, this is the first quantitative result demonstrating dimension-independent convergence rates for deep narrow networks as the number of layers increases.
    \item We extend our analysis to the Sobolev approximation, demonstrating that a similar result holds not only for functions themselves but also for their first-order derivatives, thereby enhancing our understanding of more precise approximation.
\end{itemize}

The rest of the paper is organized as follows. Section~\ref{sec:related_works} reviews related works on universal approximation theorems for both wide and deep neural networks, with particular emphasis on Barron's approach. The notations, definitions, and auxiliary results used throughout the paper are introduced in Section~\ref{sec:notations}. We then analyze the embedding properties of the proposed log-Barron space in Section~\ref{sec:log_barron_space}. Section~\ref{sec:L2_convergence} presents the $L^2$ approximation results, and the subsequent section extends these results to the $H^1$ setting. Finally, Section~\ref{sec:conclusion} concludes the paper with a discussion of the main contributions and potential future directions.

\section{Related Works}\label{sec:related_works}
This study investigates the approximation error bounds of deep $\relu$ networks in the log-Barron space, a newly introduced Banach space. In this section, we briefly review related prior works on the universal approximation theorem and approximation error in the Barron space, organized according to shallow, wide, and deep narrow frameworks.

The universal approximation theorem guarantees the existence of a neural network approximating a function in $C\left(\bR^d\right)$ or $L^p\left(\bR^d\right)$ on compact sets. The earliest results on this were introduced in \cite{cybenko1989approximation} and \cite{hornik1991approximation}, which showed that two-layer networks can represent a function $C\left(\bR^d\right)$ or $L^p\left(\bR^d\right)$, provided that the width is sufficiently wide. Building on these theorems, the following studies quantified approximation error in terms of network size. \cite{yarotsky2017error} and \cite{achour2022general} investigated the approximation rates of $\relu$ networks in $C\left(\bR^d\right)$ and $L^p\left(\bR^d\right)$, demonstrating that the number of required parameters increases exponentially with the input dimension $d$. This result suggests that neural networks are subject to the curse of dimensionality, and the mechanisms underlying their strong empirical performance on high-dimensional data remain unclear. 

Consequently, research has focused on identifying function spaces that neural networks can represent efficiently. \cite{barron1994approximation} introduced the Barron space $\sB$, which consists of functions with decaying Fourier transforms characterized by the finite norm $\Norm{f}_\sB=\int_{\bR^d}\left(1+\norm{\xi}_1\right)|\hat{f}\left(\xi\right)| \diff\xi$. It was proved that for any $f\in \sB$ and a compact set $\Omega\subset\bR^d$, a two-layer network $F_n$ of width $n$ achieves $\Norm{f-F_n}_{L^2\left(\Omega\right)}=O\left(n^{-1/2}\right)$, independent of input dimension. \cite{siegel2020approximation} extended this result to higher-order Barron spaces $\sB^s$ equipped with the norm $\Norm{f}_{\sB^s}=\int_{\bR^d}\left(1+\norm{\xi}_1^s\right)|\hat{f}\left(\xi\right)|\diff\xi$, showing that for $f\in \sB^{s+1}$, a two-layer network $F_n$ of width $n$ satisfies $\Norm{f-F_n}_{H^s\left(\Omega\right)}=O\left(n^{-1/2}\right)$ for $s>0$\footnote{In the paper, the Barron norm was defined slightly different as $\Norm{f}_{\sB^s}=\int_{\bR^d}\left(1+\norm{\xi}_1\right)^s|\hat{f}\left(\xi\right)|\diff\xi$. Note that it is straightforward to see that these two definitions are equivalent.}. More recently, \cite{liao2025spectral} studied networks with finite depth $L\ge 2$ and width $N$, demonstrating that the approximation error for $f\in \sB^s$ scales as $O\left(N^{-sL}\right)$ when $0< sL\le\frac{1}{2}$. These results indicate that neural networks reduce approximation error by increasing width while fixing depth, which relaxes the regularity assumption on the target function. Although there is also a probabilistic approach to defining the Barron space via an integral representation (e.g., \cite{ma2022barron} and \cite{chen2024neural}), in this paper, we adopt Barron’s original approach based on the decay of the Fourier transform in order to control frequencies more quantitatively.

On the other hand, \cite{lu2017expressive} and \cite{Kidger_2020} have investigated deep narrow architectures with a bounded number of nodes per layer. Research on deep narrow networks has demonstrated that bounded-width networks can represent a function $C\left(\bR^d\right)$ or $L^p\left(\bR^d\right)$, provided a sufficient number of layers. Constructive and quantitative studies have estimated the required number of parameters in terms of approximation error and the network size, suggesting the curse of dimensionality on deep narrow networks \cite{lu2017expressive, yarotsky2017error}. The analysis of dimension-independent approximation, however, remains lacking. We identify the function space that neural networks can represent efficiently through increasing depth. The target regularity was significantly relaxed by introducing the log-Barron space. As a new function space with weaker regularity requirements than the classical Barron space, the log-Barron space provides theoretical evidence that increasing depth, rather than width, enables more efficient and broad approximation of functions with limited spectral decay.

\section{Notations and Auxiliary Results}\label{sec:notations}
This section introduces the notations and related lemmas which will be used throughout the paper. For input dimension $d \in \bN$, $\sS(\bR^d)$ denotes the space of tempered distributions defined on $\bR^d$. As shown in \cite{liao2025spectral}, considering a target function as a tempered distribution enables the application of the Fourier inversion formula. 

For a vector $\xi\in\bR^d$, we denote the $i$-th component of $\xi$ by $\xi^{(i)}$, hence $\xi=\left(\xi^{(1)}, \xi^{(2)},\ldots,\xi^{(d)}\right)$. The $1$-norm $\norm{\xi}_1$ is the sum of absolute values of all components, $\norm{\xi}_1=\sum_{i=1}^d \norm{\xi^{(i)}}$, and the supremum norm $\norm{\xi}_{\infty}$ is maximum of absolute values, $\norm{\xi}_{\infty}=\max\left\{\norm{\xi^{(i)}}: 1\le i\le d\right\}$. The Euclidean norm of $\xi$ is denoted by $\norm{\xi}=\left(\sum_{i=1}^d\norm{\xi^{(i)}}\right)^{\frac{1}{2}}$. The Barron space $\sB^s$ and its corresponding norm $\Norm{\cdot}_{\sB^s}$ for $s \ge 0$ are defined as 
\begin{align}
    \left\Vert f\right\Vert_{\sB^s} &=  \int_{\bR^d}\left(1+\norm{\xi}_1^s\right)\left\vert\hat{f}\left(\xi\right)\right\vert\diff\xi,\label{eqn:log_barron_1}\\
    \sB^s &= \left\{ f\in \sS(\bR^d): \Norm{f}_{\sB^s}<\infty\right\}.\label{eqn:log_barron_2}
\end{align}
The \textit{log-Barron space} $\sB^{\log}$ and its corresponding norm are defined by
\begin{align}
    \left\Vert f\right\Vert_{\sB^{\log}} &=  \int_{\bR^d}\log_2\left(2+\norm{\xi}_1\right)\left\vert\hat{f}\left(\xi\right)\right\vert\diff\xi,\label{eqn:log_barron_21}\\
    \sB^{\log} &= \left\{ f\in \sS(\bR^d): \Norm{f}_{\sB^{\log}}<\infty\right\}.\label{eqn:log_barron_22}
\end{align}
It is straightforward to verify that $\sB^s \subset \sB^{\log}$ for any $s > 0$. In the next section, it will be shown that $\sB^{\log}$ becomes a Banach space and some embedding properties with respect to the Sobolev spaces $H^s$ will be discussed.

In the proof of the main theorems, a deep neural network is constructed to approximate $f\in \sB^{\log}$ on a compact domain $\Omega\subset\bR^d$. Specifically, a high-frequency cosine function is decomposed into a low-frequency cosine function and a high-frequency piecewise linear function to be determined. We adopt the notation of \cite{liao2025spectral} to denote the repeated function on $\left[0,1\right]$: for a function $g$ on $\left[0,1\right]$, we write
\begin{equation*}
    g_{,n}\left(t\right) \coloneqq g\left(nt\mod 1\right),
\end{equation*}
where for any $z\in\bR$, $\left(z\mod1\right)$ is defined by 
\begin{equation*}
    z\mod 1 \coloneqq z-j\in\left[0,1\right),\quad  j\in\bZ.
\end{equation*}

This notation is used to represent $\cos\left(2\pi nt\right)$ as an integration of $\cos\left(2\pi r\right)\cdot \gamma_{,n}\left(t,r\right)$ over the variable $r$. More precisely, we define $\gamma\left(t,r\right)$ for $t\in\left[0,1\right]$ and $r\in\left[-\frac{1}{2},\frac{1}{2}\right]$ as
\begin{equation}\label{eqn:gamma_tr}
    \gamma\left(t,r\right) = \begin{cases}
        \max\left\{0,r\right\}& t\le \norm{r},\\
        t-\max\left\{0,-r\right\}& \norm{r} < t \le \frac{1}{2},\\
        \gamma\left(1-t,r\right)& t>\frac{1}{2}.
    \end{cases}
\end{equation}

\begin{figure}
    \centering
    \includegraphics[width=0.8\linewidth]{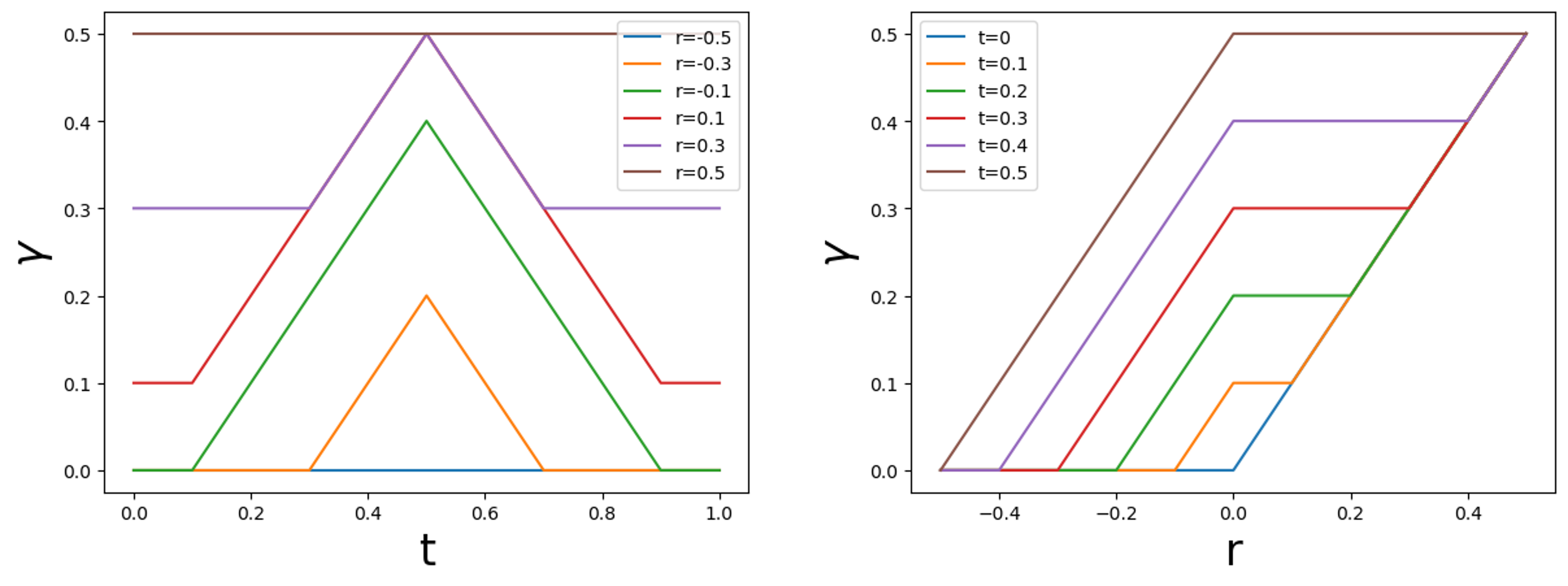}
    \caption{Graph of $\gamma\left(\cdot,r\right)$ and $\gamma\left(t,\cdot\right)$.}
    \label{fig:gamma_graph}
\end{figure}

As illustrated in Figure~\ref{fig:gamma_graph}, $\gamma\left(\cdot, r\right)$ is symmetric to $t=\frac{1}{2}$ for all $r\in\left(-\frac{1}{2},\frac{1}{2}\right)$. Moreover, for given $r\in\left[-\frac{1}{2},\frac{1}{2}\right]$, the function $\gamma\left(\cdot,r\right)$ is weakly differentiable and exactly represented by a $\relu$ network. The following lemma exploits $\gamma$ to represent a high-frequency cosine function.

\begin{lemma}\label{lemma:cos_2pint}
Let $n \in \mathbb{N}$ and $t \in (0,1)$. Define $\gamma_{,n}\left(t,r\right):=\gamma\left(nt\mod 1,r\right)$.
Then there holds
    \begin{equation*}
        -2\pi^2\int_{-\frac{1}{2}}^{\frac{1}{2}}\cos\left(2\pi r\right)\gamma_{,n}\left(t,r\right)\diff r = \cos\left(2\pi nt\right).
    \end{equation*}
\end{lemma}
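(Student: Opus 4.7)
The plan is to reduce to an explicit one-variable computation. Write $s := nt \bmod 1 \in [0,1)$, so that $\gamma_{,n}(t,r) = \gamma(s,r)$ and $\cos(2\pi nt) = \cos(2\pi s)$ (by $2\pi$-periodicity). The claim becomes the identity
\begin{equation*}
    -2\pi^2 \int_{-1/2}^{1/2} \cos(2\pi r)\,\gamma(s,r)\,dr \;=\; \cos(2\pi s), \qquad s\in[0,1).
\end{equation*}
First I would invoke the symmetry $\gamma(s,r)=\gamma(1-s,r)$ built into the third branch of \eqref{eqn:gamma_tr}, together with $\cos(2\pi s)=\cos(2\pi(1-s))$, to reduce to the case $s\in[0,1/2]$.

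Next I would unfold the definition \eqref{eqn:gamma_tr} in the variable $r$ for fixed $s\in[0,1/2]$, which gives the piecewise description
\begin{equation*}
    \gamma(s,r) \;=\;
    \begin{cases}
        0, & r\in[-1/2,-s],\\
        s+r, & r\in[-s,0],\\
        s, & r\in[0,s],\\
        r, & r\in[s,1/2].
    \end{cases}
\end{equation*}
Substituting $r\mapsto -r$ on the second piece folds the contribution from $[-s,0]$ onto $[0,s]$ (using that $\cos(2\pi r)$ is even), and collecting terms reduces $I(s):=\int_{-1/2}^{1/2}\cos(2\pi r)\gamma(s,r)\,dr$ to
\begin{equation*}
    I(s) \;=\; 2s\int_0^s \cos(2\pi r)\,dr \;+\; \int_0^{1/2} r\cos(2\pi r)\,dr \;-\; 2\int_0^s r\cos(2\pi r)\,dr.
\end{equation*}

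Finally, I would evaluate these three integrals by elementary integration by parts. The first gives $s\sin(2\pi s)/\pi$; the third, after integration by parts, produces the same $s\sin(2\pi s)/\pi$ with opposite sign (cancelling the first) together with $-(\cos(2\pi s)-1)/(2\pi^2)$; and the middle integral evaluates to $-1/(2\pi^2)$. Adding up leaves $I(s) = -\cos(2\pi s)/(2\pi^2)$, which after multiplying by $-2\pi^2$ yields the desired identity.

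No serious obstacle arises: the proof is a direct calculation, and the only points requiring care are (i) checking that the piecewise formula for $\gamma(s,\cdot)$ is continuous at the join points $r=0, \pm s$ so that the split integrals agree with the original, and (ii) handling the degenerate cases $s=0$ and $s=1/2$ (where one of the pieces collapses, but both sides of the identity remain continuous in $s$, so these follow by a limit argument or direct substitution).
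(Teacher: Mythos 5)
Your proposal is correct and follows essentially the same route as the paper: reduce by periodicity and the symmetry of $\gamma(\cdot,r)$ about $1/2$ to a single value $s\in[0,1/2]$, unfold the same piecewise formula for $\gamma(s,\cdot)$ in $r$, and evaluate by integration by parts (your folding of the $[-s,0]$ piece onto $[0,s]$ via evenness of cosine is only a cosmetic rearrangement of the paper's direct three-piece computation). The arithmetic checks out, so no further changes are needed.
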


\begin{proof}
    Since $\cos\left(2\pi nt\right)$ and $\gamma\left(t,r\right)$ are symmetric about $t=\frac{1}{2}$, we may assume $t\in\left(0,\frac{1}{2}\right)$. 
    Moreover, if the equation holds for $n=1$, then the periodicity of cosine deduces the general cases as
    \begin{align*}
        \cos\left(2\pi nt\right) &= \cos\left(2\pi\left(nt\mod 1\right)\right)\\
        &= -2\pi^2\int_{-\frac{1}{2}}^{\frac{1}{2}}\cos\left(2\pi r\right)\gamma\left(nt\mod 1,r\right)\diff r\\
        &= -2\pi^2\int_{-\frac{1}{2}}^{\frac{1}{2}}\cos\left(2\pi r\right)\gamma_{,n}\left(t,r\right)\diff r.
    \end{align*}
    
    To compute the integration over the variable $r$ for given $t$ and $n=1$, we need to consider $\gamma\left(t,r\right)$ as a function on $r$:
    \begin{equation*}
        \gamma\left(t,r\right) = \begin{cases}
            0 & r < -t,\\
            t+r & -t < r \le 0,\\
            t & 0 < r \le t,\\
            r & t < r < \frac{1}{2}.
        \end{cases}
    \end{equation*}
    Then, the direct calculation below concludes the proof:
    \begin{align*}
        &-2\pi^2\int_{-\frac{1}{2}}^{\frac{1}{2}}\cos\left(2\pi r\right)\gamma\left(t,r\right)\diff r \\
        &= -2\pi^2\int_{-t}^0\cos\left(2\pi r\right)\left(t+r\right)\diff r -2\pi^2\int_0^t \cos\left(2\pi r\right)t\diff r -2\pi^2\int_t^{\frac{1}{2}}\cos\left(2\pi r\right)r\diff r\\
        &= \left[ -\pi\left(t+r\right)\sin\left(2\pi r\right)-\frac{1}{2}\cos\left(2\pi r\right)\right]_{-t}^0 -t\pi\left[\sin\left(2\pi r\right)\right]_0^t + \left[ -\pi r\sin\left(2\pi r\right)-\frac{1}{2}\cos\left(2\pi r\right)\right]_t^{\frac{1}{2}}\\
        &= \left[-\frac{1}{2}+\frac{1}{2}\cos\left(2\pi t\right)\right] -t\pi\sin\left(2\pi t\right) + \left[ \frac{1}{2} +t\pi\sin\left(2\pi t\right)+\frac{1}{2}\cos\left(2\pi t\right)\right]\\
        &= \cos\left(2\pi t\right).
    \end{align*}
\end{proof}

In the proof of the main theorems, the integration is approximated by the finite sum over $r$. Since $\gamma\left(t,r\right)$ is symmetric about $t=\frac{1}{2}$, for given $r$, the repeated function $\gamma_{,n}$ can be decomposed using the triangle function
\begin{equation*}
    \beta\left(t\right)\coloneqq\relu\left(2t\right)-2\relu\left(2t-1\right),
\end{equation*}
where $\relu\left(t\right)=\max\left\{0,t\right\}$.
\begin{lemma}[\cite{telgarsky2016benefits,liao2025spectral}]\label{lemma:beta_composition}
    Let $g(t)$ be a function defined on $\left[0,1\right]$ and symmetric about $t=1/2$, then $g_{,n_2}\circ\beta_{,n_1}=g_{,2n_1n_2}$ on $\left[0,1\right]$.
\end{lemma}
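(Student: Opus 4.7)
The plan is to exploit the $1/n_1$-periodicity of $\beta_{,n_1}$ together with the symmetry of $g$ about $1/2$, reducing the identity to two elementary calculations on a single triangular bump. First, I would unfold the ReLU formula for $\beta$ to see that $\beta(s)=2s$ on $[0,1/2]$ and $\beta(s)=2-2s$ on $[1/2,1]$, so $\beta_{,n_1}$ consists of exactly $n_1$ such triangles laid over $[0,1]$ and in particular has period $1/n_1$. The right-hand side $g_{,2n_1n_2}$ is also $1/n_1$-periodic because $(2n_1n_2)\cdot(1/n_1)=2n_2\in\bZ$, so increments of $1/n_1$ in $t$ shift the argument $2n_1n_2t$ by an integer and leave $(2n_1n_2t)\mod 1$ unchanged. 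Consequently, it suffices to establish the identity on $[0,1/n_1]$, which I then split into the ascending half $[0,1/(2n_1)]$ and the descending half $[1/(2n_1),1/n_1]$.

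On the ascending half, $\beta_{,n_1}(t)=2n_1t\in[0,1]$, so
\begin{equation*}
g_{,n_2}\bigl(\beta_{,n_1}(t)\bigr)=g\bigl((2n_1n_2t)\mod 1\bigr)=g_{,2n_1n_2}(t)
\end{equation*}
holds directly. On the descending half, $\beta_{,n_1}(t)=2-2n_1t\in[0,1]$, so
\begin{equation*}
g_{,n_2}\bigl(\beta_{,n_1}(t)\bigr)=g\bigl((2n_2-2n_1n_2t)\mod 1\bigr)=g\bigl((-2n_1n_2t)\mod 1\bigr),
\end{equation*}
where the last equality uses $2n_2\in\bZ$. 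The symmetry $g(x)=g(1-x)$, together with the elementary identity $(-y)\mod 1=1-(y\mod 1)$ valid whenever $y\notin\bZ$, then yields $g\bigl((-y)\mod 1\bigr)=g(y\mod 1)$; applying this with $y=2n_1n_2t$ converts the descending-half expression to $g_{,2n_1n_2}(t)$, matching the ascending case.

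The only real subtlety is keeping the mod operation and the reflection compatible on the descending half. The measure-zero exceptional set where $2n_1n_2t\in\bZ$ is trivial ($g(0)=g(0)$), and continuity of both sides across the breakpoints $k/(2n_1)$ removes any ambiguity at interval endpoints. With both halves verified and the periodicity reduction in hand, gluing the pieces across the $n_1$ translated subintervals yields the identity on all of $[0,1]$.
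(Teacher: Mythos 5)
Your proof is correct and follows essentially the same route as the paper's: split according to the ascending and descending branches of the tent map $\beta$, handle the ascending branch directly, and use the symmetry $g(x)=g(1-x)$ together with the mod-$1$ reflection identity on the descending branch. The only cosmetic difference is that you first reduce to a fundamental domain $[0,1/n_1]$ by periodicity, whereas the paper cases directly on $(n_1t\bmod 1)$; the substance is identical.
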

\begin{proof} By definition, we first note that
    \begin{align*}
        g_{,n_2}\circ\beta_{,n_1}\left(t\right) &= g\left(n_2\beta_{,n_1}\left(t\right)\mod 1\right)\\
        &= \left\{\begin{array}{ll}
        g\left(2n_1n_2t\mod 1\right), & 0\le \left(n_1t\mod 1\right)\le \frac{1}{2},\\
        g\left(2n_2-2n_1n_2t\mod 1\right), & \frac{1}{2}\le \left(n_1t\mod 1\right)\le 1.
        \end{array}\right.
    \end{align*}
    For the case of $\frac{1}{2}\le \left(n_1t\mod1\right)\le 1$, we have that
    \begin{align*}
        g\left(2n_2-2n_1n_2t\mod1\right) &= g\left(1-\left(2n_2-2n_1n_2t\mod1\right)\right)\\
        &= g\left(1-2n_2+2n_1n_2t\mod1\right)\\
        &= g\left(2n_1n_2t\mod1\right).
    \end{align*}
    Hence, $g_{,n_2}\circ\beta_{,n_1}\left(t\right)=g\left(2n_1n_2t\mod1\right)$ for all $t\in\left[0,1\right]$.
\end{proof}

We conclude this section with a formal definition of neural networks. For given $N$, $L\in\mathbb{N}$, we call a function $F:\bR^d\rightarrow\bR$ a network of width $N$ and depth $L$ if there exist $W_0\in \bR^{N\times d}$, $W_L\in\bR^{1\times N}$, $W_l\in\bR^{N\times N}$ and $\mathbf{b}_l\in\bR^N$ for $1\le l<L$ such that
\begin{align*}
    \mathbf{z}_0 &= x,\\
    \mathbf{z}_l &= \relu \left(W_l\mathbf{z}_{l-1}+\mathbf{b}_l\right),\\
    F(x) &= W_L\mathbf{z}_L.
\end{align*}
The set of all networks of width $N$ and depth $L$ is denoted by $\cN_{L,N}$ and referred to as a class of $(L,N)$-networks.

\section{Log-Barron Space}\label{sec:log_barron_space}
This section investigates properties of the newly introduced log-Barron space. We first prove that the space is a Banach space. Then, we establish an embedding relation between the space and other well-known function spaces. The primary focus is on the inclusion relations with the Sobolev space. Furthermore, we derive an upper bound on the corresponding Rademacher complexity of the proposed space, which leads us to obtain a generalization bound.

It is deduced from elementary inequalities that $\sB^{\log}$ is a vector space equipped with the norm $\Norm{\cdot}_{\sB^{\log}}$.
The inequality $\Norm{f+g}_{\sB^{\log}}\le\Norm{f}_{\sB^{\log}}+\Norm{g}_{\sB^{\log}}$ follows directly from $\norm{\hat{f}\left(\xi\right)+\hat{g}\left(\xi\right)}\le\norm{\hat{f}\left(\xi\right)}+\norm{\hat{g}\left(\xi\right)}$. The Fourier inversion theorem together with the property $\log_2\left(2+\norm{\xi}_1\right)\ge 1$ implies that $\Norm{f}_{\sB^{\log}}=0$ if and only if $f=0$. Therefore, $\sB^{\log}$ is a vector space equipped with the norm $\Norm{\cdot}_{\sB^{\log}}$. The completeness of this space follows from the completeness of the $L^1$ space, which is encapsulated in the following proposition.
\begin{proposition}
    The log-Barron space $\sB^{\log}$ is a Banach space.
\end{proposition}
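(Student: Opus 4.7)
The plan is to identify $\sB^{\log}$ with a weighted $L^1$ space on the Fourier side and inherit completeness from that of $L^1(\bR^d)$. Concretely, the map $\Phi\colon f\mapsto \log_2(2+\norm{\xi}_1)\hat{f}(\xi)$ is a linear isometry from $\sB^{\log}$ into $L^1(\bR^d)$, and the crucial structural observation is that the weight $\log_2(2+\norm{\xi}_1)\ge 1$ pointwise, so the reciprocal weight is a bounded multiplier. Given this, closedness of the range of $\Phi$ will suffice.

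First I would take a Cauchy sequence $\{f_n\}_{n\in\bN}\subset\sB^{\log}$ and set $g_n(\xi):=\log_2(2+\norm{\xi}_1)\hat{f}_n(\xi)$. By the definition of the norm, $\{g_n\}$ is Cauchy in $L^1(\bR^d)$, and by completeness of $L^1$ converges to some $g\in L^1(\bR^d)$. Next I would define the candidate Fourier transform
$$
h(\xi)\,:=\,\frac{g(\xi)}{\log_2(2+\norm{\xi}_1)},
$$
which is a well-defined element of $L^1(\bR^d)$ because $\norm{h(\xi)}\le\norm{g(\xi)}$ by the lower bound on the weight. Since any $L^1$ function induces a tempered distribution, I would let $f:=\cF^{-1}(h)\in\sS(\bR^d)$ be its distributional inverse Fourier transform, so that $\hat{f}=h$.

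Membership $f\in\sB^{\log}$ and convergence $f_n\to f$ in $\sB^{\log}$ then follow at once from
$$
\Norm{f_n-f}_{\sB^{\log}}=\int_{\bR^d}\log_2(2+\norm{\xi}_1)\norm{\hat{f}_n(\xi)-h(\xi)}\diff\xi=\Norm{g_n-g}_{L^1(\bR^d)}\longrightarrow 0,
$$
together with $\Norm{f}_{\sB^{\log}}=\Norm{g}_{L^1(\bR^d)}<\infty$. I do not expect a genuine obstacle: the argument is a direct transfer of completeness across the isometry $\Phi$, and the only point requiring mild care is to read $f$ as a tempered distribution in $\sS(\bR^d)$ rather than a classical function, which is automatic since $h\in L^1(\bR^d)$ already defines such a distribution (and in fact $f$ is continuous by Riemann--Lebesgue).
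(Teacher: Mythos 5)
Your argument is correct and is essentially the paper's own proof: the paper works with the Cauchy sequence $\{\hat{f}_n\}$ directly in the weighted space $L^1(\bR^d;\diff\mu)$ with $\diff\mu(\xi)=\log_2(2+\norm{\xi}_1)\diff\xi$, which is isometrically the same as your map $\Phi$ into unweighted $L^1$. Both proofs hinge on the same two facts --- completeness of $L^1$ and the lower bound $\log_2(2+\norm{\xi}_1)\ge 1$, which puts the limiting Fourier transform in $L^1(\bR^d)$ so that Fourier inversion produces the limit function in $\sB^{\log}$.
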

\begin{proof}
    To prove the completeness, let $\{f_n\}_{n\in\mathbb{N}}$ be a Cauchy sequence in $\sB^{\log}$. Since a sequence $\hat{f_n}$ is Cauchy sequence in $L^1(\bR^d; \diff\mu)$, which is $L^1(\bR^d)$ space equipped with a measure $\diff\mu\left(\xi\right)=\log_2\left(2+\norm{\xi}_1\right)\diff\xi$, there exists $\hat{g}\in L^1(\bR^d;\diff\mu)$ such that 
    \begin{equation*}
        \int_{\bR^d}\log_2\left(2+\norm{\xi}_1\right)\left\vert \hat{f_n}\left(\xi\right)-\hat{g}\left(\xi\right)\right\vert \diff\xi \rightarrow 0 \text{ as }n\rightarrow \infty.
    \end{equation*}
    % We can use the Fourier inversion theorem to define $g$, since $\hat{g}\in L^1(\bR^d)$.
    % \begin{equation*}
    %     g\left(x\right) \coloneqq \int_{\bR^d} \hat{g}\left(\xi\right)e^{2\pi i\xi\cdot x}\diff\xi.
    % \end{equation*}
    % It is obvious that $\left\Vert f_n-g\right\Vert_{\sB^{\log}}\rightarrow0$ as $n\rightarrow \infty$ and $g\in\sB^{\log}$.
Since $\hat g \in L^1(\mathbb{R}^d)$, we define $g$ via the Fourier inversion formula:
\[
g(x) \coloneqq \int_{\mathbb{R}^d} \hat g(\xi)\, e^{2\pi i \xi \cdot x}\, d\xi .
\]
With this definition, $g \in \sB^{\log}$, and moreover $\|f_n - g\|_{\sB^{\log}} \to 0$ as $n \to \infty$.
\end{proof}

\subsection{Embedding Relations}
The analysis of embeddings with Sobolev spaces begins by considering the embedding relation for the classical Barron space.

\begin{lemma}[\cite{meng2022new, liao2025spectral}]
    If $s_1 > s_2+\frac{d}{2}$, there holds that
    \begin{equation*}
        H^{s_1}(\bR^d)\hookrightarrow \sB^{s_2} \hookrightarrow C^{s_2}(\bR^d).
    \end{equation*}
\end{lemma}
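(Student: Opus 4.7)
The plan is to establish the two inclusions separately, both through weighted Fourier-side comparisons. For the first embedding $H^{s_1}(\bR^d)\hookrightarrow\sB^{s_2}$, I would start from the defining integral $\Norm{f}_{\sB^{s_2}}=\int_{\bR^d}(1+\norm{\xi}_1^{s_2})|\hat{f}(\xi)|\diff\xi$, multiply and divide the integrand by the Sobolev weight $(1+\norm{\xi}^2)^{s_1/2}$, and apply the Cauchy--Schwarz inequality. This factors the estimate into the Sobolev norm $\Norm{f}_{H^{s_1}}=\Norm{(1+\norm{\cdot}^2)^{s_1/2}\hat{f}}_{L^2}$ times a purely deterministic weight integral $\int_{\bR^d}(1+\norm{\xi}_1^{s_2})^2(1+\norm{\xi}^2)^{-s_1}\diff\xi$. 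Using the equivalence $\norm{\xi}/\sqrt{d}\le\norm{\xi}_1\le\sqrt{d}\,\norm{\xi}$, the integrand decays like $\norm{\xi}^{2s_2-2s_1}$ at infinity, so that it is integrable precisely when $2s_1-2s_2>d$, i.e.\ when $s_1>s_2+d/2$; this is where the dimensional gap $d/2$ enters the threshold.

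For the second embedding $\sB^{s_2}\hookrightarrow C^{s_2}(\bR^d)$, I would use the Fourier inversion representation $f(x)=\int_{\bR^d}\hat{f}(\xi)e^{2\pi i\xi\cdot x}\diff\xi$, which is already well defined and yields $f\in C(\bR^d)$ since $\hat{f}\in L^1(\bR^d)$. For any multi-index $\alpha$ with $|\alpha|\le s_2$, I would differentiate under the integral sign to obtain $\partial^\alpha f(x)=\int_{\bR^d}(2\pi i\xi)^\alpha\hat{f}(\xi)e^{2\pi i\xi\cdot x}\diff\xi$, justified by the dominating envelope $(2\pi)^{|\alpha|}\norm{\xi}_1^{|\alpha|}|\hat{f}(\xi)|$ together with the trivial bound $t^{|\alpha|}\le 1+t^{s_2}$ for $t\ge 0$, which ties the envelope back to $(1+\norm{\xi}_1^{s_2})|\hat{f}(\xi)|$. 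Dominated convergence then gives $\partial^\alpha f\in C(\bR^d)$ with $\Norm{\partial^\alpha f}_{L^\infty}\le(2\pi)^{s_2}\Norm{f}_{\sB^{s_2}}$, settling the integer case. When $s_2=k+\gamma$ with $\gamma\in(0,1)$, I would additionally control the H\"older seminorm of $\partial^\alpha f$ at order $|\alpha|=k$ by applying the pointwise inequality $|e^{it}-1|\le 2^{1-\gamma}|t|^\gamma$ with $t=2\pi\xi\cdot(x-y)$, producing $|\partial^\alpha f(x)-\partial^\alpha f(y)|\le C\Norm{f}_{\sB^{s_2}}|x-y|^\gamma$.

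The main difficulty is bookkeeping rather than conceptual. The most delicate step is verifying that the weight integral in the Cauchy--Schwarz argument is finite exactly at the critical threshold $s_1>s_2+d/2$, which requires switching between $\norm{\xi}_1$ (appearing in the Barron weight) and $\norm{\xi}$ (appearing in the Sobolev weight) through the norm-equivalence constants and then reading off the dimensional exponent via polar coordinates or a dyadic decomposition. A secondary point is fixing the convention of $C^{s_2}$ for non-integer $s_2$; interpreting it as the H\"older class $C^{k,\gamma}$ with $k=\lfloor s_2\rfloor$ and $\gamma=s_2-k$ is the natural choice and is exactly what the differentiation-under-the-integral argument produces, so no stronger Barron norm is required.
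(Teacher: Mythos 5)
The paper does not prove this lemma at all: it is imported verbatim from the cited references (\cite{meng2022new, liao2025spectral}), so there is no in-paper argument to compare against. Your proof is the standard one and is correct. The Cauchy--Schwarz step for $H^{s_1}\hookrightarrow\sB^{s_2}$ correctly isolates the weight integral $\int_{\bR^d}(1+\norm{\xi}_1^{s_2})^2(1+\norm{\xi}^2)^{-s_1}\diff\xi$, whose integrand is bounded near the origin and decays like $\norm{\xi}^{2(s_2-s_1)}$ at infinity, hence is finite exactly when $s_1>s_2+\frac{d}{2}$; the norm-equivalence constants between $\norm{\cdot}_1$ and $\norm{\cdot}$ only affect the embedding constant, not the threshold. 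The second embedding is also handled correctly: the bound $t^{|\alpha|}\le 1+t^{s_2}$ for $|\alpha|\le s_2$ ties $\norm{\partial^\alpha f}_{L^\infty}$ to $\Norm{f}_{\sB^{s_2}}$, and for fractional $s_2=k+\gamma$ the estimate $\left\vert e^{it}-1\right\vert\le 2^{1-\gamma}\left\vert t\right\vert^{\gamma}$ with $t=2\pi\xi\cdot(x-y)$ yields the H\"older seminorm bound, consuming precisely the remaining $\norm{\xi}_1^{\gamma}$ of the Barron weight. Your remark about fixing the convention $C^{s_2}=C^{k,\gamma}$ is apt, since that is the reading under which the statement is sharp and provable from the $\sB^{s_2}$ norm alone. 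The only cosmetic point is that the constant $(2\pi)^{s_2}$ should more precisely be $\max\{1,(2\pi)^{s_2}\}$-type bookkeeping when $|\alpha|<s_2$, but this does not affect the result.
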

Since $\sB^{s_2}$ embeds into $\sB^{\log}$ for any $s_2>0$, it follows that $H^{s_1}(\bR^d)$ is also embedded in $\sB^{\log}$. This leads to the following corollary.

\begin{corollary}
    If $s>\frac{d}{2}$, then $H^s(\bR^d)\hookrightarrow \sB^{\log}$.
\end{corollary}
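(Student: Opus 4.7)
The plan is to chain two continuous embeddings. First, I would invoke the preceding lemma to obtain $H^s(\bR^d) \hookrightarrow \sB^{s_2}$ for a suitable positive $s_2$, and then establish $\sB^{s_2} \hookrightarrow \sB^{\log}$ via an elementary pointwise bound on the Fourier-side weight. Because the hypothesis $s > \frac{d}{2}$ furnishes a strictly positive gap $s - \frac{d}{2} > 0$, I would pick any $s_2 \in \left(0,\, s - \frac{d}{2}\right)$; this choice gives $s > s_2 + \frac{d}{2}$, which is exactly the hypothesis of the cited lemma.

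For the second step, the key inequality I would prove is
\[
\log_2\left(2 + t\right) \le C_{s_2}\left(1 + t^{s_2}\right) \quad \text{for all } t \ge 0,
\]
with $C_{s_2}$ depending only on $s_2$. This is a routine calculus fact: $\log_2(2+t)$ is bounded on any compact subset of $[0,\infty)$, while $\log_2(2+t)/t^{s_2} \to 0$ as $t \to \infty$ whenever $s_2 > 0$, so one can take $C_{s_2}$ to be the maximum over $[0,\infty)$ of the continuous ratio $\log_2(2+t)/(1+t^{s_2})$. Applying the estimate pointwise with $t = \norm{\xi}_1$ and integrating against $\norm{\hat{f}(\xi)}\diff\xi$ yields $\Norm{f}_{\sB^{\log}} \le C_{s_2}\Norm{f}_{\sB^{s_2}}$, which is the desired embedding $\sB^{s_2} \hookrightarrow \sB^{\log}$. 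Composing this with the first embedding produces $H^s(\bR^d) \hookrightarrow \sB^{\log}$ together with the quantitative bound $\Norm{f}_{\sB^{\log}} \lesssim \Norm{f}_{H^s}$.

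The argument is a short two-step composition, so I do not anticipate any real obstacle. The only subtlety worth flagging is that the endpoint choice $s_2 = 0$ would not close the chain: since $\log_2(2 + \norm{\xi}_1) \ge 1$ pointwise, one actually has $\sB^{\log} \hookrightarrow \sB^0$ rather than the reverse, so strict positivity of $s_2$ is essential for the weight comparison. It is precisely the strict inequality $s > \frac{d}{2}$ in the hypothesis that creates the room to choose such an $s_2$, so the condition in the corollary is sharp for this particular proof strategy.
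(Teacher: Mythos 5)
Your proposal is correct and follows essentially the same route as the paper: the paper also chains the cited lemma $H^{s}(\bR^d)\hookrightarrow \sB^{s_2}$ (for $s> s_2+\frac d2$, $s_2>0$) with the inclusion $\sB^{s_2}\subset\sB^{\log}$, which it had earlier declared ``straightforward to verify.'' Your explicit pointwise bound $\log_2(2+t)\le C_{s_2}(1+t^{s_2})$ simply fills in that elementary verification, and your remark about why $s_2=0$ fails is a correct observation consistent with the paper's Proposition~\ref{prop:emb_2}.
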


Here, the condition $s>\frac{d}{2}$ is essentially optimal. Specifically, for $s\le\frac{d}{2}$, there exists $f\in H^s(\bR^d)$ such that $\Norm{f}_{\sB^{\log}}=\infty$.

\begin{proposition}\label{prop:emb_2}
    If $0\le s\le\frac{d}{2}$, then there exists $f\in H^s(\bR^d)$ with $f\notin \sB^{\log}$.
\end{proposition}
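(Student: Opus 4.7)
The plan is to construct the counterexample by prescribing a radial nonnegative Fourier transform with algebraic decay, supplemented by a logarithmic correction at the endpoint $s=d/2$. Since $\hat{f}\ge 0$, one has $\norm{\hat{f}(\xi)}=\hat{f}(\xi)$, which simplifies both norm computations. Using that $\norm{\xi}_1\asymp\norm{\xi}$ up to a dimension-dependent constant and that $\log_2(2+\norm{\xi}_1)\asymp\log(2+\norm{\xi})$ for large $\norm{\xi}$, I would pass to polar coordinates and reduce both integrals to one-variable integrals in $r=\norm{\xi}$. The recovery of $f$ as a tempered distribution is then automatic via Fourier inversion, since the prescribed $\hat{f}$ will lie in $L^2(\bR^d)$ in each case.

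I would split the argument into the open regime $0\le s<d/2$ and the endpoint $s=d/2$. In the open regime, fix any $\alpha\in(s+d/2,\,d]$, an interval that is nonempty precisely because $s<d/2$, and set $\hat{f}(\xi):=(1+\norm{\xi})^{-\alpha}$. Passing to polar coordinates,
\[
\int_{\bR^d}(1+\norm{\xi}^2)^s\,\hat{f}(\xi)^2\,\diff\xi \;\asymp\;\int_0^\infty(1+r)^{2s-2\alpha}r^{d-1}\,\diff r \;<\;\infty
\]
because $2\alpha-2s>d$, so $f\in H^s(\bR^d)$. On the other hand,
\[
\int_{\bR^d}\log_2(2+\norm{\xi}_1)\,\hat{f}(\xi)\,\diff\xi \;\asymp\;\int_1^\infty r^{d-1-\alpha}\log r\,\diff r \;=\;\infty
\]
because $\alpha\le d$. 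Hence $f\in H^s(\bR^d)\setminus\sB^{\log}$ in this regime.

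For the endpoint $s=d/2$, the pure-power ansatz fails, since no $\alpha$ can simultaneously satisfy $\alpha>d$ and $\alpha\le d$. Here I would introduce a logarithmic correction and take $\hat{f}(\xi):=(1+\norm{\xi})^{-d}\bigl(\log(2+\norm{\xi})\bigr)^{-\beta}$ with $\beta\in(1/2,\,2]$. After polar coordinates, the Sobolev integrand behaves like $r^{-1}(\log r)^{-2\beta}$ at infinity, which is integrable iff $\beta>1/2$, while the log-Barron integrand behaves like $r^{-1}(\log r)^{1-\beta}$, which is non-integrable at infinity iff $\beta\le 2$. Any $\beta\in(1/2,\,2]$ therefore realizes both conditions simultaneously, completing the construction.

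The main obstacle is the endpoint $s=d/2$: the algebraic decay needed to place $\hat{f}$ in the weighted $L^2$ defining $H^{d/2}$ coincides exactly with the decay that makes the log-Barron integral borderline, so a finely tuned logarithmic correction must be inserted and its exponent narrowed to a specific interval. Away from the endpoint the gap between the two thresholds $s+d/2$ and $d$ is strictly positive, and any $\alpha$ in between yields the counterexample without any log correction; this clean dichotomy between the open and boundary regimes is what makes the bound $s=d/2$ genuinely sharp.
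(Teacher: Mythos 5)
Your proposal is correct and follows essentially the same route as the paper: prescribe a nonnegative, radially decaying $\hat{f}$ at the critical rate $\norm{\xi}^{-(s+\frac{d}{2})}$ with a logarithmic correction, recover $f\in L^2$ via Plancherel, and check that the weighted $L^2$ integral converges while the log-weighted $L^1$ integral diverges. The only difference is presentational: the paper's single dyadic-shell amplitude $c_k=2^{-(\frac{d}{2}+s)k}/k$ handles the open regime and the endpoint $s=\frac{d}{2}$ in one formula, whereas you split into two cases with a pure power law away from the endpoint and a log-corrected one at it.
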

\begin{proof}
    Let $A_k:=\{\xi\in\bR^d: 2^k\le|\xi|<2^{k+1}\}$ denote the dyadic shells.
    Define $\hat{f}$ to be constant on each shell:
    \begin{equation*}
        \hat{f}\coloneqq \sum_{k\ge1}c_k\, \mathbf{1}_{A_k},\quad\text{where } c_k\coloneqq\frac{2^{-(\frac d2+s)k}}{k},
    \end{equation*}
    where $\mathbf{1}_{A_k}$ refers to the indicator function.
    Note that $\hat{f}\notin L^1(\bR^d)$, so we cannot apply the Fourier inversion formula to find such $f$. However, since $\hat{f}\in L^2(\bR^d)$, there exists $f\in L^2(\bR^d)$ whose Fourier transform is $\hat{f}$ by Plancherel's theorem. Then we have
    \begin{equation*}
        \Norm{f}_{H^s}^2
        \lesssim \sum_k \int_{A_k}(1+\norm{\xi}^2)^s \norm{\hat{f}\left(\xi\right)}^2\diff\xi
        \lesssim \sum_k 2^{2sk} c_k^2 \norm{A_k}
        \lesssim \sum_k \frac{1}{k^2}<\infty,\footnote{We denote $f\lesssim g$ or $g\gtrsim f$ if there exists a constant $C$ such that $f\le Cg$.}
    \end{equation*}
    and hence, $f\in H^s(\bR^d)$, where $\norm{A_k}$ is the measure of the dyadic shell $A_k$. On the other hand, for the $\sB^{\log}$ norm, we use the fact $\log_2(2+\norm{\xi}_1) > k$ on $A_k$ and $\norm{A_k}\gtrsim 2^{kd}$ to obtain
    \begin{equation*}
        \Norm{f}_{\sB^{\log}}
    =\sum_{k\ge1}\int_{A_k} \log_2(2+\norm{\xi}_1)\,\norm{\hat{f}\left(\xi\right)}\diff\xi
    \gtrsim \sum_{k\ge1} k\, c_k\,\norm{A_k}\gtrsim \sum_k 2^{-\left(\frac{d}{2}+s\right)k}\cdot 2^{kd}
    =\sum_{k\ge1} 2^{\left(\frac{d}{2} - s\right)k}.
    \end{equation*}
    Since $s\leq\frac d2$, the right-most term diverges and hence, $f\notin \sB^{\log}$.
\end{proof}

Conversely, one may wonder whether the embedding holds in the opposite direction. However, this is generally not the case for any $s\geq0$, which is encapsulated in the following proposition. The idea is to place mass on each shell with amplitude and support size tailored to control the logarithmically weighted $L^1$ while forcing the Sobolev $L^2$ norm to diverge.

\begin{proposition}\label{prop:emb_3}
    For every $s\ge0$ there exists $f\in \sB^{\log}$ with $f\notin H^s(\bR^d)$.
\end{proposition}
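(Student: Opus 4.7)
The plan is to mirror the construction used in Proposition \ref{prop:emb_2} but swap the roles of amplitude and support size, so that the logarithmically weighted $L^1$ integral stays summable while the Sobolev $L^2$ integral blows up. I will again work on the dyadic shells $A_k = \{\xi \in \bR^d : 2^k \le \norm{\xi} < 2^{k+1}\}$, on which the weights satisfy $\log_2(2+\norm{\xi}_1) \sim k$, $(1+\norm{\xi}^2)^s \sim 2^{2sk}$, and $\norm{A_k} \gtrsim 2^{kd}$. Setting $\hat f = \sum_{k\ge 1} c_k \mathbf{1}_{E_k}$ with $E_k \subset A_k$ and $\norm{E_k} = m_k$ then reduces the two norms, up to constants, to $\Norm{f}_{\sB^{\log}} \sim \sum_k k\, c_k\, m_k$ and $\Norm{f}_{H^s}^2 \sim \sum_k 2^{2sk}\, c_k^2\, m_k$.

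The choice I would make is $c_k = k^2$ and $m_k = 1/k^5$, for which the log-Barron side becomes $\sum_k k^3/k^5 = \sum_k 1/k^2 < \infty$, while the Sobolev side satisfies $\sum_k 2^{2sk}\, k^4/k^5 \ge \sum_k 1/k = \infty$ uniformly in $s \ge 0$. Since $\sum_k c_k m_k = \sum_k 1/k^3 < \infty$, we have $\hat f \in L^1(\bR^d)$, so $f$ is well defined as a bounded continuous tempered distribution via Fourier inversion and therefore belongs to $\sB^{\log}$. For $s > 0$ the divergence computation directly gives $f \notin H^s$; for $s=0$ the same sum shows $\hat f \notin L^2$, and Plancherel then forces $f \notin L^2 = H^0$. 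A minor bookkeeping check is that the subsets $E_k$ can be realized inside $A_k$, which holds since $m_k = 1/k^5 \le 1 \le 2^{kd} \lesssim \norm{A_k}$ for every $k \ge 1$.

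The only delicate point, and the reason the previous construction does not simply carry over, is the boundary case $s = 0$: there the exponential factor $2^{2sk}$ degenerates, so divergence of the Sobolev integral must be forced purely by the amplitudes. This is what dictates that $c_k$ has to grow polynomially in $k$ rather than being constant, and it is precisely the extra factor $k$ present in the log-Barron weight but absent in $L^2$ that creates just enough room to keep $\sum_k k\, c_k\, m_k$ convergent while $\sum_k c_k^2\, m_k$ diverges. Once the amplitude-support trade-off $c_k = k^2$, $m_k = 1/k^5$ is identified, the verification reduces to elementary summability estimates on dyadic shells.
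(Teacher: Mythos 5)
Your construction is correct and follows essentially the same route as the paper: place an indicator-type $\hat f$ on small subsets $E_k$ of dyadic shells, tune amplitude against support measure so that the log-weighted $L^1$ integral converges while the weighted $L^2$ integral diverges, and recover $f$ via Fourier inversion from $\hat f\in L^1$. The only difference is the parameter choice --- you use a polynomial trade-off $c_k=k^2$, $\norm{E_k}=k^{-5}$ so that the $H^0$ sum diverges like the harmonic series, whereas the paper takes $m_k=k^p2^k$, $\norm{E_k}=k^{-2p}2^{-k}$ so that $m_k^2\norm{E_k}=2^k$ diverges geometrically; both choices handle the delicate $s=0$ case, and your explicit appeal to $H^s\subset L^2$ via Plancherel is a clean way to close it.
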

\begin{proof}
    Let $A_k:=\left\{\xi\in\bR^d: 2^k\le \norm{\xi}_1 < 2^{k+1}\right\}$ denote the dyadic shells. For a fixed parameter $p>2$, we can choose a measurable set $E_k\subset A_k$ for each $k\in\bN$ with measure
    \begin{equation*}
        \norm{E_k} = k^{-2p}2^{-k}.
    \end{equation*}
    If we define
    \begin{equation*}
        \hat{f}=\sum_{k\ge1} m_k\,\mathbf{1}_{E_k},\quad {\text{where }}m_k=k^p2^k,
    \end{equation*}
    where $\mathbf{1}_{E_k}$ is the indicator function, then we have
    \begin{equation*}
        \int_{\bR^d}\norm{\hat{f}\left(\xi\right)}\diff\xi \leq \sum_{k\geq1}\int_{E_k}\norm{\hat{f}\left(\xi\right)}\diff\xi \leq \sum_{k\geq1}m_k\,\norm{E_k}=\sum_{k\geq1}k^{-p}<\infty.
    \end{equation*}
    Therefore, by the Fourier inversion formula, we can see that such a function $f$ exists in $C_0(\bR^d)\cap L^{\infty}(\bR^d)$. Now, since $\log_2(2+\norm{\xi}_1) \le 2k$ on $A_k$, there holds
    \begin{equation*}
        \Norm{f}_{\sB^{\log}}
        =\int_{\bR^d} \log_2(2+\norm{\xi}_1)\norm{\hat{f}\left(\xi\right)}\diff\xi
        \le \sum_{k\geq1} 2k m_k\,\norm{E_k}\diff\xi
        =2\sum_{k\geq1} k^{1-p}<\infty,
    \end{equation*}
    and hence, $f\in \sB^{\log}$. On the other hand, from $1+\norm{\xi}^2\ge \frac{2}{\sqrt{d}}\norm{\xi}_1$, we have
    \begin{align*}
        \Norm{f}_{H^s}^2
        &= \int_{\bR^d} \left(1+|\xi|^2\right)^s\norm{\hat{f}\left(\xi\right)}^2\diff\xi \\
        &\ge \left(\frac{2}{\sqrt{d}}\right)^s \int_{\bR^d} \norm{\xi}_1^s \norm{\hat{f}\left(\xi\right)}^2\diff\xi \\
        &\ge \left(\frac{2}{\sqrt{d}}\right)^s \sum_{k\ge1} 2^{sk}m_k^2\norm{E_k} \\
        &= \left(\frac{2}{\sqrt{d}}\right)^s \sum_{k\geq1} 2^{\left(s+1\right)k} \\
        &= \infty
    \end{align*}
    for any $s\ge 0$. Therefore, $f\notin H^s(\bR^d)$.
\end{proof}

Collecting all the results above, we can finally obtain the following embedding theorem for the log-Barron space introduced in this paper.

\begin{theorem}\label{emb_thm}
Let $\sB^{\log}(\bR^d)$ be the log-Barron space defined in \eqref{eqn:log_barron_21} and \eqref{eqn:log_barron_22}, and $H^s(\bR^d)$ be the Sobolev space with $s\geq0$. Then we have the following embedding properties.

\begin{enumerate}[label=(\alph*)]
    \item If $s>\frac d2$, then $H^s(\bR^d)\hookrightarrow \sB^{\log}$.
    \item The embedding (a) is essentially sharp: if $0\le s<\frac{d}{2}$, $H^s(\bR^d)\not\subset \sB^{\log}$.
    \item For any $s\ge0$, $\sB^{\log}\not\subset H^s(\bR^d)$.
\end{enumerate}
Therefore, on the range $0\le s \le \frac{d}{2}$ there is no embedding in either direction between $\sB^{\log}$ and $H^s$.
\end{theorem}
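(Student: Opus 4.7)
My plan is to assemble the theorem as a direct consequence of the three results already established in this section: the Sobolev-to-Barron embedding lemma (combined with the trivial inclusion $\sB^{s_2}\subset\sB^{\log}$), Proposition \ref{prop:emb_2}, and Proposition \ref{prop:emb_3}. No new analysis is required; the task is to verify that the ranges of $s$ quoted in parts (a), (b), (c) are covered by those intermediate results and to package them into a single statement, together with the concluding remark about $0\le s\le d/2$.

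For part (a), I would chain the cited lemma $H^{s_1}\hookrightarrow\sB^{s_2}$ (valid whenever $s_1>s_2+d/2$) with the obvious inclusion $\sB^{s_2}\subset\sB^{\log}$ that holds for any $s_2>0$, since $1+\norm{\xi}_1^{s_2}\gtrsim\log_2(2+\norm{\xi}_1)$. Given $s>d/2$, picking any $s_2\in(0,s-d/2)$ yields $H^s\hookrightarrow\sB^{\log}$, which is precisely the corollary stated immediately after the lemma. For part (b), I would quote Proposition \ref{prop:emb_2} directly, whose dyadic-shell construction produces $f\in H^s\setminus\sB^{\log}$ for every $s\in[0,d/2]$ via coefficients $c_k=k^{-1}2^{-(d/2+s)k}$ calibrated so that the Sobolev sum behaves like $\sum k^{-2}$ while the log-Barron sum behaves like $\sum 2^{(d/2-s)k}$. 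For part (c), I would invoke Proposition \ref{prop:emb_3}, whose concentrated-mass construction on thin subsets $E_k\subset A_k$ exploits the gap between the linear logarithmic weight in $\sB^{\log}$ and the quadratic amplification inherent in the $H^s$ norm; the resulting $f$ lies in $\sB^{\log}$ but has infinite $H^s$ norm for every $s\ge 0$.

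The final clause, that no embedding holds in either direction on the range $0\le s\le d/2$, is then simply the conjunction of (b) and (c). The only bookkeeping issue worth flagging is that the theorem as stated in (b) writes $0\le s<d/2$, whereas Proposition \ref{prop:emb_2} in fact covers the closed endpoint $s=d/2$ as well; it is precisely this extra endpoint that validates the concluding remark on the closed interval. There is no genuine mathematical obstacle at this stage — the substantive work was carried out in Propositions \ref{prop:emb_2} and \ref{prop:emb_3}, where the real difficulty was balancing amplitude against support size across dyadic shells so that the logarithmic and Sobolev weights produce opposite convergence behavior.
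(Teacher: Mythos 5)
Your proposal is correct and follows exactly the route the paper intends: Theorem \ref{emb_thm} is stated as a compilation of the preceding lemma and corollary for part (a), Proposition \ref{prop:emb_2} for part (b), and Proposition \ref{prop:emb_3} for part (c), with no additional argument. Your observation that Proposition \ref{prop:emb_2} in fact covers the closed endpoint $s=\tfrac{d}{2}$ --- which is what justifies the concluding ``no embedding on $0\le s\le \tfrac d2$'' clause even though part (b) is phrased with a strict inequality --- is accurate and worth keeping.
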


\subsection{Estimates of the Rademacher Complexity}
We next quantify the statistical complexity of the log-Barron space $\sB^{\log}$ using Rademacher complexity. In classical statistical learning theory, one typical way to bound the generalization gap, i.e., the discrepancy between the population risk and the empirical risk, is to use the Rademacher complexity. 

To be more specific, given $n$ sample points $x_1,\ldots, x_n\in \Omega$, we consider a ball in the log-Barron space
\begin{equation*}
    \cF_Q \coloneqq \left\{ f\in \sB^{\log}: \Norm{f}_{\sB^{\log}}\le Q\right\}.
\end{equation*}
The \textit{empirical Rademacher complexity} of $\cF_Q$ is then defined as
\begin{equation*}
    \cR_n\left(\cF_Q\right) \coloneqq \bE_{\sigma}\left[\sup_{f\in \cF_Q}\frac{1}{n}\sum_{i=1}^n \sigma_i f\left(x_i\right)\right],
\end{equation*}
where $\sigma_i$ are i.i.d. Rademacher random variables, i.e., $\bP\left(\sigma_i=1\right)=\bP\left(\sigma_i=-1\right)=\frac{1}{2}$. This provides a complementary, sample-based notion of the statistical capacity, measuring how well functions in the class can distinguish random sign patterns on finite samples. The role of the Rademacher complexity is to give a law of large numbers which is uniform over a function class $\mathcal{F}$ and hence bounds the generalization gap \cite{SLT}.

Functions satisfying the classical Barron regularity are known to have low complexity; in particular, their Rademacher complexity can be bounded by $O\left(n^{-1/2}\right)$ \cite{ma2022barron}. It is therefore natural to ask the same question for the new function space we introduce. Since our new space is larger than the classical Barron space, one might anticipate an increased complexity. Nevertheless, we can show that the log-Barron space still admits the same Monte Carlo convergence rate, yielding the same $O\left(n^{-1/2}\right)$ scaling in the corresponding Rademacher complexity bound, and hence comparable generalization guarantees.

We follow the classical Barron-space analysis, with additional care to handle the logarithmic weight in the definition of $\sB^{\log}$. First, we represent functions in $\sB^{\log}$ through the Fourier inversion formula and rewrite the Rademacher complexity as a supremum over a Fourier feature on a dyadic shell. Lastly, we bound the supremum using Dudley's entropy integral, which bounds a centered stochastic process with sub-Gaussian increments on a totally bounded set by a covering number.

Let $\left(T,\rho\right)$ be a metric space and $\eps>0$. We call a set $S\subset T$ is an $\eps$-net of $T$ with respect to $\rho$ if for any $t\in T$, there exists $s\in S$ such that $\rho\left(t,s\right)\le \eps$. The $\eps$-covering number of $T$ is then defined by
\begin{equation*}
    \cN\left(T,\rho,\eps\right)\coloneqq \min\left\{\norm{S} : S\subset T\text{ is an $\eps$-net of $T$ with respect to $\rho$.}\right\}.
\end{equation*}

For a centered stochastic process $\left\{Z\left(t\right)\right\}_{t\in T}$ on a metric space $\left(T,\rho\right)$, we say that $Z$ has sub-Gaussian increments with constant $K>0$ if for all $s,t\in T$ and $u>0$,
\begin{equation}\label{sub_gauss}
    \bP\left(\left\vert Z\left(t\right)-Z\left(s\right)\right\vert\ge u\right) \le 2\exp\left(-\frac{u^2}{K^2\rho\left(t,s\right)^2}\right).
\end{equation}
The following result is to bound the sub-Gaussian process using a covering number, which can be found in many statistical learning books, including Theorem 5.22 in \cite{SLT}.
\begin{proposition}[Dudley's entropy integral]\label{prop:dudley}
    Let $(T,\rho)$ be totally bounded and $Z=\{Z(t)\}_{t\in T}$ is centered, and has sub-Gaussian increments with respect to $\rho$ satisfying $\sup_{t\in T}|Z(t)|\le 1$. Then
\[
\bE\Big[\sup_{t\in T}|Z(t)|\Big]\ \le\ CK\int_{0}^{1}\sqrt{{\log \mathcal{N}(T,\rho,u)}}\,{\rm{d}}u,
\]
where $K$ is the constant in \eqref{sub_gauss}.
\end{proposition}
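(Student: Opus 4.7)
The plan is to prove this inequality by the classical \emph{chaining} argument due to Dudley. The strategy is to approximate each $t\in T$ by a sequence of points drawn from increasingly fine nets and to telescope the process along these approximations; sub-Gaussian concentration then controls each scale by the covering number at that scale, and a Riemann comparison converts the resulting dyadic sum into the stated integral.

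Concretely, I would fix the dyadic scales $\eps_k = 2^{-k}$ for $k\ge 0$ and, for each $k$, choose a minimal $\eps_k$-net $T_k\subset T$ with $|T_k|=\mathcal{N}(T,\rho,\eps_k)$. For each $t\in T$, let $\pi_k(t)\in T_k$ be a nearest point, so $\rho(t,\pi_k(t))\le\eps_k$. Using the separability of $T$ (which follows from total boundedness, and which allows one to reduce the supremum to a countable one), one writes the telescoping identity
\begin{equation*}
    Z(t) = Z(\pi_0(t)) + \sum_{k\ge 1}\bigl(Z(\pi_k(t)) - Z(\pi_{k-1}(t))\bigr).
\end{equation*}
At level $k$ there are at most $|T_k|\cdot|T_{k-1}|\le \mathcal{N}(T,\rho,\eps_k)^2$ distinct pairs $(\pi_k(t),\pi_{k-1}(t))$, and the triangle inequality gives $\rho(\pi_k(t),\pi_{k-1}(t))\le \eps_k+\eps_{k-1}\le 3\eps_k$. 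Invoking the standard sub-Gaussian maximal inequality, namely that for centered sub-Gaussian variables $X_1,\dots,X_N$ with parameter $\sigma$ one has $\bE[\max_i |X_i|]\lesssim \sigma\sqrt{\log(2N)}$, yields
\begin{equation*}
    \bE\!\left[\sup_{t\in T}\bigl|Z(\pi_k(t)) - Z(\pi_{k-1}(t))\bigr|\right] \lesssim K\eps_k\sqrt{\log\mathcal{N}(T,\rho,\eps_k)}.
\end{equation*}

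Summing over $k\ge 1$ and recognizing that, by the monotonicity of $u\mapsto\mathcal{N}(T,\rho,u)$, each summand is dominated by a constant multiple of $\int_{\eps_{k+1}}^{\eps_k}\sqrt{\log\mathcal{N}(T,\rho,u)}\,{\rm d}u$, the series telescopes into $\int_0^{\eps_0}\sqrt{\log\mathcal{N}(T,\rho,u)}\,{\rm d}u = \int_0^1\sqrt{\log\mathcal{N}(T,\rho,u)}\,{\rm d}u$. The base term $\sup_t|Z(\pi_0(t))|$ is absorbed into the constant using the uniform bound $\sup_t|Z(t)|\le 1$, and $\eps_0 = 1$ is precisely the correct upper limit of integration given this hypothesis. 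The main obstacle will be the careful bookkeeping of the multiplicative constants through the dyadic-to-integral conversion, together with the justification—via continuity arising from the sub-Gaussian increment condition on a totally bounded set—that the supremum may be taken over a countable dense subset so that the telescoping identity and the finite union bound underlying the sub-Gaussian maximal inequality are both rigorously applicable.
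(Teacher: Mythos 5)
The paper does not prove this proposition at all: it is quoted as a known result, with a pointer to Theorem 5.22 of \cite{SLT}, and is then applied on each dyadic shell $A_k$. So there is no in-paper argument to compare against; what you have written is the standard Dudley chaining proof, and in outline it is the right one. The telescoping along nearest points in dyadic nets, the count of at most $\mathcal{N}(T,\rho,\eps_k)^2$ pairs per level, the sub-Gaussian maximal inequality giving a contribution of order $K\eps_k\sqrt{\log \mathcal{N}(T,\rho,\eps_k)}$ per level, and the dyadic-to-integral comparison via monotonicity of $u\mapsto\mathcal{N}(T,\rho,u)$ are all correct, as is the reduction of the supremum to a countable dense set. (One small point: at levels where $\mathcal{N}(T,\rho,\eps_k)=1$ you should take $\pi_k(t)=\pi_{k-1}(t)$ so the increment vanishes identically; otherwise the $\sqrt{\log(2N)}$ versus $\sqrt{\log N}$ discrepancy leaves a term not controlled by the entropy integral.)

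The one step I would not accept as written is the base term. You cannot ``absorb $\sup_t|Z(\pi_0(t))|$ into the constant'' using only $\sup_t|Z(t)|\le 1$: that produces an additive term of order $1$, whereas the right-hand side $CK\int_0^1\sqrt{\log\mathcal{N}(T,\rho,u)}\,\diff u$ can be arbitrarily small (small $K$, or small entropy), so an additive $O(1)$ cannot be hidden inside a multiplicative constant. In the paper's application $K\sim n^{-1/2}$, so this shortcut would destroy the very rate the proposition is invoked to deliver. The standard repair is to anchor the chain at a single $t_0$, bound $\bE\bigl[\sup_t|Z(t)-Z(t_0)|\bigr]$ by the entropy integral, and control $\bE|Z(t_0)|$ separately (in the application it is $O(n^{-1/2})=O(K)$ by a variance computation). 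To be fair, this difficulty is partly inherited from the statement itself, which as written fails already when $T$ is a singleton and $Z(t_0)$ is a nontrivial centered variable bounded by $1$; but a complete proof has to confront the anchor term rather than wave it into the constant.
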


We use the proposition with a dyadic shell $A_k$ and a corresponding metric $\rho$ to be determined. Our main theorem is as follows.
\begin{theorem}[Bound on the Rademacher complexity]
    Given a set of $n$ data samples $S=\left\{x_1,x_2,\ldots,x_n\right\}$ with $\norm{x_i}_\infty\le 1$, we have
    \begin{equation*}
        \cR_n\left(\cF_Q\right)\le CQ\sqrt{\frac{d}{n}}.
    \end{equation*}
\end{theorem}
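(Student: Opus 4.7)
The plan is to convert the Rademacher complexity into the supremum of a weighted empirical Fourier process, verify that this process is sub-Gaussian, and then control the supremum by a dyadic peeling in frequency together with Dudley's entropy integral on each shell.

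I would begin with Fourier duality. For every $f\in\cF_Q$, Fourier inversion and Fubini's theorem give
$$\frac{1}{n}\sum_{i=1}^n\sigma_i f(x_i)=\int_{\bR^d}\hat f(\xi)\,\tilde G(\xi)\,\diff\xi,\qquad \tilde G(\xi):=\frac{1}{n}\sum_{i=1}^n\sigma_i e^{2\pi i\xi\cdot x_i}.$$
Factoring $|\hat f(\xi)|=\bigl[\,|\hat f(\xi)|\log_2(2+\norm{\xi}_1)\bigr]/\log_2(2+\norm{\xi}_1)$ and pulling the weight outside the integral yields
$$\sup_{f\in\cF_Q}\Bigl|\tfrac{1}{n}\sum_{i=1}^n\sigma_i f(x_i)\Bigr|\le Q\,\sup_{\xi\in\bR^d}\frac{|\tilde G(\xi)|}{\log_2(2+\norm{\xi}_1)},$$
so the remaining task is to bound $\bE_\sigma\sup_\xi |\tilde G(\xi)|/\log_2(2+\norm{\xi}_1)$.

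To apply Proposition~\ref{prop:dudley}, I would verify its hypotheses on $\tilde G$. Since $|e^{is}-e^{it}|\le|s-t|$ and $\norm{x_i}_\infty\le 1$, each summand of $\tilde G(\xi)-\tilde G(\xi')$ is bounded in modulus by $2\pi\norm{\xi-\xi'}_1$, so Hoeffding's inequality gives sub-Gaussian increments with metric $\rho(\xi,\xi'):=\norm{\xi-\xi'}_1/\sqrt n$ and an absolute constant $K$; moreover $|\tilde G(\xi)|\le 1$ pointwise. I would then peel $\bR^d$ into dyadic shells $A_0=\{\norm{\xi}_1\le 1\}$ and $A_k=\{2^{k-1}\le\norm{\xi}_1<2^k\}$ for $k\ge 1$, on which $\log_2(2+\norm{\xi}_1)\asymp\max(k,1)$ and the standard volume estimate $\log\cN(A_k,\norm{\cdot}_1,\eps)\lesssim d\log(2^k/\eps)$ holds. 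Applying Dudley on each shell (after rescaling $\rho$ so that the diameter of $A_k$ is at most $1$) produces
$$\bE_\sigma\sup_{\xi\in A_k}|\tilde G(\xi)|\lesssim \min\bigl\{1,\ \sqrt d\,2^k/\sqrt n\bigr\},$$
and the union bound $\sup_\xi\le\sum_k\sup_{A_k}$ combined with the weight $1/\max(k,1)$ gives, for the cross-over scale $k^\star\asymp\tfrac12\log_2(n/d)$,
$$\bE_\sigma\sup_\xi\frac{|\tilde G(\xi)|}{\log_2(2+\norm{\xi}_1)}\lesssim\sum_{k=0}^{k^\star}\frac{\sqrt d\,2^k}{\max(k,1)\sqrt n}+\sum_{k>k^\star}\frac{1}{k}.$$

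The main obstacle is balancing these two sums. The low-frequency block is geometric and dominated by its boundary term $\sqrt d\,2^{k^\star}/(k^\star\sqrt n)\asymp\sqrt{d/n}/\log(n/d)$, which fits comfortably within the target. The high-frequency tail is the delicate part: the crude bound $|\tilde G|\le 1$ alone would make $\sum_{k>k^\star}1/k$ divergent. To close this gap I would sharpen the per-shell estimate by exploiting the pointwise sub-Gaussianity $\bE|\tilde G(\xi)|^2=1/n$: a union bound over a coarse $\ell^1$-net of $A_k$ at scale $\eps\asymp\sqrt{dk/n}$, together with the $2\pi$-Lipschitz continuity of $\tilde G$ in $\norm{\cdot}_1$, yields the improved estimate $\bE_\sigma\sup_{A_k}|\tilde G|\lesssim\sqrt{(d+k)/n}$, after which division by the weight $k$ and summation produce a convergent tail and deliver the claimed bound $\cR_n(\cF_Q)\le CQ\sqrt{d/n}$.
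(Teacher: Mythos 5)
Your opening steps (Fourier duality, the Hoeffding-based sub-Gaussian increment bound in the metric $\norm{\xi-\eta}_1$, and the dyadic shell decomposition) coincide with the paper's. The divergence is in how the shells are recombined, and this is where your argument has a genuine gap. The paper keeps a supremum over the shell index, bounding the quantity by $\sup_{k}\frac{2}{k+2}\,\bE_\sigma\bigl[\sup_{\xi\in A_k}\norm{Z(\xi)}\bigr]$, so that only the single worst shell enters and the per-shell Dudley estimate $\sqrt{d(k+2)/n}$ is tamed by the weight $\frac{1}{k+2}$. You instead use the union bound $\sup_\xi \le \sum_k \sup_{A_k}$, which forces you to \emph{sum} the weighted per-shell contributions over all $k$. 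This cannot be repaired by sharpening the per-shell estimate: for any fixed $\xi_0\in A_k$, Khintchine's inequality gives $\bE_\sigma\norm{\tilde G(\xi_0)}\ge c\left(\bE_\sigma\norm{\tilde G(\xi_0)}^2\right)^{1/2}=c/\sqrt n$, hence $\bE_\sigma\sup_{\xi\in A_k}\norm{\tilde G(\xi)}\ge c/\sqrt n$ for every shell, and therefore
\begin{equation*}
\sum_{k}\frac{1}{k+2}\,\bE_\sigma\Bigl[\sup_{\xi\in A_k}\norm{\tilde G(\xi)}\Bigr]\;\ge\;\frac{c}{\sqrt n}\sum_{k}\frac{1}{k+2}\;=\;\infty .
\end{equation*}
In particular your claimed tail $\sum_{k>k^\star}k^{-1}\sqrt{(d+k)/n}\gtrsim n^{-1/2}\sum_{k}k^{-1/2}$ diverges rather than converges. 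There is also an arithmetic slip in the low-frequency block: with $2^{k^\star}\asymp\sqrt{n/d}$ the boundary term $\sqrt d\,2^{k^\star}/(k^\star\sqrt n)$ equals $1/k^\star\asymp 1/\log(n/d)$, not $\sqrt{d/n}/\log(n/d)$, and $1/\log(n/d)$ is in general far larger than the target $\sqrt{d/n}$.

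The fix is structural, not quantitative: you must avoid paying for every shell simultaneously. The paper does this by interchanging the expectation with the supremum over $k$ (an interchange that itself deserves justification, since $\bE\sup_k \ge \sup_k\bE$ in general; a rigorous version would combine $\sup_k\bE[\cdot]$ with a concentration argument showing each $\sup_{A_k}\norm{\tilde G}$ fluctuates by only $O(1/\sqrt n)$ around its mean, via bounded differences, before taking the maximum over the $O(\sqrt{n/d})$ relevant shells). Either way, the plain union bound over infinitely many shells that your proposal relies on is provably too lossy to yield $CQ\sqrt{d/n}$.
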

\begin{proof}
    By the Fourier inversion theorem, we have
    \begin{equation*}
        f\left(x\right) = \int_{\bR^d}\hat{f}\left(\xi\right) e^{2\pi i\xi\cdot x}\diff\xi.
    \end{equation*}
    Define the weight $\psi_{\xi}\left(x\right)$ and measure $\diff\mu\left(\xi\right)$ by
    \begin{align*}
        \diff\mu\left(\xi\right) &= \log_2\left(2+\norm{\xi}_1\right)\diff\xi,\\
        \psi_{\xi}\left(x\right) &= \left(\log_2\left(2+\norm{\xi}_1\right)\right)^{-1}e^{2\pi i\xi\cdot x},
    \end{align*}
    and we have the representation
    \begin{align*}
        f\left(x\right) &= \int_{\bR^d}\hat{f}\left(\xi\right)\psi_{\xi}\left(x\right)\diff\mu\left(\xi\right),\\
        \int_{\bR^d}\norm{\hat{f}\left(\xi\right)}\diff\mu\left(\xi\right) &= \int_{\bR^d} \log_2\left(2+\norm{\xi}_1\right)\norm{\hat{f}\left(\xi\right)}\diff\xi.
    \end{align*}
    Decompose the frequency space $\bR^d$ into dyadic shells
    \begin{align*}
        A_k &\coloneqq \left\{\xi\in\bR^d : 2^k\le \norm{\xi}_1 < 2^{k+1}\right\},\\
        A_{-1} &= \left\{\xi\in\bR^d : \norm{\xi}_1 < 1\right\},
    \end{align*}
    and we will use the following inequalities on each $A_k$:
    \begin{equation*}
        \frac{1}{\log_2\left(2+\norm{\xi}_1\right)}\le \frac{1}{\log_2\left(2+2^k\right)} \le \frac{2}{k+2}.
    \end{equation*}
    Then, by $L^1(\diff\mu)$ to $L^\infty(\diff\mu)$ duality, we have
    \begin{equation}\label{eqn:RC_main}
        \begin{aligned}
            \cR_n(\cF_Q) &=\bE_{\sigma}\left[\sup_{f\in \cF_Q}\frac{1}{n}\sum_{j=1}^n\sigma_j f\left(x_j\right)\right]\\
            &\le \bE_{\sigma}\left[\sup_{\Norm{\hat{f}}_{L^1(\bR^d)}\le Q}\frac{1}{n}\sum_{j=1}^n\sigma_j \int_{\bR^d}\hat{f}\left(\xi\right)\psi_{\xi}\left(x_j\right)\diff\mu\left(\xi\right)\right]\\
            &\le \bE_{\sigma}\left[\sup_{\Norm{\hat{f}}_{L^1(\bR^d)}\le Q} \int_{\bR^d}\hat{f}\left(\xi\right)\left(\frac{1}{n}\sum_{j=1}^n\sigma_j\psi_{\xi}\left(x_j\right)\right)\diff\mu\left(\xi\right)\right]\\
            &\le Q\bE_{\sigma}\left[\sup_{\xi\in\bR^d}\left(\frac{1}{n}\sum_{j=1}^n\sigma_j \psi_{\xi}\left(x_j\right)\right)\right]\\
            &\le Q\bE_{\sigma}\left[\sup_{k\ge -1}\sup_{\xi\in A_k}\frac{1}{\log_2\left(2+\norm{\xi}_1\right)}\left\vert \frac{1}{n}\sum_{j=1}^n \sigma_j e^{2\pi i\xi\cdot x_j}\right\vert\right]\\
            &\le Q\sup_{k\ge -1}\frac{2}{k+2}\bE_{\sigma}\left[\sup_{\xi\in A_k}\left\vert \frac{1}{n}\sum_{j=1}^n \sigma_j e^{2\pi i\xi\cdot x_j}\right\vert\right].
        \end{aligned}
    \end{equation}
    We use Proposition~\ref{prop:dudley} to bound the real and imaginary parts of the Rademacher process
    \begin{equation*}
        Z\left(\xi\right)\coloneqq \frac{1}{n}\sum_{j=1}^n\sigma_j e^{2\pi i\xi\cdot x_j}.
    \end{equation*}
    First, Hoeffding's inequality (Proposition 2.10 in \cite{SLT}) with random variables 
    \begin{equation*}
        X_j = \frac{1}{n}\sigma_j\left(\cos\left(2\pi \xi\cdot x_j\right)-\cos\left(2\pi\eta\cdot x_j\right)\right),
    \end{equation*}
    with bounds
    \begin{equation*}
        \norm{X_j} \le \frac{2\pi}{n}\norm{\left(\xi-\eta\right)\cdot x_j} \le \frac{2\pi}{n}\norm{\xi-\eta}_1
    \end{equation*}
    implies that the real part has sub-Gaussian increments with respect to $\rho\left(\xi,\eta\right)=\norm{\xi-\eta}_1$:
    \begin{align*}
        \bP\left(\left\vert\textrm{Re}\left(Z\left(\xi\right)-Z\left(\eta\right)\right)\right\vert\ge t\right) &= \bP\left(\left\vert \textrm{Re}\left(\sum_{j=1}^n\frac{1}{n}\sigma_j\left(e^{2\pi i \xi\cdot x_j}-e^{2\pi i\eta\cdot x_j}\right)\right)\right\vert\ge t\right)\\
        &= \bP\left(\left\vert \sum_{j=1}^n X_j - \bE_\sigma\left[X_j\right]\right\vert\ge t\right)\\
        &\le 2\exp\left(-\frac{t^2}{\left(\sum_{j=1}^n \frac{2\pi}{n}\norm{\xi-\eta}_1\right)^2}\right)\\
        &= 2\exp\left(-\frac{nt^2}{4\pi^2\rho\left(\xi,\eta\right)^2}\right).
    \end{align*}
    On the other hand, since $S=\left\{ i\in\bZ : -2^{k+1} < i < 2^{k+1}\right\}$ is a 1-net of $\left(-2^{k+1},2^{k+1}\right)\subset \bR$, $S^d$ is a $d$-net of $A_k$ with respect to $\rho$. Hence, we have
    \begin{equation*}
        \cN\left(A_k,\rho,u\right) \le \left(\frac{2\pi}{u}\right)^d\norm{S^d} \le \left(\frac{2\pi2^{k+2}}{u}\right)^d,
    \end{equation*}
    and Proposition~\ref{prop:dudley} yields
    \begin{equation}\label{eqn:RC_real}
        \bE_\sigma\left[\sup_{\xi\in A_k}\left\vert\textrm{Re} Z\left(\xi\right)\right\vert\right] \le C\int_0^1\sqrt{\frac{\log\cN\left(A_k,\rho,u\right)}{n}}\diff u\le C\sqrt{\frac{d}{n}}\int_0^1\sqrt{\log\left(\frac{2\pi 2^{k+2}}{u}\right)}\diff u.
    \end{equation}
    With the change of variable $a_k=\log\left(2\pi 2^{k+2}\right)$ and $v=a_k-\log u$, we have
    \begin{equation}\label{eqn:shell-contrib}
        \begin{aligned}
        \int_0^1\sqrt{\log\left(\frac{2\pi2^{k+2}}{u}\right)}\diff u &= \int_0^1\sqrt{v}\diff u \\
        &= \int_{a_k}^{\infty} \sqrt{v} e^{a_k-v}\diff v \\
        &= \int_0^\infty \sqrt{t+a_k}e^{-t}\diff t \qquad (t=v-a_k) \\
        &\le \int_0^\infty \sqrt{a_k}\sqrt{t+1}e^{-t}\diff t \\
        &\le C\sqrt{k+2}.
        \end{aligned}
    \end{equation}
    Plugging \eqref{eqn:shell-contrib} and \eqref{eqn:RC_real} gives
    \begin{equation*}
        \sup_{k\ge -1}\frac{2}{k+2}\bE_{\sigma}\left[\sup_{\xi\in A_k}\left\vert \textrm{Re}Z\left(\xi\right)\right\vert\right] \le \sup_{k\ge-1}\frac{2}{\sqrt{k+2}}C\sqrt{\frac{d}{n}} \le C\sqrt{\frac{d}{n}}.
    \end{equation*}
    Similarly, Hoeffding's inequality with random variables
    \begin{equation*}
        Y_j = \frac{1}{n}\sigma_j\left(\sin\left(2\pi \xi\cdot x_j\right)-\sin\left(2\pi\eta\cdot x_j\right)\right),
    \end{equation*}
    and the following steps bound the imaginary part:
    \begin{equation*}
        \sup_{k\ge -1}\frac{2}{k+2}\bE_{\sigma}\left[\sup_{\xi\in A_k}\left\vert \textrm{Im}Z\left(\xi\right)\right\vert\right] \le C\sqrt{\frac{d}{n}}.
    \end{equation*}
    Combining all the inequalities, we attain
    \begin{align*}
        \cR_n\left(\cF_Q\right) &\le Q\sup_{k\ge-1}\frac{2}{k+2}\bE_{\sigma}\left[\sup_{\xi\in A_k}\left\vert Z\left(\xi\right)\right\vert \right]\\
        &\le Q\sup_{k\ge-1}\frac{2}{k+2}\left(\bE_{\sigma}\left[\sup_{\xi\in A_k}\left\vert \textrm{Re}Z\left(\xi\right)\right\vert \right] + \bE_{\sigma}\left[\sup_{\xi\in A_k}\left\vert \textrm{Im}Z\left(\xi\right)\right\vert \right]\right)\\
        &\le CQ\sqrt{\frac{d}{n}}.
    \end{align*}
\end{proof}
As noted in \cite{bartlett2002rademacher} and \cite{ma2022barron}, this implies that functions in the log-Barron spaces can be efficiently learned.

\section{$L^2$ convergence}\label{sec:L2_convergence}
As in the previous sections, a simple observation gives that if $f\in \sB^{\log}$, it follows that $\hat{f}\in L^1(\bR^d)$ and hence, $f\in C(\bR^d)\cap L^{\infty}(\bR^d)$. In particular, $f\in \sB^{\log}$ implies $L^2(\Omega)$ for any compact set $\Omega$. In this section, we establish a dimension-independent $L^2$ error bound of a deep $\relu$ network on $\Omega$ for a target function $f\in \sB^{\log}$, as the number of layers grows.

Consistent with prior research on Barron spaces, the idea is to represent the target function as the expectation of a random variable, and the expected value is approximated using the sample mean of independent and identically distributed random samples. As the number of Monte Carlo samples increases, the empirical mean converges to the true expectation with high probability, demonstrating the existence of realizations that achieve a sufficiently small approximation error. Similarly, if neural networks are constructed to realize these samples, the ensemble, defined as the empirical mean of such sub-networks, provides an accurate approximation of the target function. The following lemma is then employed to construct a single deep narrow network that aggregates these realizations, thereby establishing the desired approximation result.
\begin{lemma}\label{lemma:wide_to_deep}
    Assume that $\Omega\subset\bR^d$ is compact, and let $m\in\bN$, $L_i,N_i\in\bN$ and $F_i\in\cN_{L_i,N_i}$ be given for $i=1,\ldots,m$. Then there exists a $\left(L,d+N+1\right)$-network $F$ such that
    \begin{equation*}
        F\left(x\right) = \frac{1}{m}\sum_{i=1}^m F_i\left(x\right), \quad \text{ for all }x\in\Omega,
    \end{equation*}
    where $L=\sum_i L_i$ and $N=\max_i N_i$.
\end{lemma}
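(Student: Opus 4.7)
The plan is to concatenate the $m$ subnetworks along the depth direction, using $d+1$ reserved neurons in every layer as a ``highway'' that carries the input $x$ and a running average, and the remaining $N$ neurons to compute whichever $F_i$ is currently active. Since $\Omega$ is compact and the $F_i$ are continuous, fix constants $R>0$ such that $x+R\mathbf{1}_d\ge 0$ for all $x\in\Omega$ and $M_i\ge \sup_{x\in\Omega}|F_i(x)|$. Define the shifted input $\tilde x:=x+R\mathbf{1}_d$. The key observation is that $\relu$ is the identity on $[0,\infty)$, so any nonnegative quantity placed in the hidden state is propagated losslessly through subsequent ReLU layers even though the network is narrow.

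Concretely, at the end of the $\ell$-th ReLU layer inside block $i$ the hidden state is arranged as
\begin{equation*}
\bigl(\tilde x,\ z^{(F_i)}_\ell,\ C_{i-1}\bigr)\in\bR^{d+N+1},
\end{equation*}
where $z^{(F_i)}_\ell\in\bR^{N_i}$ is the $\ell$-th hidden activation of $F_i$ on input $x$, the unused middle slots are zero-padded, and
\begin{equation*}
C_{i-1}:=\tfrac{1}{m}\sum_{j<i}\bigl(F_j(x)+M_j\bigr)\ge 0
\end{equation*}
is a shifted running sum. Inside a block, the $d$ highway coordinates and the carry coordinate are propagated transparently (both being nonnegative and hence preserved by $\relu$), while the $N_i$ middle coordinates perform the internal ReLU updates of $F_i$; wherever $x$ is required, it is recovered from the highway as $\tilde x-R\mathbf{1}_d$ by absorbing a constant into the affine bias of that layer.

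At the transition between blocks $i$ and $i+1$ we fuse three operations into a single ReLU layer: the final linear head $W^{(F_i)}_{L_i}$ of $F_i$, the additive shift $M_i/m$, and the first affine map of $F_{i+1}$ applied to the highway. The carry advances as
\begin{equation*}
C_{i-1}+\tfrac{1}{m}\relu\bigl(W^{(F_i)}_{L_i}z^{(F_i)}_{L_i}+M_i\bigr)=C_{i-1}+\tfrac{1}{m}\bigl(F_i(x)+M_i\bigr)=C_i,
\end{equation*}
since $F_i(x)+M_i\ge 0$ by the choice of $M_i$, while simultaneously $z^{(F_{i+1})}_1$ is computed from the highway. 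Performing this for $i=1,\dots,m$ costs exactly $L_i$ ReLU layers per block, totalling $L=\sum_i L_i$ layers with hidden width $d+N+1$ throughout. The final linear head of the aggregated network linearly combines $C_{m-1}$, the output head of $F_m$, and a constant correction to produce $\tfrac{1}{m}\sum_{i=1}^m F_i(x)$ on $\Omega$.

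The only delicate point is the bookkeeping around the shifts: one verifies that $\tilde x$ and every partial sum $C_i$ remain nonnegative on $\Omega$, so that ReLU transmits them without loss, and that the accumulated bias $\tfrac{1}{m}\sum_j M_j$ is cancelled precisely by the constant correction in the final layer. Compactness of $\Omega$ together with continuity of each $F_i$ furnishes the constants $R$ and $M_i$, and the fusion of the linear head of $F_i$ with the first ReLU layer of $F_{i+1}$ is what keeps the total ReLU depth at $\sum_i L_i$ rather than $\sum_i(L_i+1)$.
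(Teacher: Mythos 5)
Your construction is correct and is essentially the paper's own proof: concatenate the blocks in depth, reserve $d$ coordinates as a highway carrying the (shifted, hence nonnegative) input, $N$ coordinates for the currently active subnetwork, and one coordinate for the running sum, fusing the output head of $F_i$ into the first affine map of $F_{i+1}$ so that the total ReLU depth is $\sum_i L_i$. If anything, you are more careful than the paper: the paper maps $\Omega$ into $[0,1]^d$ to keep the highway nonnegative but feeds the partial sums $\frac{1}{m}\sum_{j<i}F_j(x)$ through $\relu$ without the offsets $M_i$ you introduce to guarantee the carry coordinate is never clipped.
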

\begin{proof}
    With an affine transform if needed, we may assume $\Omega \subset [0,1]^d$. Moreover, without loss of generality, let $N = N_1$. Suppose that, for each $i$, $W_{i,l}$ and $\mathbf{b}_{i,l}$ denote the weight matrices and bias vectors, respectively, for the network $F_i$:
    \begin{equation*}
        F_i\left(x\right) = W_{i,L_i}\cdot\relu\left(W_{i,L_i-1}\cdot\relu\left(\cdots\right)+\mathbf{b}_{i,L_i-1}\right).
    \end{equation*}
    For $0< j< L_1$, set the weights $W_0\in\bR^{d\times(d+N+1)}$, $W_j\in\bR^{(d+N+1)\times(d+N+1)}$ and biases $\mathbf{b}_j\in\bR^{d+N+1}$ so that
    \begin{equation*}
        W_0 = \begin{pmatrix}
            \textrm{Id}_d \\ W_{1,0} \\ \mathbf{0}
 ,       \end{pmatrix}, \quad
        W_j = \begin{pmatrix}
            \textrm{Id}_d & \mathbf{0} & \mathbf{0}\\ \mathbf{0} & W_{1,j} & \mathbf{0}\\ \mathbf{0} & \mathbf{0} & 1
        \end{pmatrix}, \quad \mathbf{b}_j = \begin{pmatrix}
            \mathbf{0}_d \\ \textbf{b}_{1,j-1} \\ 0
        \end{pmatrix}.
    \end{equation*}
    For each $i\ge2$ and $0< j< L_i$, we set the weights as
    \begin{equation*}
        W_{\sum_{k=1}^{i-1}L_k} = \begin{pmatrix}
            \textrm{Id}_d & \mathbf{0} & \mathbf{0} \\ W_{i,0} & \mathbf{0} & \mathbf{0} \\ \mathbf{0} & \frac{1}{m}W_{i-1,L_{i-1}} & 1
        \end{pmatrix}, 
        \quad W_{\sum_{k=1}^{i-1}L_k+j} = \begin{pmatrix}
            \textrm{Id}_d & \mathbf{0} & \mathbf{0}\\ \mathbf{0} & W_{i,j} & \mathbf{0}\\ \mathbf{0} & \mathbf{0} & 1
        \end{pmatrix},
    \end{equation*}
    and the biases as
    \begin{equation*}
        \mathbf{b}_{\sum_{k=1}^{i-1}L_i+j} = \begin{pmatrix}
            \mathbf{0} \\ \textbf{b}_{i,j-1} \\ 0
        \end{pmatrix}.
    \end{equation*}
    Then, it is straightforward to verify that the network $F$ given by
    \begin{equation*}
        \mathbf{z}_0 = x,\quad \mathbf{z}_l = \relu\left(W_l\mathbf{z}_{l-1}+\mathbf{b}_l\right), \quad \mathbf{z}_L = W_L\mathbf{z}_{L-1}
    \end{equation*}
    for $\ell=1,\ldots, L$, is the desired $(L,d+N+1)$-network.
\end{proof}

This lemma allows us to consider an ensemble model consisting of sub-networks of different sizes. While high-frequency components would normally necessitate deeper sub-networks for accurate approximation, the amplitude decay of $f$ for $f\in \sB^{\log}$ allows us to use fewer deep sub-networks and more shallow ones. As a result, the total depth of all sub-networks can be bounded. We first show the result for a compact set in the unit cube $\left[0,1\right]^d$ and generalize to a general case.
\begin{theorem}\label{thm:L2_convergence_unit}
    Suppose that $f\in \sB^{\log}$. For any $m\in\bN$ and a compact set $\Omega\subset\left[0,1\right]^d$, there exist $\relu$ networks $F_1,\ldots,F_m$ of width 3 and depths $L_1,\cdots,L_m$ such that
    \begin{equation*}
        \left\Vert f - \frac{1}{m}\sum_{i=1}^m F_i \right\Vert_{L^2\left(\Omega\right)}^2\le \frac{3\pi^4}{m}\norm{\Omega}\Norm{f}_{\sB^0}^2
    \end{equation*}
    and
    \begin{equation*}
        \sum_{i=1}^m L_i \le 5 m\frac{\Norm{f}_{\sB^{\log}}}{\Norm{f}_{\sB^0}}.
    \end{equation*}
\end{theorem}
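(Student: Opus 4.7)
The plan is to execute a Monte Carlo argument in the spirit of Barron's original proof, but lifted to the deep narrow regime by expressing each sampled cosine as an $O(\log_2\norm{\xi}_1)$-fold iteration of the triangle map $\beta$. The Monte Carlo rate $1/m$ combined with the depth-vs-frequency trade-off furnished by Lemma~\ref{lemma:cos_2pint} and Lemma~\ref{lemma:beta_composition} will produce both announced bounds from the same random draw.

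First I would write $\hat f(\xi)=\norm{\hat f(\xi)}e^{i\theta(\xi)}$ and use Fourier inversion to represent $f(x)=\Norm{f}_{\sB^0}\bE_{\xi\sim p}\bigl[\cos(2\pi\xi\cdot x+\theta(\xi))\bigr]$, where $p(\xi):=\norm{\hat f(\xi)}/\Norm{f}_{\sB^0}$ is a probability density on $\bR^d$. For each $\xi$ I choose $k(\xi):=\lceil\log_2(2+\norm{\xi}_1)\rceil$ and an affine map $t(\xi,x):=(\xi\cdot x+c(\xi))/2^{k(\xi)}$, with $c(\xi)$ chosen so that $t(\xi,x)\in[0,1]$ for every $x\in[0,1]^d$ and so that $\cos(2\pi\cdot 2^{k(\xi)}\,t(\xi,x))=\cos(2\pi\xi\cdot x+\theta(\xi))$; the shift $c(\xi)$ simultaneously absorbs the translation that makes $\xi\cdot x$ non-negative on $[0,1]^d$ and the phase $\theta(\xi)$ modulo integers. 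Applying Lemma~\ref{lemma:cos_2pint} to the resulting high-frequency cosine then converts it into an integral over $r\in[-\tfrac12,\tfrac12]$, yielding
\[
f(x)=-2\pi^2\Norm{f}_{\sB^0}\,\bE_{(\xi,r)}\bigl[\cos(2\pi r)\,\gamma_{,2^{k(\xi)}}(t(\xi,x),r)\bigr]
\]
with $r$ uniform on $[-\tfrac12,\tfrac12]$ drawn independently of $\xi$.

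Next I would draw $m$ i.i.d.\ samples $(\xi_i,r_i)$ and set $F_i(x):=-2\pi^2\Norm{f}_{\sB^0}\cos(2\pi r_i)\,\gamma_{,2^{k(\xi_i)}}(t(\xi_i,x),r_i)$. By Lemma~\ref{lemma:beta_composition}, each $F_i$ is realized by a width-$3$ $\relu$ network of depth $L_i\le 5\log_2(2+\norm{\xi_i}_1)$: the first layer applies the affine map $x\mapsto t(\xi_i,x)$ on a single channel, $k(\xi_i)$ successive layers iterate $\beta(t)=\relu(2t)-2\relu(2t-1)$, and a constant-depth terminal block realizes the piecewise-linear $\gamma(\cdot,r_i)$ together with the output scalar $-2\pi^2\Norm{f}_{\sB^0}\cos(2\pi r_i)$. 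Because the integrand is uniformly bounded by $\pi^2\Norm{f}_{\sB^0}$ (using $\Norm{\gamma}_\infty\le\tfrac12$ and $\norm{\cos(2\pi r)}\le 1$), a Monte Carlo variance computation yields
\[
\bE\Bigl[\Norm{f-\tfrac1m\textstyle\sum_i F_i}_{L^2(\Omega)}^2\Bigr]\le \frac{\pi^4\,\norm{\Omega}\,\Norm{f}_{\sB^0}^2}{m},\qquad \bE\Bigl[\textstyle\sum_i L_i\Bigr]\le 5m\cdot\frac{\Norm{f}_{\sB^{\log}}}{\Norm{f}_{\sB^0}},
\]
the second bound using $\bE_{\xi\sim p}[\log_2(2+\norm{\xi}_1)]=\Norm{f}_{\sB^{\log}}/\Norm{f}_{\sB^0}$. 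A standard probabilistic-method argument---Markov's inequality applied to each of the two non-negative quantities, combined with a union bound---then produces a realization $(\xi_i,r_i)_{i=1}^m$ satisfying both bounds simultaneously, with the factor $3$ in $3\pi^4$ arising from the inflation needed to secure positive probability of joint success.

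The hard part will be the width-$3$ implementation of $F_i$. One must realize, in a single forward pass with only three neurons per layer, (i) the affine projection $x\mapsto t(\xi_i,x)$ (necessarily compressed into the very first layer), (ii) the $k(\xi_i)$-fold iteration of $\beta$ (each step using two $\relu$ gates), and (iii) a piecewise-linear terminal block for $\gamma(\cdot,r_i)$, all while preserving a bookkeeping channel that carries the running scalar from layer to layer without exceeding width $3$. A secondary subtlety is the phase handling for complex-valued $\hat f$: one must either split $F_i$ into real and imaginary contributions (which inflates the Monte Carlo constant) or absorb $\theta(\xi)$ into the shift $c(\xi)$, forcing $2^{k(\xi)}$ to be chosen just large enough to encode both the phase modulo $1$ and the translation that guarantees $t(\xi,x)\in[0,1]$.
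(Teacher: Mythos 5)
Your proposal follows essentially the same route as the paper: Fourier inversion with the phase absorbed into an integer shift, rescaling the frequency by $n_\xi=2^{\lceil\log_2(2+\norm{\xi}_1)\rceil}$ so that Lemma~\ref{lemma:cos_2pint} and Lemma~\ref{lemma:beta_composition} convert each sampled cosine into a depth-$O(\log_2(2+\norm{\xi}_1))$ width-$3$ network, followed by a Monte Carlo variance bound and a Markov-plus-union-bound argument to extract a single realization satisfying both estimates. The one slip is in the depth constant: you bound the per-sample depth by $5\log_2(2+\norm{\xi_i}_1)$, which already makes $\bE\left[\sum_i L_i\right]\le 5m\,\Norm{f}_{\sB^{\log}}/\Norm{f}_{\sB^0}$ equal to the target, leaving no room for the Markov inflation that your own union bound requires (with the error event inflated by $3$, the depth event must be inflated by more than $3/2$, landing you at $7.5m$ rather than $5m$). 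The paper avoids this by using the tighter per-sample bound $\lceil\log_2(2+\norm{\xi}_1)\rceil+1\le 2\log_2(2+\norm{\xi}_1)$ (valid since $\log_2(2+\norm{\xi}_1)\ge 1$), so that an inflation factor of $5/2$ on the depth and $3$ on the error yields joint success probability at least $\tfrac{2}{3}+\tfrac{3}{5}-1>0$ while matching the stated constants exactly.
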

\begin{proof}
    By the Fourier inversion theorem, we have
    \begin{align*}
        f\left(x\right)&=\int_{\bR^d}\hat{f}\left(\xi\right)e^{2\pi i\xi\cdot x}\,\diff \xi \\
        &=\int_{\bR^d}\left\vert \hat{f}\left(\xi\right)\right\vert e^{i\Arg \hat{f}\left(\xi\right)+2\pi\xi\cdot x} \diff\xi\\
        &= \int_{\bR^d}\left\vert\hat{f}\left(\xi\right)\right\vert\cos\left(2\pi\left(\xi\cdot x+\theta\left(\xi\right)\right)\right)\,\diff \xi,
    \end{align*}
    where $\theta\left(\xi\right)=\frac{1}{2\pi}\Arg\hat{f}\left(\xi\right) + k\left(\xi\right)$ for some $k\left(\xi\right)\in\bZ$. Specifically, given $\xi=\left(\xi^{(1)},\ldots,\xi^{(d)}\right)$ and $x\in\left[0,1\right]^d$, we consider $k\left(\xi\right)=\lceil\sum_{\xi^{(j)}<0}\left\vert\xi^{(j)}\right\vert-\frac{1}{2\pi}\Arg\hat{f}\left(\xi\right)\rceil$ so that $\xi\cdot x+\theta\left(\xi\right)\in\left[0,\norm{\xi}_1+1\right]$.

    Given $\xi$, we let $n_{\xi}=2^{\lceil\log_2\left(2+\norm{\xi}_1\right)\rceil}$ and $t_{\xi}\left(x\right)=\frac{1}{n_{\xi}}\left(\xi\cdot x+\theta\left(\xi\right)\right)$ to make $t_{\xi}\left(x\right)\in\left[0,1\right]$ and
    \begin{align*}
        f\left(x\right)&=\int_{\bR^d}\left\vert\hat{f}\left(\xi\right)\right\vert\cos\left(2\pi n_\xi t_\xi\left(x\right)\right)\,\diff \xi\\
        &= -2\pi^2 \int_{\bR^d}\left\vert\hat{f}\left(\xi\right)\right\vert \int_{-\frac{1}{2}}^{\frac{1}{2}} \cos\left(2\pi r\right)\gamma_{,n_\xi}\left(t_\xi\left(x\right),r\right) \diff r \diff\xi,
    \end{align*}
    where the second equality comes from Lemma~\ref{lemma:cos_2pint}.

    Next, we define a probability measure $\mu\left(\xi,r\right)$ on $\bR^d\times\bR$ and $F\left(\cdot;\xi,r\right):\bR^d\rightarrow\bR$ as follows:
    \begin{align*}
        \diff\mu\left(\xi,r\right) &= \frac{1}{\Norm{f}_{\sB^0}}\mathbf{1}_{\left[-\frac{1}{2},\frac{1}{2}\right]}\left(r\right) \norm{\hat{f}\left(\xi\right)} \diff\xi \diff r,\\
        F\left(x;\xi,r\right) &= -2\pi^2\Norm{f}_{\sB^0}\cos\left(2\pi r\right)\gamma_{,n_{\xi}}\left(t_{\xi}\left(x\right),r\right).
    \end{align*}
    Then $f$ is the expectation value of $F$ with respect to the probability measure $\diff\mu$:
    \begin{equation*}
        f\left(x\right)=\bE_{\mu}\left[F\left(x;\xi,r\right)\right].
    \end{equation*}
    Let $\left\{\left(\xi_i,r_i\right)\right\}_{i=1}^m$ be i.i.d. random samples and $\bar{F}$ be the sample mean of $F\left(x;\xi_i,r_i\right)$:
    \begin{equation*}
        \bar{F}\left(x\right)=\frac{1}{m}\sum_{i=1}^m F\left(x;\xi_i,r_i\right).
    \end{equation*}
    Now consider a random variable $X=\left\Vert f-\bar{F}\right\Vert_{L^2\left(\Omega\right)}^2$. Denoting the distribution of i.i.d. samples $\left\{\left(\xi_i,r_i\right)\right\}_{i=1}^m$ by $\mu^m$, we have
    \begin{align*}
        \bE_{\mu^m}\left[X\right] &= \int\int_{\Omega}\bigg\vert f\left(x\right)-\frac{1}{m}\sum_{i=1}^m F\left(x;\xi_i,r_i\right)\bigg\vert^2 \diff x \diff\mu^m\left(\left\{\xi_i,r_i\right\}\right)\\
        &= \int_{\Omega}\int \bigg\vert \bE_{\mu^m}\left[\frac{1}{m}F\left(x;\xi_i,r_i\right)\right]-\frac{1}{m}\sum_{i=1}^m F\left(x;\xi_i,r_i\right)\bigg\vert^2 \diff\mu^m\left(\left\{\xi_i,r_i\right\}\right) \diff x\\
        &= \int_{\Omega} \Var_{\mu^m}\bigg[\frac{1}{m}\sum_{i=1}^m F\left(x;\xi,r\right)\bigg] \diff x\\
        &= \frac{1}{m} \int_{\Omega} \Var_{\mu}\left[F\left(x;\xi,r\right)\right]  \diff x\\
        &\le \frac{1}{m}\int_{\Omega} \bE_{\mu}\left[F\left(x;\xi,r\right)^2\right] \diff x\\
        &\le \frac{1}{m}\int_{\Omega} \sup_{\xi,r} \left\vert F\left(x;\xi,r\right)\right\vert^2 \diff x\\
        &= \frac{\pi^4}{m} \norm{\Omega}\Norm{f}_{\sB^0}^2.
    \end{align*}
    Note here that for any $a>0$, Markov's inequality induces
    \begin{equation*}
        \bP_{\mu^m}\left[X\ge a\right] \le \frac{1}{a}\bE_{\mu^m}\left[X\right].
    \end{equation*}
    In particular, the choice $a=\left(2+\eps_1\right)\bE_{\mu^m}\left[X\right]$ implies that 
    \begin{equation}\label{eqn:Markov_L2error}
        \bP_{\mu^m}\left[ X < \frac{\left(2+\eps_1\right)\pi^4}{m}\norm{\Omega}\Norm{f}_{\sB^0}^2\right] \ge \bP_{\mu^m}\left[X < \left(2+\eps_1\right)\bE_{\mu^m}\left[X\right]\right] = \frac{1+\eps_1}{2+\eps_1}.
    \end{equation}

    On the other hand, once the sample $\left\{\left(\xi_i,r_i\right)\right\}^m_{i=1}$ is chosen, we see that
    \begin{equation*}
        F\left(x;\xi_i,r_i\right) = -2\pi^2\Norm{f}_{\sB^0}\cos\left(2\pi r_i\right) \gamma_{,n_{\xi_i}}\left(t_{\xi_i}\left(x\right),r_i\right)
    \end{equation*}
    is the composition of $\gamma_{,n_{\xi_i}}\left(\cdot,r_i\right)$ and the affine function 
    $t_{\xi_i}\left(x\right)$ up to constant. Thus, it suffices to construct a network that represents $\gamma_{,n_{\xi_i}}\left(\cdot, r_i\right)$ for each $i$.
    By Lemma~\ref{lemma:beta_composition}, we know
    \begin{equation*}
        \gamma_{,n_{\xi_i}}\left(\cdot,r_i\right) = \gamma_{, 2^{\lceil\log_2\left(2+\norm{\xi}_1\right)\rceil}}\left(\cdot,r_i\right) = \gamma\left(\cdot,r_i\right)\circ \overbrace{\beta\circ\beta\cdots\circ\beta}^{L_{\xi_i}},
    \end{equation*}
    where $L_{\xi}=\lceil\log_2\left(2+\norm{\xi}_1\right)\rceil$. The network for this construction is illustrated in Figure~\ref{fig:construction_single}. 
    \begin{figure}
        \centering
        \includegraphics[width=0.8\linewidth]{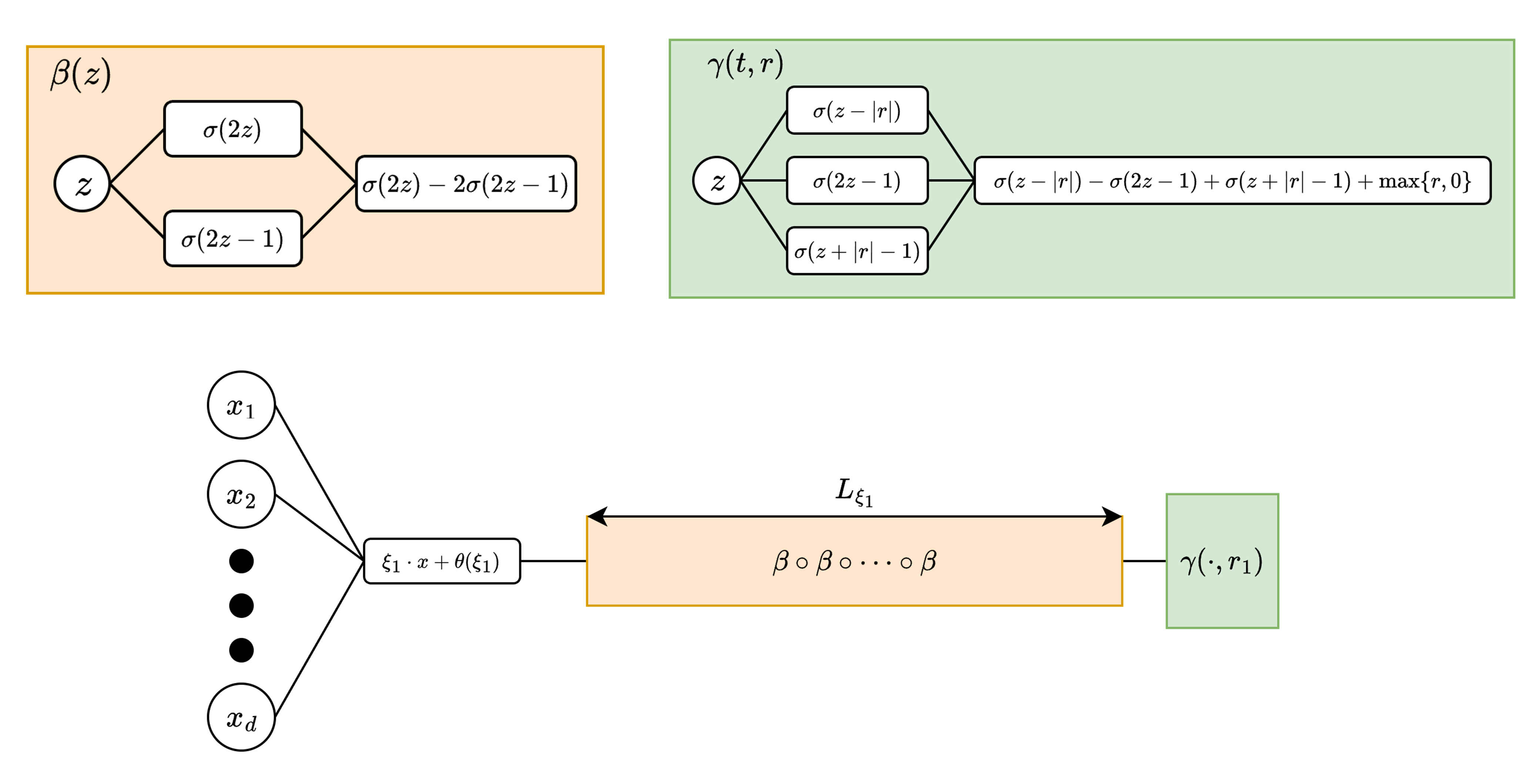}
        \caption{Construction of $\beta$, $\gamma\left(\cdot,r\right)$ and $F\left(x;\xi_i,r_i\right)$. $\sigma$ denotes the $\relu$ activation function.}
        \label{fig:construction_single}
    \end{figure}
    Since $\beta$ and $\gamma\left(\cdot,r_i\right)$ are represented by one linear layer with 2 and 3 nodes respectively, a network of width $3$ and depth $L_{\xi_i}+1$ constructs $\gamma_{,n_{\xi_i}}\left(\cdot,r_i\right)$. Then we can estimate the expected sum of depths $L=\sum_{i=1}^m L_{\xi_i}$ :
    \begin{align*}
        \bE_{\mu^m}\left[L\right] &= \bE_{\mu^m}\bigg[\sum_{i=1}^m L_{\xi_i}\bigg]\\
        &= m \bE_{\mu^m}\bigg[\frac{1}{m}\sum_{i=1}^m L_{\xi_i} \bigg]\\
        &= m \bE_{\mu}\left[L_{\xi}\right] \\
        &\le 2m \bE_{\mu}\left[ \log_2\left(2+\norm{\xi}_1\right)\right]\\
        &= \frac{2m}{\Norm{f}_{\sB^0}}\int_{\bR^d}\log_2\left(2+\norm{\xi}_1\right)\norm{\hat{f}\left(\xi\right)} \diff \xi\\
        &= 2m\frac{\Norm{f}_{\sB^{\log}}}{\Norm{f}_{\sB^0}}.
    \end{align*}
    Similarly to the inequality~\eqref{eqn:Markov_L2error}, Markov's inequality deduces
    \begin{equation}\label{eqn:Markov_L2depth}
        \bP_{\mu^m}\left[ L < \frac{2\left(2+\eps_2\right)m}{\Norm{f}_{\sB^0}}\Norm{f}_{\sB^{\log}}\right] \ge \bP_{\mu^m}\left[L < \left(2+\eps_2\right)\bE_{\mu^m}\left[L\right]\right] \ge 1-\frac{1}{2+\eps_2} = \frac{1+\eps_2}{2+\eps_2}.
    \end{equation}
    Both \eqref{eqn:Markov_L2error} and \eqref{eqn:Markov_L2depth} should hold with positive probability. To be more specific, we have
    \begin{align*}
        & \bP_{\mu^m}\left[\left(X < \frac{\left(2+\eps_1\right)\pi^4}{m}\norm{\Omega}\Norm{f}_{\sB^0}^2\right) \text{ and } \left(L < \frac{2\left(2+\eps_2\right)m}{\Norm{f}_{\sB^0}}\Norm{f}_{\sB^{\log}}\right)\right]\\
        &= \bP_{\mu^m}\left[X < \frac{\left(2+\eps_1\right)\pi^4}{m}\norm{\Omega}\Norm{f}_{\sB^0}^2\right] + \bP_{\mu^m}\left[L < \frac{2\left(2+\eps_2\right)m}{\Norm{f}_{\sB^0}}\Norm{f}_{\sB^{\log}}\right]\\
        &\hspace{4mm} - \bP_{\mu^m}\left[\left(X < \frac{\left(2+\eps_1\right)\pi^4}{m}\norm{\Omega}\Norm{f}_{\sB^0}^2\right) \text{ or } \left(L < \frac{2\left(2+\eps_2\right)m}{\Norm{f}_{\sB^0}}\Norm{f}_{\sB^{\log}}\right)\right]\\
        &\ge \frac{1+\eps_1}{2+\eps_1}+\frac{1+\eps_2}{2+\eps_2}-1\\
        &> 0.
    \end{align*}
    As $\eps_1,\eps_2$ can be arbitrary positive numbers, the choice $\eps_1=1$ and $\eps_2=0.5$ concludes the proof.
\end{proof}

\begin{remark}
    In the above proof, we adopt a statistical approach motivated by \cite{liao2025spectral}: represent $f$ as the expected value of a random variable, approximate the expectation by sample mean, and construct a network computing each sample. We define $\gamma$ in equation~\eqref{eqn:gamma_tr} so that each $\gamma\left(\cdot,r\right)$ is weakly differentiable and exactly represented by $\relu$ network of width $3$. This derives two major advantages: the Sobolev approximation and enabling merging into one network with a fixed width. The Sobolev approximation is discussed in the next section.
\end{remark}

Combined with the previous lemma, we attain an error bound for a deep narrow network on $\Omega\subset\left[0,1\right]^d$, whose convergence rate is independent of the input dimension.
\begin{corollary}\label{cor:L2_convergence_unit}
    Suppose $f\in \sB^{\log}$. For a compact set $\Omega\subset\left[0,1\right]^d$ and any $m\in\bN$, there exists a ReLU network $F$ of width $d+4$ and depth $6m \frac{\Norm{f}_{\sB^{\log}}}{\Norm{f}_{\sB^0}}$ such that
    \begin{equation*}
        \left\Vert f - F\right\Vert_{L^2\left(\Omega\right)} \le \frac{2\pi^2}{\sqrt{m}}\norm{\Omega}^{\frac{1}{2}}\Norm{f}_{\sB^0}.
    \end{equation*}
\end{corollary}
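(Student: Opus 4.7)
The plan is to read the corollary as a direct synthesis of Theorem~\ref{thm:L2_convergence_unit} with Lemma~\ref{lemma:wide_to_deep}: the theorem supplies an ensemble of width-$3$ sub-networks whose arithmetic mean approximates $f$ with the claimed squared $L^2$ rate and controlled total depth, while the lemma packages any such ensemble into a single deep narrow ReLU network of width $d + \max_i N_i + 1$ and depth equal to the sum of the sub-network depths. The only real content of the proof is to verify that the width, depth, and constants line up correctly.

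First I would apply Theorem~\ref{thm:L2_convergence_unit} to the given $m$ and $\Omega \subset [0,1]^d$, obtaining sub-networks $F_1,\ldots,F_m$ of width $3$ and depths $L_1,\ldots,L_m$ satisfying $\Norm{f - m^{-1}\sum_i F_i}_{L^2(\Omega)}^2 \le 3\pi^4 m^{-1}\norm{\Omega}\Norm{f}_{\sB^0}^2$ and $\sum_i L_i \le 5m\,\Norm{f}_{\sB^{\log}}/\Norm{f}_{\sB^0}$. Then I would invoke Lemma~\ref{lemma:wide_to_deep} with $N_i = 3$, so that $N = \max_i N_i = 3$, to stitch these sub-networks into a single ReLU network $F$ that computes $m^{-1}\sum_i F_i$ on $\Omega$, of width $d + N + 1 = d + 4$ and depth exactly $\sum_i L_i$.

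Two small bookkeeping steps finish the argument. For the error, I take square roots in the bound from Theorem~\ref{thm:L2_convergence_unit} and use $\sqrt{3}\pi^2 < 2\pi^2$ to conclude $\Norm{f - F}_{L^2(\Omega)} \le 2\pi^2 m^{-1/2}\norm{\Omega}^{1/2}\Norm{f}_{\sB^0}$. For the depth, the key observation is that $\log_2(2+\norm{\xi}_1) \ge 1$ for every $\xi \in \bR^d$, so $\Norm{f}_{\sB^{\log}} \ge \Norm{f}_{\sB^0}$ and the ratio $\Norm{f}_{\sB^{\log}}/\Norm{f}_{\sB^0}$ is at least one; this slack turns the $5m$ factor into $6m$ with room to spare, and one may pad with a few identity layers to reach the stated depth exactly. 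There is no genuine obstacle here: Theorem~\ref{thm:L2_convergence_unit} already contains the analytically nontrivial content via the Fourier representation, Markov's inequality, and the $\gamma$-composition construction, while Lemma~\ref{lemma:wide_to_deep} handles the purely architectural assembly. The step requiring the most care is the bookkeeping comparison between the $5m$ bound delivered by the probabilistic argument and the $6m$ bound claimed here, which rests on the trivial but essential inequality between the two Barron-type norms.
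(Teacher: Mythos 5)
Your proposal is correct and follows exactly the paper's route: the paper's own proof of this corollary is a one-line invocation of Theorem~\ref{thm:L2_convergence_unit} followed by the merging Lemma~\ref{lemma:wide_to_deep} with $N=3$, giving width $d+4$ and depth $\sum_i L_i$. Your additional bookkeeping (taking square roots with $\sqrt{3}\pi^2<2\pi^2$, and using $\Norm{f}_{\sB^{\log}}\ge\Norm{f}_{\sB^0}$ to absorb the slack between the $5m$ bound and the stated $6m$ depth) is exactly the implicit content the paper leaves unstated.
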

\begin{proof}
    We can obtain the desired result by merging $F_1,F_2,\ldots,F_m$ given in the Theorem~\ref{thm:L2_convergence_unit}, based on Lemma~\ref{lemma:wide_to_deep}. The construction is represented in Figure~\ref{fig:construction_deep}.
\end{proof}

\begin{figure}[h]
    \centering
    \includegraphics[width=0.8\linewidth]{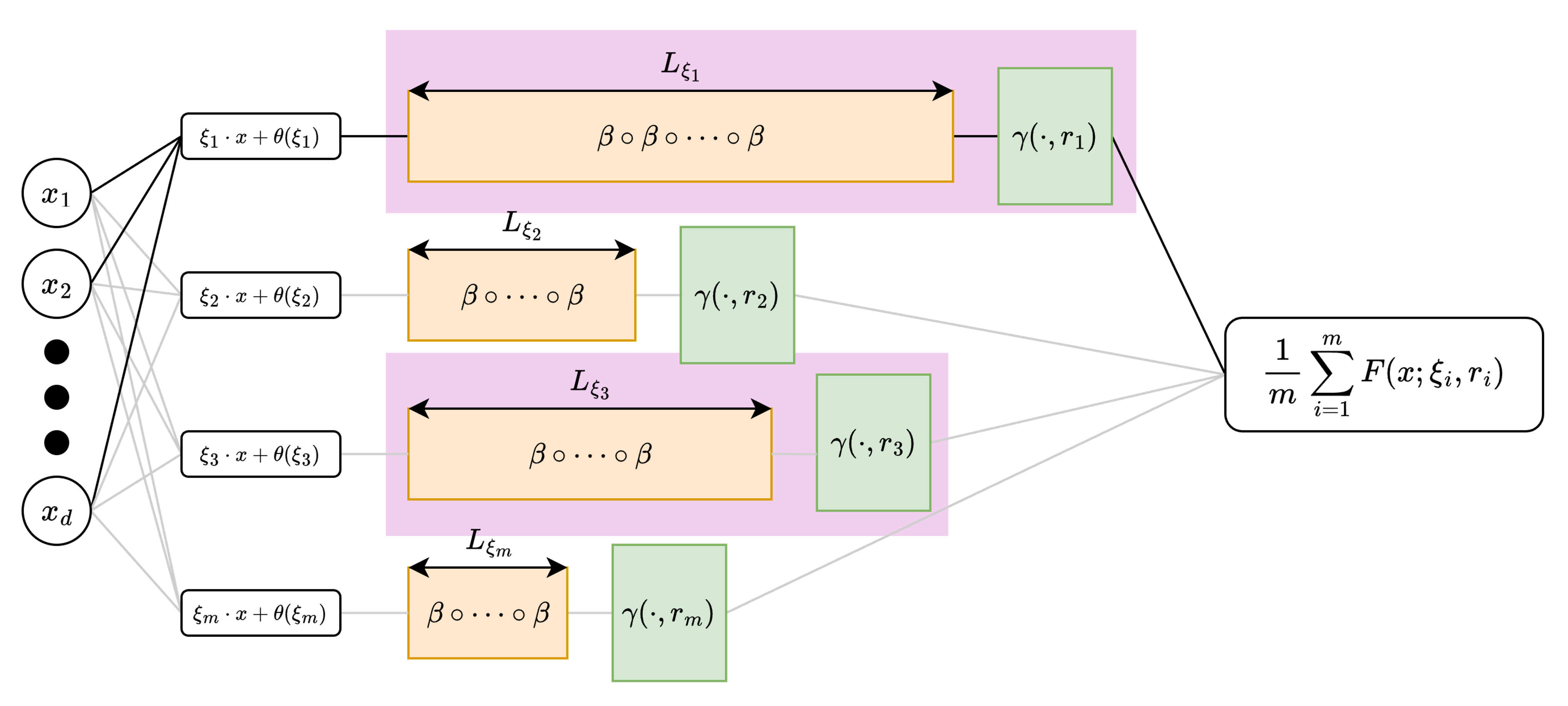}
    \includegraphics[width=0.8\linewidth]{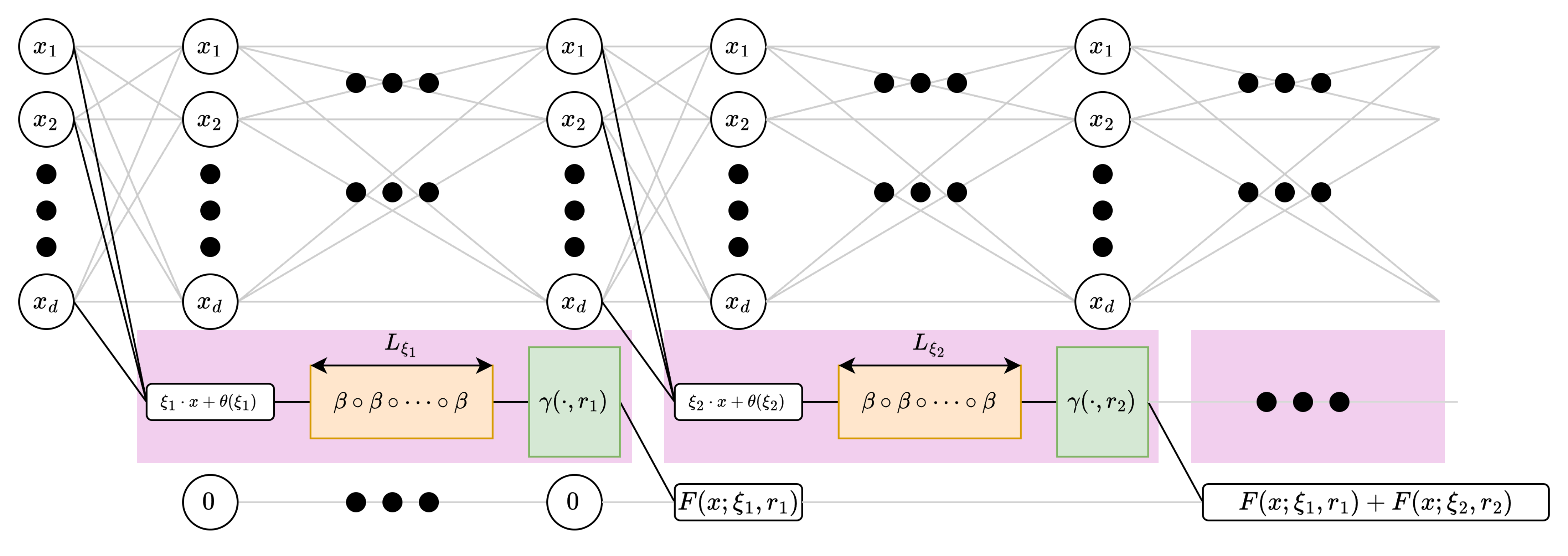}
    \caption{Network construction for Theorem~\ref{thm:L2_convergence_unit}(top) and Corollary~\ref{cor:L2_convergence_unit}(bottom). }
    \label{fig:construction_deep}
\end{figure}
    
\begin{remark}
    Although \cite{liao2025spectral} showed that an $(L,N)$-network achieves an approximation error rate of $N^{-sL}$ for $f\in \sB^s$, this result is valid only for $0<sL\le\frac{1}{2}$ with a fixed $L$, and the constant term depends on $\Norm{f}_{\sB^s}$. Therefore, the depth $L$ is constrained for a given $f\in \sB^s$, and the error remains sensitive to the decay rate of the Fourier amplitude of $f$. Additionally, $(L,N)$ networks require $O\left(N^2L\right)$ parameters to achieve an $N^{-\frac{1}{2}}$ error. In contrast, the network described in Corollary~\ref{cor:L2_convergence_unit} has $O\left(m\right)$ parameters, and the error is determined by the lower-regularity term $\Norm{f}_{\sB^0}$.
\end{remark}

For a general compact set $\Omega$, we can choose $c>0$ and $\mathbf{b}\in\bR^d$ so that $\Omega'=\frac{1}{c}\Omega+\mathbf{b}\subset\left[0,1\right]^d$. If $f\in \sB^{\log}$, we define $g\left(x\right)=f\left(c\left(x-\mathbf{b}\right)\right)$ and consider a network approximating $g$ on $\Omega'$. While this change of variables rescales the input of $f$, the Fourier transform of $g$ satisfies $\norm{g\left(\xi\right)} = c^{-d}\norm{\hat{f}\left(c^{-1}\xi\right)}$, and consequently $\|g\|_{\mathcal{B}^{0}} = \|f\|_{\mathcal{B}^{0}}$. On the other hand, we need the following computation to address the logarithmic factor in $\Norm{g}_{\sB^{\log}}$.
\begin{lemma}\label{lemma:submultiplicative_log}
    If $c\ge 0$ and $t\ge0$, then we have
    \begin{equation*}
        \log_2\left(2+ct\right) \le \log_2\left(2+c\right)\log_2\left(2+t\right).
    \end{equation*}
\end{lemma}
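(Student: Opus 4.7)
The plan is to reduce the inequality to a two-variable problem on $[1,\infty)^2$ via the monotone substitution $u = \log_2(2+c)$ and $v = \log_2(2+t)$, both of which lie in $[1,\infty)$ since $c,t \ge 0$. With $c = 2^u - 2$ and $t = 2^v - 2$, the claim is equivalent to
$$f(u,v) := 2^{uv} - (2^u - 2)(2^v - 2) - 2 \;\ge\; 0 \quad \text{on } [1,\infty)^2.$$

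I would then argue by a boundary-plus-monotonicity strategy in the variable $u$. First, check the edge $u = 1$: $f(1,v) = 2^v - 2 \ge 0$ trivially. Next, show that $\partial_u f \ge 0$ throughout $[1,\infty)^2$; a short computation (dividing by $\ln 2 \cdot 2^u > 0$) reduces this to $v \cdot 2^{u(v-1)} \ge 2^v - 2$. Since $v \ge 1$, the exponent $u(v-1)$ is nondecreasing in $u$, so the left side attains its minimum in $u$ at $u = 1$, and it suffices to verify $v \cdot 2^{v-1} \ge 2^v - 2$, equivalently $(2-v)\,2^{v-1} \le 2$. Together with $f(1,v) \ge 0$, this monotonicity gives $f(u,v) \ge f(1,v) \ge 0$.

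The only remaining work is this last one-variable inequality, which I regard as the main (though mild) obstacle. For $v \ge 2$ the left side is nonpositive, so the bound is immediate. For $v \in [1,2]$, substitute $w = 2 - v \in [0,1]$; the inequality then becomes $w \le 2^w$, which follows from $2^w \ge 1 \ge w$ on $[0,1]$. Unwinding the substitution yields the claimed inequality $\log_2(2 + ct) \le \log_2(2+c)\log_2(2+t)$.
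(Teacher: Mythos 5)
Your proof is correct, and it is essentially the paper's argument written in exponential coordinates: both exponentiate the claim to $2+ct \le (2+c)^{\log_2(2+t)}$ (your $f(u,v)\ge 0$ with $u=\log_2(2+c)$, $v=\log_2(2+t)$ is exactly the paper's $\phi(t)\ge 0$) and establish it by showing monotonicity in one variable together with a boundary check at $c=0$, i.e.\ $u=1$ (the paper differentiates in $t$ instead, which is symmetric). The only difference is internal bookkeeping: where the paper runs a nested monotonicity argument via the auxiliary function $\psi(c)$, you reduce the derivative inequality to the single elementary bound $(2-v)\,2^{v-1}\le 2$, which is a slightly cleaner finish.
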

\begin{proof}
    Note that the inequality is equivalent to
    \begin{equation*}
        2+ct \le \left(2+c\right)^{\log_2\left(2+t\right)}.
    \end{equation*}
    Let us define $\phi\left(t\right) = \left(2+c\right)^{\log_2\left(2+t\right)}-ct-2$ and consider $\phi'\left(t\right)$. Then we see that
    % \begin{align*}
    %     \phi'\left(t\right) &= \frac{\ln\left(2+c\right)}{\ln2}\left(2+c\right)^{\log_2\left(2+t\right)} -c\\
    %     &\ge \frac{\ln\left(2+c\right)}{\ln2}\left(2+c\log_2\left(2+t\right)\right)-c\\
    %     &\ge \frac{\ln\left(2+c\right)}{\ln2}\left(2+c\right) - c\\
    %     &\ge 2.
    % \end{align*}
    \begin{align*}
        \phi'\left(t\right) &= \frac{\ln\left(2+c\right)}{\left(2+t\right)\ln2}\left(2+c\right)^{\log_2\left(2+t\right)} -c\\
        &= \frac{\ln\left(2+c\right)}{\ln 2}\frac{1}{2+t}\left(2+t\right)^{\log_2\left(2+c\right)}-c\\
        &= \frac{\ln\left(2+c\right)}{\ln 2}\left(2+t\right)^{\log_2\left(2+c\right)-1}-c\\
        &\ge \frac{\ln\left(2+c\right)}{\ln 2}\cdot \frac{2+c}{2}-c.
    \end{align*}
    To show $\phi'\left(t\right)\ge0$ for any $c\ge0$, define $\psi\left(c\right)=\frac{\ln\left(2+c\right)}{\ln 2}\cdot \frac{2+c}{2}-c$ and consider $\psi'\left(c\right)$:
    \begin{equation*}
        \psi'\left(c\right) = \frac{1}{2\ln 2}+\frac{\ln\left(2+c\right)}{2\ln 2}-1\ge \frac{1}{2\ln 2}\cdot \ln\left(\frac{e\left(2+c\right)}{4}\right)> 0.
    \end{equation*}
    Therefore, $\phi'\left(t\right)\ge\psi\left(c\right)\ge\psi\left(0\right)=1$ and $\phi$ increases on $\left[0,\infty\right)$. This implies $\phi\left(t\right)\ge0$ and concludes the proof.
    % Therefore, $\phi$ is increasing on $\left[0,\infty\right)$ and $\phi\left(t\right)>0$ for all $t\ge0$.
\end{proof}

\begin{theorem}\label{thm:L2_convergence_general}
    Suppose that $f\in \sB^{\log}$. For any $m\in\bN$ and a compact set $\Omega\subset\bR^d$, there exists a $\relu$ network $F$ of width $d+4$ and depth $6C_1\left(\Omega\right)m\frac{\Norm{f}_{\sB^{\log}}}{\Norm{f}_{\sB^0}}$ such that
    \begin{equation*}
        \Norm{f-F}_{L^2\left(\Omega\right)} \le \frac{2\pi^2}{\sqrt{m}}\norm{\Omega}^{\frac{1}{2}}\Norm{f}_{\sB^0},
    \end{equation*}
    where $C_1\left(\Omega\right) = \log_2\left(2+\diam\left(\Omega\right)\right)$.
\end{theorem}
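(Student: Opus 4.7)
The plan is to reduce the general compact set case to the unit-cube case already established in Corollary~\ref{cor:L2_convergence_unit}, via an affine change of variables, and then to transport the resulting network back to $\Omega$. Concretely, I pick $c=\diam(\Omega)$ and $\mathbf{b}\in\bR^d$ so that $\Omega':=\frac{1}{c}\Omega+\mathbf{b}\subset[0,1]^d$, and define the rescaled target $g(x):=f\bigl(c(x-\mathbf{b})\bigr)$. A direct substitution in the Fourier integral gives $\hat g(\xi)=c^{-d}e^{-2\pi i\,\xi\cdot\mathbf{b}}\,\hat f(\xi/c)$, so changing variables $\eta=\xi/c$ yields $\Norm{g}_{\sB^0}=\Norm{f}_{\sB^0}$ and
\begin{equation*}
    \Norm{g}_{\sB^{\log}}=\int_{\bR^d}\log_2\bigl(2+c\etanorm_1\bigr)\,\bigl|\hat f(\eta)\bigr|\,\diff\eta.
\end{equation*}

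Next I invoke Lemma~\ref{lemma:submultiplicative_log} with the scalar $c$ and $t=\etanorm_1$, which gives the submultiplicative estimate $\log_2(2+c\etanorm_1)\le \log_2(2+c)\,\log_2(2+\etanorm_1)$. Integrating against $|\hat f(\eta)|\,\diff\eta$ produces $\Norm{g}_{\sB^{\log}}\le C_1(\Omega)\,\Norm{f}_{\sB^{\log}}$ with $C_1(\Omega)=\log_2(2+\diam(\Omega))$. Since $g\in\sB^{\log}$ and $\Omega'\subset[0,1]^d$, Corollary~\ref{cor:L2_convergence_unit} supplies a ReLU network $F'$ of width $d+4$ and depth at most $6m\,\Norm{g}_{\sB^{\log}}/\Norm{g}_{\sB^0}\le 6C_1(\Omega)m\,\Norm{f}_{\sB^{\log}}/\Norm{f}_{\sB^0}$ satisfying
\begin{equation*}
    \Norm{g-F'}_{L^2(\Omega')}\le \frac{2\pi^2}{\sqrt m}\,|\Omega'|^{1/2}\Norm{g}_{\sB^0}.
\end{equation*}

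I then define $F(y):=F'\bigl(y/c+\mathbf{b}\bigr)$; pre-composing a ReLU network with an affine map only alters the first weight matrix and bias, so $F$ is still a ReLU network of width $d+4$ with the same depth bound. Pulling the $L^2$ error back to $\Omega$ via $y=c(x-\mathbf{b})$ gives $\Norm{f-F}_{L^2(\Omega)}^2=c^d\Norm{g-F'}_{L^2(\Omega')}^2$, while $|\Omega|=c^d|\Omega'|$, and therefore the factors of $c$ cancel exactly to give
\begin{equation*}
    \Norm{f-F}_{L^2(\Omega)}\le \frac{2\pi^2}{\sqrt m}\,|\Omega|^{1/2}\Norm{f}_{\sB^0},
\end{equation*}
as claimed.

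There is no essential difficulty beyond bookkeeping: the main subtlety is to insist that the diameter-based rescaling enters only through the logarithmic factor $C_1(\Omega)$ in the depth and does \emph{not} degrade either the amplitude norm $\Norm{f}_{\sB^0}$ or the measure-factor $|\Omega|^{1/2}$ in the error. The first point is handled by the Fourier substitution (which preserves the unweighted $L^1$ norm of $\hat f$), and the second by the exact cancellation $c^{d/2}|\Omega'|^{1/2}=|\Omega|^{1/2}$. The only genuinely non-trivial ingredient is the submultiplicative logarithmic inequality of Lemma~\ref{lemma:submultiplicative_log}, which is what permits the dilation constant to be absorbed into a clean factor $C_1(\Omega)$ multiplying the depth rather than propagating through the error bound.
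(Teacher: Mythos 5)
Your proposal is correct and follows essentially the same route as the paper: rescale $\Omega$ into the unit cube, bound $\Norm{g}_{\sB^{\log}}\le \log_2(2+c)\Norm{f}_{\sB^{\log}}$ via Lemma~\ref{lemma:submultiplicative_log} while noting $\Norm{g}_{\sB^0}=\Norm{f}_{\sB^0}$, apply Corollary~\ref{cor:L2_convergence_unit}, and pull the network back by absorbing the affine map into the first layer so that the Jacobian factor $c^d$ cancels against $|\Omega|=c^d|\Omega'|$. (Indeed, your stated depth bound $6C_1(\Omega)m\Norm{f}_{\sB^{\log}}/\Norm{f}_{\sB^0}$ matches the theorem statement more cleanly than the paper's final displayed line.)
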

\begin{proof}
    Let $c=\diam\left(\Omega\right)$ and choose $\mathbf{b}\in\bR^d$ so that $\Omega'=\frac{1}{c}\Omega+\mathbf{b}\subset\left[0,1\right]^d$. By defining $g\left(x\right)=f\left(c\left(x-\mathbf{b}\right)\right)$, we have
    \begin{align*}
        \Norm{g}_{\sB^{\log}} &= \int_{\bR^d}\log_2\left(2+\norm{\xi}_1\right)\norm{\hat{g}\left(\xi\right)}\diff\xi \\
        &= \int_{\bR^d}\log_2\left(2+\norm{\xi}_1\right)\norm{c^{-d}}\norm{\hat{f}\left(c^{-1}\xi\right)}\diff\xi\\
        &= \int_{\bR^d}\log_2\left(2+ c\norm{\eta}_1\right)\norm{\hat{f}\left(\eta\right)}\diff\eta \\
        &\le \log_2\left(2+c\right)\int_{\bR^d}\log_2\left(2+\norm{\eta}_1\right)\norm{\hat{f}\left(\eta\right)}\diff\eta\\
        &= C_1\left(\Omega\right) \Norm{f}_{\sB^{\log}}.
    \end{align*}
    Since $\Omega'\subset\left[0,1\right]^d$ and $g\in \sB^{\log}$, from Corollary \ref{cor:L2_convergence_unit}, there exists a $\relu$ network $G$ of width $d+4$ and depth $6m\frac{\Norm{g}_{\sB^{\log}}}{\Norm{g}_{\sB^0}}$ such that
    \begin{equation*}
        \left\Vert g-G\right\Vert_{L^2\left(\Omega'\right)} \le \frac{2\pi^2}{\sqrt{m}}\norm{\Omega'}\Norm{g}_{\sB^0}.
    \end{equation*}
    As a composition of affine transforms is again an affine transform, adjusting the weight and bias in the first layer of $G$ yields a network $F\left(x\right)=G\left(c^{-1}x+\mathbf{b}\right)$, and we see that
    \begin{align*}
        \Norm{f - F}_{L^2\left(\Omega\right)}^2 &= \int_{\Omega} \left\vert f\left(x\right)-F\left(x\right)\right\vert^2\diff x\\
        &=\int_{\frac{1}{c}\Omega+\mathbf{b}} \left\vert f\left(c\left(y-\mathbf{b}\right)\right) - F\left(c\left(y-\mathbf{b}\right)\right)\right\vert^2 c^d\diff y\\
        &= c^d \int_{\Omega'}\left\vert g\left(y\right)-G\left(y\right)\right\vert^2 \diff y\\
        &= c^d \Norm{g-G}_{L^2\left(\Omega'\right)}^2\\
        &\le \frac{4\pi^4}{\sqrt{m}}c^d\norm{\Omega'} \Norm{g}_{\sB^0}^2\\
        &= \frac{4\pi^4}{\sqrt{m}}\norm{\Omega}\Norm{f}_{\sB^0}^2.
    \end{align*}
    Note here that the depth of $F$ is $6m\frac{\Norm{g}_{\sB^{\log}}}{\Norm{g}_{\sB^0}}\le 6cm\frac{\Norm{f}_{\sB^{\log}}}{\Norm{f}_{\sB^0}}$.
\end{proof}

\begin{remark}
    Given an error bound for a two-layer neural network, application of Lemma~\ref{lemma:wide_to_deep} yields the same order of error for a deep network. For example, if $f\in \sB^1$, the results of \cite{barron1994approximation} indicate that a network with width $d+2$ and depth $N$ achieves an $L^2$ error of order $N^{-\frac{1}{2}}$. However, the direct transformation from a wide to a deep network does not clarify the mechanisms underlying the improved effectiveness of deep networks. The present result demonstrates that a deep network mitigates the regularity condition on the target function, with the constant term in the error bound determined by $\Norm{f}_{\sB^0}$. Thus, the error depends solely on $\Norm{f}_{\sB^0}$, and the bound holds independently of the decay rate of the Fourier amplitude of $f$. The decay rate instead influences the required network depth.
\end{remark}

Assuming higher regularity for the target function $f$, the universal approximation theorem is known to hold in Sobolev spaces. In the next section, we introduce the high-order log-Barron space $\sB^{s,\log}$ and address the error bound with respect to the Sobolev norm $\Norm{\cdot}_{H^s}$.

\section{$H^1$ convergence}\label{sec:H1_convergence}
Extending the previous results to Sobolev approximation requires higher regularity of the target function $f$. In accordance with the classical Barron framework, this regularity is imposed by a polynomial decay rate on the Fourier amplitude. The \textit{log-Barron space} $\sB^{s,\log}$ for $s>0$ is defined as follows:
\begin{align*}
    \left\Vert f\right\Vert_{\sB^{s,\log}} &=  \int_{\bR^d}\left(1+\norm{\xi}_1^s\right)\log_2\left(2+\norm{\xi}_1\right)\left\vert\hat{f}\left(\xi\right)\right\vert\diff\xi,\\
    \sB^{s,\log} &= \left\{ f\in \sS(\bR^d): \Norm{f}_{\sB^{s,\log}}<\infty\right\}.
\end{align*}
An argument analogous to that for $\sB^{\log}$ establishes the completeness and embedding relations of $\sB^{s,\log}$ with the Sobolev spaces. The details are omitted as the proof is redundant. 

Since $\sB^{1,\log}$ requires a slightly faster amplitude decay than $\sB^1$, any $f\in \sB^{1,\log}$ can be approximated by a deep $\relu$ network with respect to the Sobolev norm $\Norm{\cdot}_{H^1\left(\Omega\right)}$. For $f\in \sB^{1,\log}$ and compact set $\Omega\subset\left[0,1\right]^d$, we deduce a Sobolev approximation using similar steps. Note that, as we can see from Table \ref{tab:summary}, this is a weaker regularity assumption than the one used in \cite{siegel2020approximation}.

\begin{theorem}\label{thm:H1_convergence_unit}
    Suppose $f\in \sB^{1,\log}$. For any $m\in\bN$ and a compact set $\Omega\subset\left[0,1\right]^d$, there exist ReLU networks $F_1,\ldots,F_m$ of width 3 and depth $L_i$ such that
    \begin{equation*}
        \left\Vert f-\frac{1}{m}\sum_{i=1}^m F_i\right\Vert_{H^1\left(\Omega\right)}^2 \le \frac{11\pi^4}{m}\norm{\Omega}\Norm{f}_{\sB^1}^2
    \end{equation*}
    and
    \begin{equation*}
        \sum_{i=1}^mL_i \le 5m\frac{\Norm{f}_{\sB^{1,\log}}}{\Norm{f}_{\sB^1}}.
    \end{equation*}
\end{theorem}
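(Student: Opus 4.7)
The plan is to mirror the proof of Theorem~\ref{thm:L2_convergence_unit}, reweighting the probability measure so that the extra factor of $|\xi|$ produced by differentiation is absorbed into $\Norm{f}_{\sB^1}$. Starting from the Fourier representation $f(x)=\int_{\bR^d}|\hat f(\xi)|\cos(2\pi n_\xi t_\xi(x))\,\diff\xi$ and applying Lemma~\ref{lemma:cos_2pint} exactly as before, I introduce
\[
\diff\mu(\xi,r)=\frac{1+\norm{\xi}_1}{\Norm{f}_{\sB^1}}\mathbf{1}_{[-1/2,1/2]}(r)\,\bigl|\hat f(\xi)\bigr|\,\diff\xi\,\diff r,\qquad F(x;\xi,r)=-\frac{2\pi^{2}\Norm{f}_{\sB^1}}{1+\norm{\xi}_1}\cos(2\pi r)\,\gamma_{,n_\xi}\!\bigl(t_\xi(x),r\bigr),
\]
so that $f(x)=\bE_\mu[F(x;\xi,r)]$. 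The weight $1/(1+\norm{\xi}_1)$ inside $F$ and the matching factor $(1+\norm{\xi}_1)$ inside $\diff\mu$ form the crucial bookkeeping device that trades one power of $\norm{\xi}_1$ between the integrand and the measure.

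The $L^2$ half then proceeds exactly as in Theorem~\ref{thm:L2_convergence_unit}: since $|\gamma_{,n_\xi}|\le 1/2$ one has $|F|\le\pi^2\Norm{f}_{\sB^1}$, and the sample-mean variance calculation gives $\bE_{\mu^m}\!\bigl[\Norm{f-\bar F}_{L^2(\Omega)}^2\bigr]\le\pi^4|\Omega|\Norm{f}_{\sB^1}^2/m$. The new ingredient is to control $\nabla_x F$ uniformly in $\xi$. Because $\gamma(\cdot,r)$ is piecewise linear with slopes in $\{0,\pm 1\}$, its weak derivative satisfies $|\partial_t\gamma_{,n_\xi}(\cdot,r)|\le n_\xi$; combined with $\nabla_x t_\xi(x)=\xi/n_\xi$ and the $1/(1+\norm{\xi}_1)$ prefactor in $F$, the chain rule yields $\Norm{\nabla_x F(x;\xi,r)}\le 2\pi^2\Norm{f}_{\sB^1}\norm{\xi}/(1+\norm{\xi}_1)\le 2\pi^2\Norm{f}_{\sB^1}$, a bound independent of $\xi$. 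Applying the same variance argument to the weak gradient of the sample mean gives $\bE_{\mu^m}\!\bigl[\Norm{\nabla(f-\bar F)}_{L^2(\Omega)}^2\bigr]\le 4\pi^4|\Omega|\Norm{f}_{\sB^1}^2/m$, so the total $H^1$-squared error is at most $5\pi^4|\Omega|\Norm{f}_{\sB^1}^2/m$ in expectation.

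For the depth accounting, each sub-network realizing $F(\,\cdot\,;\xi_i,r_i)$ has depth $L_{\xi_i}+1$ with $L_{\xi_i}=\lceil\log_2(2+\norm{\xi_i}_1)\rceil$, by the $\beta$-composition construction of Lemma~\ref{lemma:beta_composition} used in Theorem~\ref{thm:L2_convergence_unit}. Under the reweighted measure,
\[
\bE_\mu[L_\xi]\le 2\bE_\mu\!\bigl[\log_2(2+\norm{\xi}_1)\bigr]=\frac{2}{\Norm{f}_{\sB^1}}\int_{\bR^d}(1+\norm{\xi}_1)\log_2(2+\norm{\xi}_1)\bigl|\hat f(\xi)\bigr|\,\diff\xi=\frac{2\Norm{f}_{\sB^{1,\log}}}{\Norm{f}_{\sB^1}}.
\]
A two-sided Markov inequality, identical in spirit to that of Theorem~\ref{thm:L2_convergence_unit}, with tuning parameters $\eps_1=1/5$ (so that $5(2+\eps_1)=11$) and $\eps_2=1/2$ (so that $2(2+\eps_2)=5$), yields respective probabilities $6/11$ and $3/5$; since $6/11+3/5>1$, a single realization $\{(\xi_i,r_i)\}_{i=1}^m$ satisfies both the target $H^1$ bound and $\sum_i L_i\le 5m\Norm{f}_{\sB^{1,\log}}/\Norm{f}_{\sB^1}$ simultaneously.

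The only real obstacle compared with the $L^2$ argument is this uniform gradient control, which is what dictates the precise weighting of $\diff\mu$ and $F$. It depends on two structural facts: the weak differentiability of $\gamma(\cdot,r)$ with Lipschitz constant one — already flagged as a deliberate design choice in the remark after Theorem~\ref{thm:L2_convergence_unit} — and the cancellation between $\nabla_x t_\xi$ and the reciprocal weight inside $F$. Everything else (width-$3$ representation of each $\gamma_{,n_\xi}$, the componentwise Monte-Carlo variance computation, and the final joint Markov step) carries over verbatim from the $L^2$ case with $\Norm{f}_{\sB^0}$ uniformly replaced by $\Norm{f}_{\sB^1}$.
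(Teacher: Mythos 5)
Your proposal is correct and follows essentially the same route as the paper's proof: the same reweighted measure $\diff\mu$ and integrand $F$ with the $(1+\norm{\xi}_1)^{\pm 1}$ bookkeeping, the same componentwise variance bound on the weak gradient using $D_j\bigl(\gamma_{,n_\xi}(t_\xi(x),r)\bigr)=\xi^{(j)}D\gamma$, the same depth expectation $\bE_\mu[L_\xi]\le 2\Norm{f}_{\sB^{1,\log}}/\Norm{f}_{\sB^1}$, and the identical Markov parameters $\eps_1=1/5$, $\eps_2=1/2$ yielding the constants $11$ and $5$. No gaps.
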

\begin{proof}
    Let $\mu$ and $F$ be defined similarly to the proof of Theorem~\ref{thm:L2_convergence_unit}:
    \begin{align*}
        \diff\mu\left(\xi,r\right) &= \frac{1}{\Norm{f}_{\sB^1}}\mathbf{1}_{\left[-\frac{1}{2},\frac{1}{2}\right]}\left(r\right)\left(1+\norm{\xi}_1\right) \norm{\hat{f}\left(\xi\right)} \diff\xi \diff r,\\
        F\left(x;\xi,r\right) &= -2\pi^2\Norm{f}_{\sB^1}\left(1+\norm{\xi}_1\right)^{-1}\cos\left(2\pi r\right)\gamma_{,n_{\xi}}\left(t_{\xi}\left(x\right),r\right).
    \end{align*}
    For i.i.d. samples $\left\{\left(\xi_i,r_i\right)\right\}_{i=1}^m$, $\mu^m$ refers the distribution and $\bar{F}$ is the sample mean of $F\left(x;\xi_i,r_i\right)$. We use the Sobolev norm to define a random variable X,
    \begin{equation*}
        X = \left\Vert f-\bar{F}\right\Vert_{H^1\left(\Omega\right)}^2 = \left\Vert f-\bar{F}\right\Vert_{L^2\left(\Omega\right)}^2 + \sum_{j=1}^d\left\Vert D_jf - D_j\bar{F}\right\Vert_{L^2\left(\Omega\right)}^2.
    \end{equation*}
    For each $\xi=\left(\xi^{(1)},\ldots,\xi^{(d)}\right)\in\bR^d$, $r\in\bR$, and $j=1,\ldots,d$, we have
    \begin{align}
        D_j F\left(x;\xi,r\right) &= -2\pi^2\Norm{f}_{\sB^1}\left(1+\norm{\xi}_1\right)^{-1}\cos\left(2\pi r\right)D_j\left(\gamma_{,n_{\xi}}\left(t_{\xi},r\right)\right) \nonumber\\
        &= -2\pi^2\Norm{f}_{\sB^1}\left(1+\norm{\xi}_1\right)^{-1}\cos\left(2\pi r\right) D_j\left(\gamma\left(\xi\cdot x+\theta\left(\xi\right) \mod 1, r\right) \right) \nonumber \\
        &= -2\pi^2\Norm{f}_{\sB^1}\left(1+\norm{\xi}_1\right)^{-1}\cos\left(2\pi r\right) \xi^{(j)} D\gamma \left(\xi\cdot x+\theta\left(\xi\right) \mod 1, r\right). \label{eqn:DjF} 
    \end{align}
    Hence, we can bound $D_jF\left(x;\xi,r\right)$ for $x\in\Omega$ by
    \begin{equation*}
        \left\vert D_jF\left(x;\xi,r\right)\right\vert \le 2\pi^2\Norm{f}_{\sB^1}\left(1+\norm{\xi}_1\right)^{-1}\norm{\xi^{(j)}}.
    \end{equation*}
    Next, let us consider the i.i.d. samples $\left\{\left(\xi_i,r_i\right)\right\}_{i=1}^m$ from $\mu^m$. Then we may write
    \begin{equation*}
        \bE_{\mu^m}\left[X\right] = \bE_{\mu^m}\left[\left\Vert f-\bar{F}\right\Vert_{L^2\left(\Omega\right)}^2\right] + \bE_{\mu^m}\bigg[\sum_{j=1}^d \left\Vert D_jf-D_j\bar{F}\right\Vert_{L^2\left(\Omega\right)}^2\bigg].
    \end{equation*}
    We can bound the first term following similar steps to the previous theorem:
    \begin{align*}
        \bE_{\mu^m}\left[\left\Vert f-\bar{F}\right\Vert_{L^2\left(\Omega\right)}^2\right] &\le \frac{1}{m}\int_{\Omega}\bE_{\mu}\left[ F\left(x;\xi,r\right)^2\right]  \diff x\\
        &\le \frac{1}{m}\int_{\Omega} \sup_{\xi,r}\left\vert F\left(x;\xi,r\right)\right\vert^2 \diff x\\
        &\le \frac{\pi^4}{m}\norm{\Omega}\Norm{f}_{\sB^1}^2.
    \end{align*}
    For the other term, since $\bE_{\mu^m}\left[D_j\bar{F}\right] = D_j\bE_{\mu^m}\left[\bar{F}\right]$, we see that
    \begin{align*}
        \bE_{\mu^m}\bigg[\sum_{j=1}^m\left\Vert D_jf-D_j\bar{F}\right\Vert_{L^2\left(\Omega\right)}^2\bigg] &= \int_{\Omega} \sum_{j=1}^m \int \left\vert D_jf\left(x\right)-D_j\bar{F}\left(x\right)\right\vert^2 \diff \mu^m\left(\left\{\xi_i,r_i\right\}\right) \diff x \\
        &= \int_{\Omega} \sum_{j=1}^m \int \left\vert D_j\bE_{\mu^m}\left[\bar{F}\left(x\right)\right] - D_j\bar{F}\left(x\right)\right\vert^2 \diff \mu\left(\left\{\xi_i,r_i\right\}\right) \diff x\\
        &= \int_{\Omega}\sum_{j=1}^m \Var_{\mu^m}\left[ D_j\bar{F}\left(x\right)\right] \diff x\\
        &\le \int_{\Omega}\sum_{j=1}^m \frac{1}{m}\Var_{\mu}\left[D_j F\left(x;\xi,r\right)\right] \diff x \\
        &\le \frac{1}{m} \int_{\Omega}\sum_{j=1}^m \bE_{\mu}\left[\left\vert D_jF\left(x;\xi,r\right) \right\vert^2\right\vert \diff x\\
        &\le \frac{1}{m} \int_{\Omega}\sum_{j=1}^m 4\pi^4\Norm{f}_{\sB^1}^2 \bE_{\mu}\left[\left(1+\norm{\xi}_1\right)^{-2}\norm{\xi^{(j)}}^2\right] \diff x\\
        &= \frac{4\pi^4}{m}\Norm{f}_{\sB^1}\int_{\Omega}\sum_{j=1}^m \int_{\bR^d} \left(1+\norm{\xi}_1\right)^{-1}\norm{\xi^{(j)}}^2\norm{\hat{f}\left(\xi\right)} \diff \xi\\
        &\le \frac{4\pi^4}{m}\Norm{f}_{\sB^1}\int_{\Omega} \int_{\bR^d}\left(1+\norm{\xi}_1\right)^{-1}\norm{\xi}_1^2\norm{\hat{f}\left(\xi\right)} \diff \xi\\
        &\le \frac{4\pi^4}{m}\Norm{f}_{\sB^1}\int_{\Omega} \int_{\bR^d}\left(1+\norm{\xi}_1\right) \norm{\hat{f}\left(\xi\right)} \diff\xi\\
        &= \frac{4\pi^4}{m}\norm{\Omega}\Norm{f}_{\sB^1}^2.
    \end{align*}
    Therefore, we attain
    \begin{equation*}
        \bE_{\mu^m}\left[X\right] \le \frac{5\pi^2}{m}\norm{\Omega}\Norm{f}_{\sB^1}^2,
    \end{equation*}
    and Markov's inequality induces
    \begin{equation}\label{eqn:Markov_H1error}
        \bP_{\mu^m}\left[ X < \frac{5\left(2+\eps_1\right)\pi^4}{m}\norm{\Omega} \Norm{f}_{\sB^1}^2\right] \ge \bP_{\mu^m}\left[ X < \left(2+\eps_1\right)\bE_{\mu^m}\left[X\right]\right] \ge \frac{1+\eps_1}{2+\eps_1}.
    \end{equation}

    Note that $F\left(x;\xi_i,r_i\right)$ is implemented by the same network as we did in the previous theorem.
    Thus, the expected sum of depths $L=\sum_{i=1}^m L_{\xi_i}$ is estimated as
    \begin{align*}
        \bE_{\mu^m}\left[L\right] & \le 2m\bE_{\mu}\left[\log_2\left(2+\norm{\xi}_1\right)\right] \\
        &= \frac{2m}{\Norm{f}_{\sB^1}}\int_{\bR^d}\left(1+\norm{\xi}_1\right)\log_2\left(2+\norm{\xi}_1\right)\norm{\hat{f}\left(\xi\right)} \diff \xi\\
        &= \frac{2m}{\Norm{f}_{\sB^1}}\Norm{f}_{\sB^{1,\log}}.
    \end{align*}
    Markov's inequality implies that
    \begin{equation}\label{eqn:Markov_H1depth}
        \bP_{\mu^m}\left[ L < \frac{2\left(2+\eps_2\right)m}{\Norm{f}_{\sB^1}}\norm{\Omega}\Norm{f}_{\sB^{1,\log}}\right] \ge \bP_{\mu^m}\left[L < \left(2+\eps_2\right)\bE_{\mu^m}\left[L\right]\right] = \frac{1+\eps_2}{2+\eps_2}.
    \end{equation}
    Both \eqref{eqn:Markov_H1error} and \eqref{eqn:Markov_H1depth} hold with a probability of at least 
    \begin{align*}
        & \bP_{\mu^m}\left[\left(X < \frac{5\left(2+\eps_1\right)\pi^4}{m}\norm{\Omega}\Norm{f}_{\sB^1}^2\right) \text{ and } \left(L < \frac{2\left(2+\eps_2\right)m}{\Norm{f}_{\sB^1}}\Norm{f}_{\sB^{1,\log}}\right)\right]\\
        &= \bP_{\mu^m}\left[X < \frac{5\left(2+\eps_1\right)\pi^4}{m}\norm{\Omega}\Norm{f}_{\sB^1}^2\right] + \bP_{\mu^m}\left[L < \frac{2\left(2+\eps_2\right)m}{\Norm{f}_{\sB^1}}\Norm{f}_{\sB^{1,\log}}\right]\\
        &\hspace{4mm} - \bP_{\mu^m}\left[\left(X < \frac{5\left(2+\eps_1\right)\pi^4}{m}\norm{\Omega}\Norm{f}_{\sB^1}^2\right) \text{ or } \left(L < \frac{2\left(2+\eps_2\right)m}{\Norm{f}_{\sB^1}}\Norm{f}_{\sB^{1,\log}}\right)\right]\\
        &\ge \frac{1+\eps_1}{2+\eps_1}+\frac{1+\eps_2}{2+\eps_2}-1\\
        &> 0.
    \end{align*}
    Since $\eps_1,\eps_2$ can be arbitrary, the choice $\eps_1=0.2$ and $\eps_2=0.5$ concludes the proof.
\end{proof}

\begin{remark}
    Note that $\sum_{j=1}^d \left\vert D_jF\right\vert$ is bounded by $\Norm{f}_{\sB^1}$ and $\norm{D\gamma}$ up to a multiplicative constant. Indeed, our proof strategy can be extended to more general functions $\gamma$ under two necessary conditions: The derivatives of $\gamma\left(\cdot,r\right)$ are uniformly bounded with respect to $r$, and $\gamma\left(\cdot,r\right)$ is exactly represented by a neural network. Under these assumptions, it is plausible that the present analysis can be generalized to higher-order Sobolev approximation for deep neural networks, provided that $\gamma\left(\cdot,r\right)\in H^n$ and a decomposition lemma analogous to Lemma~\ref{lemma:cos_2pint} is available. Moreover, exact network representations of $\gamma$ suggest the possibility of deriving approximation error bounds in stronger norms, such as $L^\infty(\Omega)$ or $W^{n,\infty}(\Omega)$, beyond the $L^2$ and $H^1$ considered in this work. In this paper, we focus on $\relu$ networks and leave these extensions for future research.
\end{remark}

Combining the networks $F_i$ in Theorem \ref{thm:H1_convergence_unit} into a single deep network produces a Sobolev approximation corresponding to Corollary~\ref{cor:L2_convergence_unit}.
\begin{corollary}\label{cor:H1_convergence_unit}
    Suppose $d\in\bN$ and $f\in \sB^{1,\log}$. For a compact set $\Omega\subset\left[0,1\right]^d$ and any $m\in\bN$, there exists a ReLU network $F$ of width $d+4$ and depth $6m \frac{\Norm{f}_{\sB^{1,\log}}}{\Norm{f}_{\sB^1}}$ such that
    \begin{equation*}
        \left\Vert f - F\right\Vert_{H^1\left(\Omega\right)} \le \frac{4\pi^2}{\sqrt{m}}\norm{\Omega}^{\frac{1}{2}}\Norm{f}_{\sB^1}.
    \end{equation*}
\end{corollary}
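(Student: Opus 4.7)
The plan is to treat this corollary as a direct consequence of Theorem~\ref{thm:H1_convergence_unit} combined with the network-merging Lemma~\ref{lemma:wide_to_deep}, in close analogy with how Corollary~\ref{cor:L2_convergence_unit} was deduced from Theorem~\ref{thm:L2_convergence_unit}. First, I would invoke Theorem~\ref{thm:H1_convergence_unit} to produce ReLU sub-networks $F_1,\ldots,F_m$ of width $3$ and individual depths $L_1,\ldots,L_m$ whose uniform average satisfies
\[
\bigg\Vert f-\frac{1}{m}\sum_{i=1}^m F_i\bigg\Vert_{H^1(\Omega)}^2 \le \frac{11\pi^4}{m}\norm{\Omega}\Norm{f}_{\sB^1}^2, \qquad \sum_{i=1}^m L_i \le 5m\frac{\Norm{f}_{\sB^{1,\log}}}{\Norm{f}_{\sB^1}}.
\]

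Next, I would apply Lemma~\ref{lemma:wide_to_deep} with $N=\max_i N_i = 3$ to glue these sub-networks into a single ReLU network $F$ satisfying $F(x)=\frac{1}{m}\sum_{i=1}^m F_i(x)$ pointwise on $\Omega$. The lemma immediately yields width $d+N+1 = d+4$ and depth $\sum_i L_i$, which is bounded by $5m\,\Norm{f}_{\sB^{1,\log}}/\Norm{f}_{\sB^1} \le 6m\,\Norm{f}_{\sB^{1,\log}}/\Norm{f}_{\sB^1}$, matching the claimed architecture. The only point that deserves a brief word is that passing from an $L^2$ statement to an $H^1$ statement through a merging construction requires the weak derivatives to agree; but because $F$ and $\frac{1}{m}\sum_i F_i$ are continuous piecewise linear functions that coincide pointwise on $\Omega$, their weak gradients agree almost everywhere on $\Omega$, so the $H^1$ bound transfers verbatim from the ensemble to $F$.

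Finally, taking square roots in the $H^1$ bound and using $\sqrt{11} < 4$ gives
\[
\Norm{f-F}_{H^1(\Omega)} \le \pi^2\sqrt{\tfrac{11}{m}}\,\norm{\Omega}^{1/2}\Norm{f}_{\sB^1} \le \frac{4\pi^2}{\sqrt{m}}\norm{\Omega}^{1/2}\Norm{f}_{\sB^1},
\]
which is the stated conclusion. There is no genuine obstacle here beyond verifying that the merging step preserves the derivative information; the content is really carried by Theorem~\ref{thm:H1_convergence_unit}, where the probabilistic construction, the $H^1$ variance estimate, and the depth control via $\Norm{f}_{\sB^{1,\log}}$ all take place.
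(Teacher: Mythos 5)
Your proposal is correct and follows essentially the same route as the paper, which likewise obtains this corollary by feeding the sub-networks from Theorem~\ref{thm:H1_convergence_unit} into the merging construction of Lemma~\ref{lemma:wide_to_deep} and taking square roots (with $\sqrt{11}<4$). Your extra remark that the pointwise identity $F=\frac{1}{m}\sum_i F_i$ on $\Omega$ forces the weak gradients to agree, so the $H^1$ bound transfers, is a detail the paper leaves implicit but is exactly the right justification.
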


In the general case, we can use scaling to define $g\left(x\right)=f\left(c\left(x-\mathbf{b}\right)\right)$ for $c\ge1$ and $\mathbf{b}\in\bR^d$ such that $\Omega'=\frac{1}{c}\Omega+\mathbf{b}\subset\left[0,1\right]^d$. As in the previous section, it is enough to approximate $g$ on $\Omega'$ using $\Norm{g}_{\sB^{1,\log}}$. In this case, we attain inequality $\Norm{f}_{\sB^1}\le \Norm{g}_{\sB^1} \le c\Norm{f}_{\sB^1}$ as follows:
\begin{align*}
    \Norm{g}_{\sB^1} &= \int_{\bR^d}\left(1+\norm{\xi}_1\right)\norm{\hat{g}\left(\xi\right)} \diff\xi \\
    &= \int_{\bR^d}\left(1+\norm{\xi}_1\right)c^{-d}\left\vert\hat{f}\left(c^{-1}\xi\right)\right\vert \diff\xi\\
    &= \int_{\bR^d}\left(1+c\norm{\eta}_1\right)\norm{\hat{f}\left(\eta\right)}\diff\eta. \quad \left(\eta=c^{-1}\xi\right)
\end{align*}
Using the Lemma~\ref{lemma:submultiplicative_log} to bound $\Norm{g}_{\sB^{1,\log}}$, we can approximate $g$ on $\Omega'$ and thereby $f$ on $\Omega$.

\begin{theorem}\label{thm:H1_convergence_general}
    Suppose $f\in \sB^{1,\log}$. For any $m\in\bN$ and a compact set $\Omega\subset\bR^d$, there exists a $\relu$ network $F$ of width $d+4$ and depth $6mC_2\log_2\left(2+C_2\right)\frac{\Norm{f}_{\sB^{1,\log}}}{\Norm{f}_{\sB^1}}$ such that
    \begin{equation*}
        \Norm{f-F}_{H^1\left(\Omega\right)}\le\frac{4\pi^2}{\sqrt{m}}C_2\norm{\Omega}^{\frac{1}{2}}\Norm{f}_{\sB^1},
    \end{equation*}
    where $C_2=C_2\left(\Omega\right)=\max\left\{1,\diam\left(\Omega\right)\right\}$.
\end{theorem}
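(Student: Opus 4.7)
The plan is to reduce to the unit-cube case of Corollary~\ref{cor:H1_convergence_unit} by rescaling, following the same strategy as in Theorem~\ref{thm:L2_convergence_general} but with extra care for the derivative part of the $H^1$ norm. Setting $c := C_2 = \max\{1,\diam(\Omega)\}$ and choosing $\mathbf{b}\in\bR^d$ so that $\Omega' := c^{-1}\Omega + \mathbf{b} \subset [0,1]^d$, I would define $g(x) := f(c(x-\mathbf{b}))$, apply Corollary~\ref{cor:H1_convergence_unit} to $g$ on $\Omega'$ to obtain a ReLU network $G$ of width $d+4$ and depth $6m\,\Norm{g}_{\sB^{1,\log}}/\Norm{g}_{\sB^1}$, and then take $F(x) := G(c^{-1}x + \mathbf{b})$. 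Since $F$ differs from $G$ only by an affine change of input, it is realized as a ReLU network of the same width and depth by folding the rescaling into the first linear layer.

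Next I would control the depth by bounding $\Norm{g}_{\sB^{1,\log}}/\Norm{g}_{\sB^1}$ in terms of the corresponding ratio for $f$. Using $\norm{\hat{g}(\xi)} = c^{-d}\norm{\hat{f}(\xi/c)}$ and the substitution $\eta = \xi/c$, the log-Barron numerator becomes $\int(1+c\norm{\eta}_1)\log_2(2+c\norm{\eta}_1)\norm{\hat{f}(\eta)}\,\diff\eta$. Splitting the logarithm by Lemma~\ref{lemma:submultiplicative_log} and using $1+c\norm{\eta}_1 \le c(1+\norm{\eta}_1)$ for $c \ge 1$ yields $\Norm{g}_{\sB^{1,\log}} \le c\log_2(2+c)\Norm{f}_{\sB^{1,\log}}$. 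For the denominator, the same substitution together with $c \ge 1$ already gives $\Norm{g}_{\sB^1} \ge \Norm{f}_{\sB^1}$ (as noted in the paragraph preceding the theorem), so the depth bound $6mC_2\log_2(2+C_2)\Norm{f}_{\sB^{1,\log}}/\Norm{f}_{\sB^1}$ follows.

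The main technical subtlety lies in the $H^1$ error estimate itself, because the derivative part and the $L^2$ part scale differently under the change of variables. A direct computation using the chain rule $D_j F(x) = c^{-1} D_j G(c^{-1}x + \mathbf{b})$ gives $\Norm{f - F}^2_{H^1(\Omega)} = c^d\Norm{g - G}^2_{L^2(\Omega')} + c^{d-2}\sum_{j=1}^d \Norm{D_j g - D_j G}^2_{L^2(\Omega')}$, where the two different powers $c^d$ and $c^{d-2}$ arise depending on whether the chain rule acts on the function or only on the measure. This is precisely where $c \ge 1$ becomes essential: it allows the estimate $c^{d-2} \le c^d$, collapsing the right-hand side into $c^d\Norm{g-G}^2_{H^1(\Omega')}$. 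Applying Corollary~\ref{cor:H1_convergence_unit}, using $\norm{\Omega'} = c^{-d}\norm{\Omega}$ to cancel the prefactor, and finally $\Norm{g}_{\sB^1} \le c\Norm{f}_{\sB^1}$ produces the claimed bound $\frac{4\pi^2}{\sqrt{m}}C_2\norm{\Omega}^{1/2}\Norm{f}_{\sB^1}$. The choice $C_2 = \max\{1,\diam(\Omega)\}$ rather than $\diam(\Omega)$ alone is therefore structurally necessary and not cosmetic: it both enables $\Omega' \subset [0,1]^d$ and simultaneously supplies the condition $c \ge 1$ required to dominate the derivative scaling factor $c^{d-2}$ by $c^d$.
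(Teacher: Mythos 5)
Your proposal is correct and follows essentially the same route as the paper's own proof: rescale to the unit cube via $g(x)=f(c(x-\mathbf{b}))$ with $c=\max\{1,\diam(\Omega)\}$, apply Corollary~\ref{cor:H1_convergence_unit}, fold the affine map into the first layer, bound the depth ratio using Lemma~\ref{lemma:submultiplicative_log} together with $1+c\norm{\eta}_1\le c(1+\norm{\eta}_1)$ and $\Norm{g}_{\sB^1}\ge\Norm{f}_{\sB^1}$, and control the differently scaled $c^{d}$ and $c^{d-2}$ terms in the $H^1$ norm by the condition $c\ge1$. Your observation that $c\ge1$ is needed both for $\Omega'\subset[0,1]^d$ and for dominating $c^{d-2}$ by $c^{d}$ is exactly the role it plays in the paper's argument.
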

\begin{proof}
    Let $c=\max\left\{1,\diam\left(\Omega\right)\right\}$ and choose $\mathbf{b}\in\bR^d$ so that $\Omega'=\frac{1}{c}\Omega+\mathbf{b}\subset\left[0,1\right]^d$. Define $g\left(x\right) = f\left(c\left(x-\mathbf{b}\right)\right)$ and apply Corollary \ref{cor:H1_convergence_unit} to find a network $G$ of width $d+4$ and depth $6\frac{\Norm{g}_{\sB^{1,\log}}}{\Norm{g}_{\sB^1}}$ such that
    \begin{equation*}
        \Norm{g-G}_{H^1\left(\Omega'\right)}\le \frac{4\pi^2}{\sqrt{m}}\norm{\Omega'}^{\frac{1}{2}}\Norm{g}_{\sB^1}.
    \end{equation*}
    Note that a function $F\left(x\right)\coloneqq G\left(c^{-1}x+\mathbf{b}\right)$ defined on $\Omega$ is a network of the same architecture of $g$. Therefore, we have
    \begin{align*}
        \Norm{f-F}_{L^2\left(\Omega\right)}^2 &= \int_{\Omega}\left\vert f\left(x\right)-F\left(x\right)\right\vert^2 \diff x\\
        &= \int_{\Omega}\left\vert g\left(c^{-1}x+\mathbf{b}\right)-G\left(c^{-1}x+\mathbf{b}\right)\right\vert^2 \diff x\\
        &= \int_{\Omega'} \left\vert g\left(y\right)-G\left(y\right)\right\vert^2 c^d\diff y \\
        &= c^d \Norm{g-G}_{L^2\left(\Omega'\right)}^2,\\
    \end{align*}
    and
    \begin{align*}
        \Norm{D_jf-D_jF}_{L^2\left(\Omega\right)}^2 &= \int_{\Omega}\left\vert D_jf\left(x\right)-D_jF\left(x\right)\right\vert^2\diff x\\
        &= \int_{\Omega}c^{-2}\left\vert D_jg\left(c^{-1}x+\mathbf{b}\right)-D_jG\left(c^{-1}x+\mathbf{b}\right)\right\vert^2 \diff x\\
        &= c^{-2}\int_{\Omega'}\left\vert D_jg\left(y\right) - D_jG\left(y\right)\right\vert^2 c^d\diff y\\
        &= c^{d-2} \Norm{D_jg-D_jG}_{L^2\left(\Omega'\right)}^2.
    \end{align*}
    Then we have
    \begin{align*}
        \Norm{f-F}_{H^1\left(\Omega\right)}^2 &= \Norm{f-F}_{L^2\left(\Omega\right)}^2 + \sum_{j=1}^d \Norm{D_jf-D_jF}_{L^2\left(\Omega\right)}^2\\
        &= c^d \Norm{g-G}_{L^2\left(\Omega'\right)}^2 + c^{d-2}\sum_{j=1}^d\Norm{D_jg-D_jG}_{L^2\left(\Omega'\right)}^2\\
        &\le c^d\Norm{g-G}_{H^1\left(\Omega'\right)}^2 \\
        &\le c^d \frac{16\pi^4}{m}\norm{\Omega'}\Norm{g}_{\sB^1}\\
        &\le c\frac{16\pi^4}{m}\norm{\Omega}\Norm{f}_{\sB^1}.
    \end{align*}
    Finally, the depth of $F$ is bounded by
    \begin{align*}
        6m\frac{\Norm{g}_{\sB^{1,\log}}}{\Norm{g}_{\sB^1}} &= 6m\frac{\int_{\bR^d}\left(1+c\norm{\eta}_1\right)\log_2\left(2+c\norm{\eta}_1\right)\norm{\hat{f}\left(\eta\right)}\diff \eta}{\int_{\bR^d}\left(1+c\norm{\eta}_1\right)\norm{\hat{f}\left(\eta\right)} \diff\eta}\\
        &\le 6m c\log_2\left(2+c\right)\frac{\int_{\bR^d}\left(1+\norm{\eta}_1\right)\log_2\left(2+\norm{\eta}_1\right)\norm{\hat{f}\left(\eta\right)}\diff \eta}{\int_{\bR^d}\left(1+\norm{\eta}_1\right)\norm{\hat{f}\left(\eta\right)} \diff\eta}\\
        &= 6mc\log_2\left(2+c\right) \frac{\Norm{f}_{\sB^{1,\log}}}{\Norm{f}_{\sB^1}}.
    \end{align*}
\end{proof}

\section{Conclusion}\label{sec:conclusion}
This study introduces the log-Barron space and establishes dimension-independent approximation error bounds for target functions in this space using deep $\relu$ networks with bounded width. We analyze the functional-analytic properties of the proposed space, demonstrating that it forms a Banach space and clarifying its embedding relations with classical Sobolev spaces. We also provide a generalization bound by obtaining an upper bound on the corresponding Rademacher complexity. Our results extend the classical Barron framework from $\sB^s$ to $\sB^{\log}$, resulting in improved upper bounds with respect to both regularity and the required number of parameters. 

A notable contribution of this study is that dimension-independent approximation error is analyzed explicitly as a function of network depth. While existing Barron-type results have achieved dimension-independent rates by increasing width, our results provide the quantitative approximation error bound under a fixed-width architecture, where accuracy improves as depth increases. The proposed construction shows that the approximation error depends only on the zero-order Barron norm $\Norm{f}_{\sB^0}$, whereas the required network depth is controlled by the log-Barron norm $\Norm{f}_{\sB^{\log}}$. As a result, the decay rate of the Fourier spectrum influences only the required depth, while the error bound itself depends on the total spectral amplitude. This theoretical result provides justification for employing deeper architectures when approximating functions with high-frequency features.

Beyond the error bound, the embedding analysis of the log-Barron space also enables us to quantify the complexity of the associated function class. In particular, we compute the Rademacher complexity, which is commonly used to control generalization error. This result complements the approximation analysis by characterizing the size of the hypothesis class associated with the proposed function space.

From a structural view, the ensemble-based construction allows multiple sub-networks to be combined into a single deep narrow network, thereby establishing a precise connection between width-based and depth-based approximation strategies. In contrast to classical Barron-type theories, where efficiency is obtained by widening the network, our results demonstrate that increasing depth alone can serve as an effective mechanism for achieving dimension-independent approximation.

We conclude by discussing several limitations and directions for future research. In the present construction, the first $d$ neurons in each hidden layer are used to preserve input information. Since the approximation error is measured in $L^2$ or $H^1$ norms, it may be possible to further reduce the required width by incorporating encoding schemes proposed in \cite{Park_2021}. A systematic study of minimal width conditions in Barron-type spaces would enhance understanding of the interplay between network architecture and function regularity. Moreover, following \cite{liao2025spectral}, the composition of $\beta$ functions is exploited to represent high-frequency functions. Although each layer that computes $\beta$ is not differentiable, the overall composition $F$ is smooth. Consequently, an ensemble of $F$ can enable high-order approximation using smooth activation functions such as sigmoid or $\relu^k$. Finally, from the perspective of numerical analysis, a faster convergence rate in the $L^2$ sense is expected when the target function exhibits higher regularity. Investigating improved error bounds for functions in $\sB^s$ or $\sB^{s,\log}$ represents a promising direction for future research.

\bibliography{references}

@article{cybenko1989approximation,
  title={Approximation by superpositions of a sigmoidal function},
  author={Cybenko, George},
  journal={Mathematics of control, signals and systems},
  volume={2},
  number={4},
  pages={303--314},
  year={1989},
  publisher={Springer}
}

@article{hornik1991approximation,
  title={Approximation capabilities of multilayer feedforward networks},
  author={Hornik, Kurt},
  journal={Neural networks},
  volume={4},
  number={2},
  pages={251--257},
  year={1991},
  publisher={Elsevier}
}

@article{barron1994approximation,
  title={Approximation and estimation bounds for artificial neural networks},
  author={Barron, Andrew R},
  journal={Machine learning},
  volume={14},
  number={1},
  pages={115--133},
  year={1994},
  publisher={Springer}
}

@article{bartlett2002rademacher,
  title={Rademacher and gaussian complexities: Risk bounds and structural results},
  author={Bartlett, Peter L and Mendelson, Shahar},
  journal={Journal of machine learning research},
  volume={3},
  number={Nov},
  pages={463--482},
  year={2002}
}

@article{gripenberg2003approximation,
  title={Approximation by neural networks with a bounded number of nodes at each level},
  author={Gripenberg, Gustaf},
  journal={Journal of approximation theory},
  volume={122},
  number={2},
  pages={260--266},
  year={2003},
  publisher={Elsevier}
}

@inproceedings{telgarsky2016benefits,
  title={Benefits of depth in neural networks},
  author={Telgarsky, Matus},
  booktitle={Conference on learning theory},
  pages={1517--1539},
  year={2016},
  organization={PMLR}
}

@article{yarotsky2017error,
  title={Error bounds for approximations with deep ReLU networks},
  author={Yarotsky, Dmitry},
  journal={Neural networks},
  volume={94},
  pages={103--114},
  year={2017},
  publisher={Elsevier}
}

@article{lu2017expressive,
  title={The expressive power of neural networks: A view from the width},
  author={Lu, Zhou and Pu, Hongming and Wang, Feicheng and Hu, Zhiqiang and Wang, Liwei},
  journal={Advances in neural information processing systems},
  volume={30},
  year={2017}
}

@article{poggio2017and,
  title={Why and when can deep-but not shallow-networks avoid the curse of dimensionality: a review},
  author={Poggio, Tomaso and Mhaskar, Hrushikesh and Rosasco, Lorenzo and Miranda, Brando and Liao, Qianli},
  journal={International Journal of Automation and Computing},
  volume={14},
  number={5},
  pages={503--519},
  year={2017},
  publisher={Springer}
}

@book{mohri2018foundations,
  title={Foundations of machine learning},
  author={Mohri, Mehryar and Rostamizadeh, Afshin and Talwalkar, Ameet},
  year={2018},
  publisher={MIT press}
}

@article{siegel2020approximation,
  title={Approximation rates for neural networks with general activation functions},
  author={Siegel, Jonathan W and Xu, Jinchao},
  journal={Neural Networks},
  volume={128},
  pages={313--321},
  year={2020},
  publisher={Elsevier}
}

@inproceedings{Kidger_2020,
  title={Universal approximation with deep narrow networks},
  author={Kidger, Patrick and Lyons, Terry},
  booktitle={Conference on learning theory},
  pages={2306--2327},
  year={2020},
  organization={PMLR}
}

@inproceedings{Park_2021,
title={Minimum Width for Universal Approximation},
author={Sejun Park and Chulhee Yun and Jaeho Lee and Jinwoo Shin},
booktitle={International Conference on Learning Representations},
year={2021},
}

@inproceedings{Lu2021,
 author = {Chen, Ziang and Lu, Jianfeng and Lu, Yulong},
 booktitle = {Advances in Neural Information Processing Systems},
 pages = {6454--6465},
 title = {On the Representation of Solutions to Elliptic PDEs in Barron Spaces},
 volume = {34},
 year = {2021}
}

@article{achour2022general,
  title={A general approximation lower bound in $\textit{L}^{p} $ norm, with applications to feed-forward neural networks},
  author={Achour, El Mehdi and Foucault, Armand and Gerchinovitz, S{\'e}bastien and Malgouyres, Fran{\c{c}}ois},
  journal={Advances in Neural Information Processing Systems},
  volume={35},
  pages={22396--22408},
  year={2022}
}

@article{ma2022barron,
  title={The Barron space and the flow-induced function spaces for neural network models},
  author={Ma, Chao and Wu, Lei and E, Weinan},
  journal={Constructive Approximation},
  volume={55},
  number={1},
  pages={369--406},
  year={2022},
  publisher={Springer}
}

@article{meng2022new,
  title={A new function space from Barron class and application to neural network approximation},
  author={Meng, Yan and Ming, Pingbing},
  journal={Communications in Computational Physics},
  volume={32},
  number={5},
  pages={1361--1400},
  year={2022}
}

@article{siegel2022high,
  title={High-order approximation rates for shallow neural networks with cosine and ReLUk activation functions},
  author={Siegel, Jonathan W and Xu, Jinchao},
  journal={Applied and Computational Harmonic Analysis},
  volume={58},
  pages={1--26},
  year={2022},
}

@article{wojtowytsch2022representation,
  title={Representation formulas and pointwise properties for Barron functions},
  author={Wojtowytsch, Stephan and E. Weinan},
  journal={Calculus of Variations and Partial Differential Equations},
  volume={61},
  number={2},
  pages={1--37},
  year={2022},
}

@book{SLT,
    AUTHOR = {Wainwright, Martin J.},
     TITLE = {High-dimensional statistics},
    SERIES = {Cambridge Series in Statistical and Probabilistic Mathematics},
    VOLUME = {48},
      NOTE = {A non-asymptotic viewpoint},
 PUBLISHER = {Cambridge University Press, Cambridge},
      YEAR = {2019},
     PAGES = {xvii+552},
   MRCLASS = {62-01 (60B20 60E15 60Fxx 62Gxx 62Hxx 62Jxx)},
  MRNUMBER = {3967104},
MRREVIEWER = {Pierre Alquier},
}

@article{Lu2023,
title = {A Regularity Theory for Static Schrödinger Equations on $\mathbb{R}^d$ in Spectral Barron Spaces},
author = {Chen, Ziang and Lu, Jianfeng and Lu, Yulong and Zhou, Shengxuan},
journal = {SIAM Journal on Mathematical Analysis},
volume = {55},
number = {1},
pages = {557-570},
year = {2023},
}

@article{siegel2024sharp,
  title={Sharp bounds on the approximation rates, metric entropy, and n-widths of shallow neural networks},
  author={Siegel, Jonathan W and Xu, Jinchao},
  journal={Foundations of Computational Mathematics},
  volume={24},
  number={2},
  pages={481--537},
  year={2024},
  publisher={Springer}
}

@article{li2024two,
  title={Two-layer networks with the ReLU k activation function: Barron spaces and derivative approximation},
  author={Li, Yuanyuan and Lu, Shuai and Math{\'e}, Peter and Pereverzev, Sergei V},
  journal={Numerische Mathematik},
  volume={156},
  number={1},
  pages={319--344},
  year={2024},
  publisher={Springer}
}

@article{liao2025spectral,
  title={Spectral Barron space for deep neural network approximation},
  author={Liao, Yulei and Ming, Pingbing},
  journal={SIAM Journal on Mathematics of Data Science},
  volume={7},
  number={3},
  pages={1053--1076},
  year={2025},
  publisher={SIAM}
}

@article{pinkus1999approximation,
  title={Approximation theory of the MLP model in neural networks},
  author={Pinkus, Allan},
  journal={Acta numerica},
  volume={8},
  pages={143--195},
  year={1999},
  publisher={Cambridge University Press}
}

@article{chen2024neural,
  title={Neural hilbert ladders: Multi-layer neural networks in function space},
  author={Chen, Zhengdao},
  journal={Journal of Machine Learning Research},
  volume={25},
  number={109},
  pages={1--65},
  year={2024}
}
\bibliographystyle{abbrv}

\end{document}